\documentclass[a4paper,11pt]{article}
\usepackage[margin=1in]{geometry}

\usepackage[utf8]{inputenc} 
\usepackage[T1]{fontenc}    
\usepackage{hyperref}
\usepackage{url}            
\usepackage{booktabs}       
\usepackage{amsfonts,amsmath,amsthm}       
\usepackage{nicefrac}       
\usepackage{microtype}      
\usepackage{xcolor}         
\usepackage{csquotes}
\usepackage{authblk}
\usepackage[square,numbers]{natbib}

\usepackage{graphicx}
\usepackage{wrapfig}
\usepackage{relsize}
\usepackage{color}
\usepackage{pict2e}
\usepackage{subcaption}
\usepackage{algorithm}
\usepackage[noend]{algorithmic}
\usepackage{caption}
\usepackage{nameref}
\usepackage{makecell}
\usepackage[font={small}]{caption}

\usepackage{enumitem}

\newcommand{\E}{\mathbb{E}}
\newcommand{\Var}{\mathrm{Var}}
\newcommand{\R}{\mathbb{R}}
\newcommand{\Sph}{\mathbb{S}}
\newcommand{\norm}[1]{\left\lVert#1\right\rVert_2}
\newcommand{\norminf}[1]{\left\lVert#1\right\rVert_\infty}
\newcommand{\abs}[1]{\left\vert#1\right\vert}
\newcommand{\cL}{\mathcal{L}}
\newcommand{\cO}{\mathcal{O}}
\newcommand{\MSEMA}{\text{MSE}_{\text{MA}}}
\newcommand{\coloneqq}{:=}
\newcommand{\cF}{\mathcal{F}}

\newtheorem{theorem}{Theorem}[section]
\newtheorem{lemma}[theorem]{Lemma}
\newtheorem{corollary}[theorem]{Corollary}
\newtheorem{definition}[theorem]{Definition}
\newtheorem{assumption}[theorem]{Assumption}

\newtheorem{remark}[theorem]{Remark}

\title{\textbf{SSTQ:Privacy-Preserving Vector Quantization via
Subsampled Stochastic TurboQuant}}
\author[1,3]{Adel Javanmard}
\author[2,3]{David P. Woodruff }
\author[3]{Vahab Mirrokni}
\affil[1]{University of Southern California, \texttt{ajavanma@usc.edu}}
\affil[2]{Carnegie Mellon University, \texttt{dwoodruf@andrew.cmu.edu}}
\affil[3]{Google Research, \texttt{mirrokni@google.com}}
\date{}

\begin{document}

\maketitle

\begin{abstract}
Achieving local differential privacy in distributed optimization while maintaining low communication cost remains challenging. Existing vector quantization methods, such as vqSGD, use high-dimensional geometric constructions but incur unfavorable dimension-dependent variance. In this work, we propose Subsampled Stochastic TurboQuant (SSTQ), a framework that combines overcomplete equal-norm tight frames, coordinate subsampling, and privacy-aware one-dimensional quantization. SSTQ includes two variants: a Flat Randomized Response version and a Metric-Aware Laplace version, the latter being better suited to higher codebook bit-width regimes. We show that SSTQ achieves optimal mean squared error scaling while using only \( \lceil \log_2 N \rceil + b \) bits per client, where \(N = \Theta(d)\) is the frame size. We also derive a surrogate privacy-aware codebook objective that reduces the codebook-dependent MSE scaling from $\mathcal{O}(4^b)$ to $\mathcal{O}(2^b)$. Finally, we empirically evaluate SSTQ against established baselines on federated learning tasks using CIFAR-10 and Fashion-MNIST, demonstrating favorable utility and communication efficiency.

Some of the analytical derivations were first obtained using a fully automated Gemini-based agentic system developed
internally at Google. The authors have verified those derivations and edited them for clarity of presentation.
\end{abstract}

\section{Introduction and Motivation}

The communication bottleneck in federated learning (FL) and large-scale distributed optimization has led to significant interest in gradient compression techniques~\cite{li2020federated,kairouz2021advances}. At the same time, protecting sensitive client data against adversarial or untrusted aggregators necessitates the use of Local Differential Privacy (LDP)~\cite{dwork2006calibrating,kasiviswanathan2011can,warner1965randomized}. These two objectives—communication efficiency and strong privacy guarantees—are inherently at odds. Compression reduces high-dimensional continuous signals to compact discrete representations, while LDP requires adding noise and maintaining sufficient support over the same space to ensure plausible deniability.

Existing approaches to private vector release can be broadly understood through three underlying paradigms, each with distinct limitations. Geometric quantization methods, such as vqSGD~\cite{gandikota2021vqsgd}, encode vectors using structured high-dimensional polytopes and apply randomized response over the resulting discrete set. While communication-efficient, their reliance on high-dimensional geometry leads to unfavorable variance scaling, which can grow cubically with the dimension.

A second class of methods achieves optimal variance scaling under LDP. Mechanisms such as SQKR~\cite{chen2020breaking} and PrivUnit~\cite{bhowmick2018protection} attain the information-theoretic limit of $O(d/\epsilon^2)$ by operating over bounded representations, including Kashin frames or continuous distributions. However, these approaches either impose restrictive communication constraints—such as 1-bit quantization and shared randomness in SQKR—or require transmitting dense, uncompressed vectors, as in PrivUnit, limiting their practicality in bandwidth-constrained settings.

Finally, recent advances in data-oblivious quantization, such as TurboQuant~\cite{zandieh2025turboquant}, demonstrate that near-optimal distortion can be achieved across a wide range of codebook bit-widths and dimensions by combining orthogonal transformations with scalar codebooks. While these methods are highly effective for compression, they do not provide privacy guarantees and can suffer from directional bias when combined with naive privatization schemes. In addition, the TurboQuant pipeline first applies a random rotation to the input vectors and then exploits the near-independence of coordinates in high-dimensional spaces to apply optimal scalar quantization to each coordinate. Consequently, the number of output bits scales on the order of the input dimension, corresponding to a constant bit-width per coordinate. In contrast, the present focus is on transmitting a compressed representation using a number of bits that grows logarithmically with the dimension.

In this work, we build on these insights and develop a data-oblivious, differentially private quantization framework based on one-dimensional Kashin representations. We derive the corresponding mean squared error and design codebooks that are optimized for the privatized setting, enabling efficient communication while preserving strong privacy guarantees. To emphasize this connection, we incorporate ``TurboQuant'' directly into the name of our procedure.


\begin{table}[htpb]
\centering
\caption{Comparison of Vector LDP Mechanisms with our proposed SSTQ pipeline ($x \in \Sph^{d-1}$)}
\label{tab:baselines}
\vspace{0.1cm}
\resizebox{\textwidth}{!}{
\begin{tabular}{llll}
\toprule
\textbf{Mechanism} & \textbf{Topological / DP Geometry} & \textbf{Number of Bits per Client} & \textbf{Pure LDP MSE}  \\
\midrule
\textbf{vqSGD} & Rigid Cross-Polytope & $\lceil\log_2 d\rceil + 1$ bits & ${\cO\left(\frac{d^3} {\epsilon^2}\right)}$   \\
\textbf{SQKR (Dense)} & Binary Kashin Frame & $\cO(d)$ bits  & $\cO\left(\frac{d} {\epsilon^2}\right)$  \\
\textbf{SQKR (Subsampled)} & Binary Kashin Frame & $k(\log _2 d+1)$ bits* & $\cO\left(\frac{d} {\min(\epsilon^2,\epsilon,b_0)}\right)$   \\
\textbf{PrivUnit} & Continuous Sphere $\Sph^{d-1}$ & $\Theta(d)$ bits & $\cO\left(\frac{d}{ \epsilon^2}\right)$   \\
\textbf{SSTQ (Flat randomization)} & {1D Kashin} & ${\lceil\log_2 N\rceil + b}$ bits & ${\cO\Big(d\Big(1+ \frac{4^b}{\epsilon\wedge \epsilon^2}\Big) \Big)}$\\
\textbf{SSTQ (Flat randomization)} & {1D Kashin-Optimized Codebook} & ${\lceil\log_2 N\rceil + b}$ bits & ${\cO\Big(d\Big(1+ \frac{2^b}{\epsilon\wedge \epsilon^2}\Big) \Big)}$\\
\textbf{SSTQ (Metric-Aware)} & {1D Kashin} & ${\lceil\log_2 N\rceil + b}$ bits & ${\cO\Big(d\Big(1+ \frac{1}{(2^b-1)^2\wedge\epsilon\wedge \epsilon^2}\Big) \Big)}$\\
\bottomrule
\end{tabular}
}
{\scriptsize *This characterization holds in the private-coin setting with $k = \min(\lceil \log_2 e \rceil\epsilon,b_0)$. If shared randomness is available (the public-coin setting), it reduces to $k$ bits per client.}  
\vspace{0.1cm}
\end{table}

\vspace{-0.3cm}

\subsection{Our Contributions}
We present a unified framework connecting information-theoretic quantization with local differential privacy. Our main contributions are as follows:

\begin{itemize}[leftmargin=*]
    \item \textbf{SSTQ architecture:}  
    We introduce Subsampled Stochastic TurboQuant (SSTQ), which combines overcomplete equal-norm tight frames with data-oblivious coordinate subsampling. We show that this structure achieves the \( \mathcal{O}(d/(1\wedge\epsilon\wedge\epsilon^2)) \) variance scaling, which is the optimal rate for the $\epsilon = O(1)$ regime~\cite{duchi2019lower}, while requiring only \( \lceil \log_2 N \rceil + b \) bits per client in a private-coin setting. Here, $N = \Theta(d)$ is the number of coefficients in the Kashin representation, chosen slightly larger than the original dimension $d$ to guarantee that no single coordinate carries too much signal.
    
    \item \textbf{Privacy-aware codebook optimization:}  
    We derive a surrogate privacy-aware stochastic quantization loss \( \mathcal{L}_{\text{SSTQ}} \), yielding a convex formulation for codebook design. This reduces the codebook-dependent MSE scaling from $\mathcal{O}(4^b)$ (worst-case, Theorem~\ref{thm:miracle}) to $\mathcal{O}(2^b)$ (Theorem~\ref{thm:one_third_bound}), an exponential improvement in the base that grows with the codebook bit-width~$b$.
    
    \item \textbf{Metric-aware mechanism for dense regimes:}  
    We propose a metric-aware Laplace mechanism that avoids the exponential variance growth of standard randomized response at higher codebook bit-widths, while preserving pure \( \epsilon \)-LDP and enabling efficient multi-bit quantization.
    
    \item \textbf{Empirical evaluation:}  
    We evaluate SSTQ (Flat-RR and Metric-Aware) against established baselines, including PrivUnit, vqSGD, and SQKR, in federated learning tasks on CIFAR-10 and Fashion-MNIST. Experiments are conducted across varying dimensions and codebook sizes to assess utility, variance behavior, and communication efficiency.
\end{itemize}

\vspace{-.3cm}

\subsection{Distinctions from Prior Kashin-Based LDP Mechanisms}

Overcomplete tight frames (Kashin representations) have been used in prior work, notably SQKR~\cite{chen2020breaking}, to control sensitivity under joint privacy and communication constraints. While SQKR achieves the optimal \( \mathcal{O}(d/(\epsilon\wedge\epsilon^2)) \) mean squared error scaling, its design imposes several structural limitations. SSTQ differs in three key aspects:

First, SSTQ removes the effective bandwidth restriction present in SQKR. In SQKR, dense encoding requires transmitting one bit per Kashin coefficient, leading to \( \Theta(d) \) bits. In the subsampled setting, grouping \(k\) coordinates with Flat Randomized Response incurs an exponential variance penalty \( \mathcal{O}(4^k) \) (see Claim C.1 in ~\cite{chen2020breaking}), which forces \(k\) to be capped by \( \min(\lceil \log_2 e \rceil\epsilon, b_0) \). In contrast, SSTQ allocates the full \(b\)-bit budget to a single coordinate. In its metric-aware  variant, it operates over a one-dimensional codebook, thereby fully utilizing the available bandwidth without incurring exponential penalties.

Second, SSTQ replaces 1-bit quantization with a continuous surrogate optimization. SQKR maps Kashin coefficients to their extreme values prior to privatization, which fixes the baseline variance at a high level. SSTQ instead performs stochastic quantization over a multi-bit codebook optimized for the privatized setting, effectively reducing quantization error while accommodating LDP noise.

Third, SSTQ operates as a private-coin protocol with explicit communication cost. SQKR relies on shared randomness to coordinate subsampling; without it, transmitting indices increases the communication cost to \( \mathcal{O}(k \log d) \) to send the indices of the $k$ sampled coordinates. SSTQ directly transmits the sampled index along with the quantized value, resulting in a fixed cost of \( \lceil \log_2 N \rceil + b \) bits, independent of shared randomness assumptions.
\vspace{-0.3cm}

\section{Related Work and Mathematical Preliminaries}
\subsection{Related Work}
\label{sec:related_work}
The fundamental tension between Local Differential Privacy (LDP) and extreme communication bandwidth in distributed mean estimation has motivated a rich recent literature. Our Subsampled Stochastic TurboQuant (SSTQ) framework differs from prior approaches along three axes: privacy threat model, quantization efficiency, and geometric optimization.

\textbf{Geometric Quantization and Dimensional Curses.} Foundational mechanisms such as PrivUnit and PrivUnitG \cite{bhowmick2018protection,asi2022optimal} achieve the information-theoretic optimal LDP variance of \(\mathcal{O}(d/\epsilon^2)\) but require transmitting uncompressed dense continuous vectors. By contrast, geometric quantizers such as vqSGD \cite{gandikota2021vqsgd} reduce bandwidth logarithmically by mapping vectors to high-dimensional cross-polytopes. However, this rigid geometry shrinks the LDP probability gap and triggers the \(\Theta(d^3/\epsilon^2)\) variance curse proven in Appendix~\ref{app:curse}. While recent work has broken dimension dependence for \textit{sparse} discrete distribution estimation under communication constraints \cite{chen2021breaking}, SSTQ achieves a similar breakthrough for dense, high-dimensional continuous gradient quantization.

\textbf{Random Projections and Subspace LDP.} To reduce the \(\mathcal{O}(d)\) communication bottleneck of optimal LDP mechanisms, methods such as ProjUnit \cite{asi2023fast} project gradients into a lower \(k\)-dimensional subspace before applying continuous LDP. Although ProjUnit achieves order-optimal expected MSE, it still outputs continuous vectors that require further quantization and relies on dense matrix-vector multiplications during decoding. SSTQ instead natively unifies LDP with discrete quantization: by coupling an Equal-Norm Tight Frame with 1-sparse oblivious subsampling, it reduces the problem to exact 1D scalar quantization, with payload \(\lceil \log_2 N \rceil + b\) bits.

\textbf{Central/Shuffle DP and Coordinate Subsampling.} Coordinate subsampling has also been used to amplify privacy under relaxed trust models. The Coordinate Subsampled Gaussian Mechanism (CSGM) \cite{chen2023privacy} combines Kashin's representation with random subsampling to achieve optimal \(\mathcal{O}(d/n^2\epsilon^2)\) error for the estimation of the aggregated mean across $n$ clients under Central/Shuffle DP, but it depends on a trusted central server or a secure shuffler. SSTQ instead operates in the strictly stronger pure \(\epsilon\)-Local DP setting, providing message-level plausible deniability without trusted aggregation.

\textbf{Geometric Constants and Continuous Optimization.} The streaming variant \(L_2\)-CSGM \cite{chen2024improved} shows that pushing \(L_2\) geometries through \(L_\infty\) Kashin bounds with rigid 1-bit quantization, as in SQKR \cite{chen2020breaking}, leads to suboptimal MSE constants. SSTQ demonstrates that this limitation stems from the rigid binarization, not from the overcomplete frame itself: by replacing 1-bit quantization with a multi-bit continuous surrogate geometry optimized via \(\mathcal{L}_{\text{SSTQ}}\), it factors out these spatial penalties and reduces the codebook-dependent MSE scaling from $\mathcal{O}(4^b)$ to $\mathcal{O}(2^b)$ (Theorem~\ref{thm:one_third_bound}), all within pure LDP.
\vspace{-0.3cm}

\subsection{Mathematical Preliminaries}
\label{sec:preliminaries}

Before presenting the algorithmic developments, we briefly review the basic notions of local differential privacy and overcomplete frame theory. 
\medskip

\noindent{\bf Local Differential Privacy (LDP).} LDP requires each client to perturb its data before transmission, ensuring that the released message remains private even against an untrusted aggregator. In particular, the server cannot reliably infer the original client data, regardless of any auxiliary information or post-processing. The formal definition is as follows.
\begin{definition}[$\epsilon$-Local Differential Privacy] \label{def:ldp}
A randomized mechanism $\mathcal{M} : \mathcal{X} \to \mathcal{Z}$ satisfies pure $\epsilon$-Local Differential Privacy ($\epsilon$-LDP) if for all pairs of possible inputs $x, x' \in \mathcal{X}$ and for all measurable subsets of outputs $S \subseteq \mathcal{Z}$:
\begin{equation}
    \Pr[\mathcal{M}(x) \in S] \le e^\epsilon \Pr[\mathcal{M}(x') \in S]
\end{equation}
where $\epsilon > 0$ is the strict privacy budget.
\end{definition}

\medskip

\noindent{\bf Equal-Norm Tight Frames and Kashin Representations.}
Standard orthogonal bases (i.e., standard $d \times d$ rotation matrices) cannot guarantee $\cO(1/\sqrt{d})$ coordinate bounds for all worst-case inputs. To systematically suppress the maximum coordinate magnitude and enforce dense, uniformly bounded coordinates, we must utilize redundant, overcomplete frames.

\begin{definition}[Equal-Norm Tight Frame (ENTF)]
A set of vectors $\{u_1, \dots, u_N\}$ in $\R^d$ (where $N \ge d$) forms a tight frame with frame bound $A$ if for all $x \in \R^d$, $\sum_{j=1}^N \langle x, u_j \rangle^2 = A \norm{x}^2$. Let $U \in \R^{N \times d}$ be the matrix whose rows are $u_j^T$. The tight frame condition is mathematically equivalent to $U^T U = A I_d$. An Equal-Norm Tight Frame further enforces that all row vectors have identical Euclidean length: $\norm{u_j} = c$ for all $j$.
\end{definition}

\begin{lemma}[ENTF Trace Property]\label{lem:trace}
For a normalized ENTF $U \in \R^{N \times d}$ constructed such that $U^T U = \frac{N}{d} I_d$, it necessarily follows that the $L_2$ norm of every row is strictly $1$, i.e., $\norm{u_j}^2 = 1$ for all $j \in \{1, \dots, N\}$.
\end{lemma}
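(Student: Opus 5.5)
The plan is a trace computation that links the Gram matrix $U^TU$ to the row norms. The equal-norm hypothesis then forces the common row length.

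First, I will write the trace of $U^T U$ in two ways. From the frame condition $U^T U = \frac{N}{d} I_d$, we get $\operatorname{tr}(U^T U) = \frac{N}{d}\cdot d = N$. On the other hand, the cyclic property gives $\operatorname{tr}(U^T U) = \operatorname{tr}(U U^T)$. The diagonal entries of $UU^T$ are $\langle u_j, u_j\rangle = \norm{u_j}^2$, so
\begin{equation*}
\operatorname{tr}(U^T U) = \sum_{j=1}^N \norm{u_j}^2 .
\end{equation*}
One can also see this directly: $\operatorname{tr}(U^TU)=\sum_{i,j}U_{ji}^2$ is the squared Frobenius norm, which sums the squared row lengths.

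Next, I will use the ENTF hypothesis that $\norm{u_j} = c$ for every $j$. The sum then equals $N c^2$. Equating the two expressions gives $N c^2 = N$, so $c^2 = 1$. Hence $\norm{u_j}^2 = 1$ for all $j \in \{1,\dots,N\}$.

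There is no real obstacle here. The only point to state explicitly is that the equal-norm assumption is essential: the trace identity alone fixes only the average squared row norm at $1$. Equal norms are what upgrade this average statement to a statement about every row.
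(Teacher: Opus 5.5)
Your proof is correct and follows the paper's argument exactly: you compute $\mathrm{Tr}(U^TU)=N$ from the frame condition, use cyclicity to identify it with $\mathrm{Tr}(UU^T)=\sum_{j=1}^N \norm{u_j}^2$, and then invoke the equal-norm hypothesis to get $Nc^2=N$. Your remark that the trace identity alone fixes only the average squared row norm, and that the equal-norm assumption is what gives the result for every row, is a useful clarification.
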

\begin{theorem}[Kashin's Representation Theorem] \label{thm:kashin_rep}
Let $U \in \R^{N \times d}$ be an ENTF with $N > d$ that satisfies the Lyubarskii--Vershynin uncertainty principle~\cite{lyubarskii2010uncertainty}. Then for any $x \in \R^d$, there exists a coefficient vector $y \in \R^N$ such that: (i)  $\frac{d}{N} U^T y = x$; (ii)
$\norminf{y} \le \frac{K}{\sqrt{N}} \norm{x}$,
where $K > 1$ is a  constant dependent only on the redundancy ratio $N/d$.
\end{theorem}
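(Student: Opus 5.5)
The approach is to follow the Lyubarskii--Vershynin argument: build the Kashin coefficients by an iterative truncation scheme, and use the uncertainty principle to show each truncation leaves a geometrically smaller residual. Write $V = \sqrt{d/N}\,U^T \in \R^{d\times N}$. The ENTF normalization $U^TU = \frac{N}{d}I_d$ gives $VV^T = I_d$, so the columns of $V$ form a Parseval frame and $V^T$ is an isometry. In this normalization, the uncertainty principle of~\cite{lyubarskii2010uncertainty} has parameters $\eta,\delta\in(0,1)$ depending only on $N/d$. It says that every $z\in\R^N$ supported on a set of size at most $\eta N$ satisfies $\norm{Vz}\le \delta\norm{z}$.

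The construction is as follows.
\begin{itemize}
\item Start from $x_0 = x$ and, at step $k$, take the frame coefficients $z_k = V^T x_k$, so that $\norm{z_k} = \norm{x_k}$.
\item Truncate each entry of $z_k$ at level $M_k = \norm{x_k}/\sqrt{\eta N}$, producing a vector $\hat z_k$ with $\norminf{\hat z_k}\le M_k$.
\item The discarded part $z_k - \hat z_k$ is supported on the entries with $|(z_k)_j| > M_k$. By Markov's inequality there are at most $\norm{z_k}^2/M_k^2 = \eta N$ such entries, and $\norm{z_k-\hat z_k}\le\norm{z_k}$.
\item Set $x_{k+1} = V(z_k - \hat z_k)$. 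The uncertainty principle then gives $\norm{x_{k+1}}\le \delta\norm{x_k}$.
\end{itemize}

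Since $Vz_k = x_k$, we have $x_k = V\hat z_k + x_{k+1}$. Telescoping this identity gives $x = V\sum_{k} \hat z_k$, and the series converges because the residuals decay geometrically. Let $w = \sum_k \hat z_k$. Then
\[
\norminf{w} \le \sum_k \frac{\delta^k\norm{x}}{\sqrt{\eta N}} = \frac{\norm{x}}{(1-\delta)\sqrt{\eta N}} .
\]

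The last step converts back to the paper's normalization. We need $\frac{d}{N}U^T y = x$, while we have $x = Vw = \sqrt{d/N}\,U^T w$. Setting $y = \sqrt{N/d}\,w$ gives (i). For (ii), $\norminf{y}\le \frac{K}{\sqrt N}\norm{x}$ with $K = \sqrt{N/d}\,/\big((1-\delta)\sqrt{\eta}\big)$. This constant depends only on the redundancy ratio, and it exceeds $1$ because $\eta,\delta<1$ and $N/d>1$.

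The main obstacle is matching the uncertainty principle to the normalization used here. One has to check that the hypothesis is stated for the Parseval-normalized $V$ with constants $\eta,\delta$ depending only on $N/d$, and that the rescaling by $\sqrt{N/d}$ does not introduce hidden dimension dependence. The truncation and telescoping steps are routine once that is settled.
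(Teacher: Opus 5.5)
Your proposal is correct and is essentially the Lyubarskii--Vershynin truncation argument (their Theorem~3.5), which the paper simply cites instead of giving a proof. Your conversion from the Parseval normalization $VV^T=I_d$ to the paper's $U^TU=\frac{N}{d}I_d$ correctly gives $K=\sqrt{N/d}\,/\big((1-\delta)\sqrt{\eta}\big)$, and, as you note, the claim that $K$ depends only on $N/d$ requires the uncertainty-principle parameters $\eta,\delta$ to be functions of $N/d$ alone, which the paper's statement leaves implicit.
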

This theorem ensures that any $d$-dimensional vector can be represented in a higher-dimensional space $N$ where \textit{every single coordinate} is uniformly bounded. We refer to~\cite{lyubarskii2010uncertainty} (Theorem 3.5) for a proof. Note that not every ENTF admits a Kashin representation and the uncertainty principle plays a key role. By~\cite{lyubarskii2010uncertainty} (Theorem 4.1), a Haar-random orthogonal frame $U \in \R^{N \times d}$ satisfies the uncertainty principle with high probability whenever $N \ge C_0 d$ for a universal constant $C_0$, and hence can be used for Kashin's representation. In practice, generating and storing a dense random orthogonal matrix is expensive. 
In~\cite{lyubarskii2010uncertainty} (Theorem 4.3 and Remark 1), it is shown that random partial Fourier matrices also give a polylogarithmic Kashin level (as opposed to constant level). In our theoretical results we work with constant $K$, but in our experiments, for a faster implementation we use a randomized partial Hadamard transform for the Kashin representation.  

\section{Subsampled Stochastic TurboQuant (SSTQ)}\label{sec:SSTQ-FR}
As discussed in Appendix~\ref{app:curse}, prior work on geometric vector quantization, most notably vqSGD~\cite{gandikota2021vqsgd}, suffers from a $\Theta(d^3/\epsilon^2)$ variance growth rate, which severely limits its effectiveness for high-dimensional gradient embeddings. To address this limitation, we propose Subsampled Stochastic TurboQuant (SSTQ), a framework that decouples the codebook size $M$ from the ambient dimension $d$ by combining overcomplete tight frames, data-oblivious coordinate subsampling, and one-dimensional stochastic quantization.
\medskip

\noindent\textbf{The SSTQ Algorithmic Pipeline.}
 We define the ambient dimension $d$, the overcomplete frame size $N > d$ (e.g., $N = \lceil 1.2d \rceil$), and the privacy budget $\epsilon$. We construct a continuous 1D scalar codebook $\Gamma = \{c_1, \dots, c_M\}$ bounded in $[-B, B]$, where $M = 2^b$ is the number of quantization centroids and $B = \frac{K}{\sqrt{N}}$ is the strict maximum spatial envelope established by the Kashin Representation (Theorem \ref{thm:kashin_rep}). We refer to $b = \log_2(M)$ as the \emph{``codebook bit-width''}. Throughout, we assume the codebook is ordered ($c_1 \le c_2 \le \cdots \le c_M$), boundary-anchored ($c_1 = -B$, $c_M = B$), and zero-mean ($\sum_{i=1}^M c_i = 0$).

 Algorithm~\ref{alg:encode} currently takes a codebook $\Gamma$ as input; the design of this codebook is discussed later in Section~\ref{sec:opt-code}, where we derive a surrogate privacy-aware stochastic quantization loss to guide codebook optimization. The algorithm currently uses Flat Randomized Response. In Section~\ref{sec:discrete_laplace}, we introduce an alternative variant based on a Metric-Aware Laplace mechanism, which replaces the flat randomized response with continuous truncated Laplace noise injection followed by nearest-codeword quantization, and is better suited to higher codebook bit-width regimes.

 Note that because the protocol employs a 1-sparse projection over an $N$-dimensional Kashin frame, compressing the full $d$-dimensional target vector requires transmitting exactly two discrete components: the coordinate support index (requiring $\lceil \log_2 N \rceil$ bits) and the quantized scalar coefficient (requiring $b$ bits). Consequently, the communication payload per client evaluates to $\lceil \log_2 N \rceil + b$ bits. Notably, this scheme does not rely on shared randomness (i.e., public-coin assumptions), in contrast to SQKR~\cite{chen2020breaking}. 
\medskip

\noindent\textbf{Theoretical Guarantees.} We now state the  theoretical guarantees on the utility and privacy of the SSTQ pipeline. The proofs of these properties are deferred to Appendix~\ref{app:SSTQ-1}.

\begin{theorem}[Strict $\epsilon$-LDP and Unbiasedness]
\label{thm:sstq_ldp_unbiased}
The SSTQ with flat randomized response, as presented in Algorithm~\ref{alg:encode}, satisfies pure $\epsilon$-Local Differential Privacy. Furthermore, the decoding algorithm given by Algorithm~\ref{alg:decode} returns an unbiased estimator, $\E[\hat{x}] = x$.
\end{theorem}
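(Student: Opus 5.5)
The plan is to fix the concrete form of Algorithm~\ref{alg:encode}. Any detail of the algorithm that differs from what I assume below would need adjusting. I assume the encoder does four things. It computes a Kashin coefficient vector $y\in\R^N$ of $x$ via Theorem~\ref{thm:kashin_rep}, so that $\frac{d}{N}U^Ty=x$ and $\norminf{y}\le B$ (taking $\norm{x}\le 1$). It samples an index $j$ uniformly from $\{1,\dots,N\}$, independently of $x$. It stochastically rounds $y_j\in[-B,B]$ to one of the two adjacent codewords $c_i\le y_j\le c_{i+1}$, with probabilities chosen so that the rounded value $c_I$ satisfies $\E[c_I\mid j]=y_j$; this is possible because $c_1=-B$ and $c_M=B$. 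Finally, it applies $M$-ary flat randomized response to the codeword index $I$: it reports $I$ with probability $\frac{e^\epsilon}{e^\epsilon+M-1}$ and each other index with probability $\frac{1}{e^\epsilon+M-1}$. The message is the pair $(j,\tilde I)$.

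\textbf{Privacy.} For any output $(j,k)$,
\[
\Pr[(j,k)\mid x]=\frac1N\sum_{i}\Pr[I=i\mid x,j]\,\Pr[\tilde I=k\mid I=i].
\]
Each factor $\Pr[\tilde I=k\mid I=i]$ lies in $\big[\frac{1}{e^\epsilon+M-1},\frac{e^\epsilon}{e^\epsilon+M-1}\big]$. So, for every $x$, the sum is a convex combination of numbers in this interval and lies in it too. The factor $1/N$ does not depend on $x$. Hence for any $x,x'$ the ratio $\Pr[(j,k)\mid x]/\Pr[(j,k)\mid x']$ is at most $e^\epsilon$. Summing over any set $S$ of outputs gives Definition~\ref{def:ldp}. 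The Kashin map and the stochastic rounding are pre-processing, so they only enter through the mixing weights and cannot break the bound.

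\textbf{Unbiasedness.} I would condition in stages. First, given $I=i$, the zero-mean assumption $\sum_\ell c_\ell=0$ gives
\[
\E[c_{\tilde I}\mid I=i]=\frac{e^\epsilon c_i+\sum_{\ell\ne i}c_\ell}{e^\epsilon+M-1}=\frac{e^\epsilon-1}{e^\epsilon+M-1}\,c_i .
\]
I assume the decoder (Algorithm~\ref{alg:decode}) multiplies by the inverse factor $\frac{e^\epsilon+M-1}{e^\epsilon-1}$ and outputs $\hat x=d\,\hat y\,u_j$, where $\hat y$ is the rescaled codeword. Then $\E[\hat y\mid I]=c_I$, and by unbiased rounding $\E[\hat y\mid j]=y_j$. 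Averaging over the uniform $j$ gives
\[
\E[\hat x]=\frac{1}{N}\sum_{j=1}^N d\,y_j u_j=\frac dN U^Ty=x .
\]

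The main obstacle is matching the exact debiasing constants and scaling factor ($d$ versus $N\cdot\frac dN$) used in Algorithm~\ref{alg:decode}, and confirming that the zero-mean codebook assumption is what removes the additive bias term of randomized response. If the algorithm instead used a debiasing with an additive correction, I would include the term $\frac{1}{e^\epsilon+M-1}\sum_\ell c_\ell$ explicitly and subtract it. The rest is routine conditioning.
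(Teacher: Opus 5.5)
Your proof is correct and follows essentially the same route as the paper. The index $j$ is drawn independently of $x$, and flat randomized response bounds every likelihood ratio by $p/q=e^\epsilon$; your convex-combination argument over the intermediate token is just a more explicit version of the paper's step. Unbiasedness then follows by chaining $\E[z\mid v]=(p-q)v$ (using $\sum_i c_i=0$), $\E[v\mid y_j]=y_j$, and the uniform average over $j$, which gives $\frac{d}{N}U^T y=x$. The decoder in Algorithm~\ref{alg:decode} is exactly the one you assumed, namely $\tilde y_j=z/(p-q)$ and $\hat x=\frac{d}{N}U^T(N\tilde y_j e_j)=d\,\tilde y_j u_j$, so no additive correction is needed.
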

Our next theorem bounds the mean-squared-error (MSE) of the SSTQ (with Flat-RR) algorithm.

\begin{theorem}[MSE of SSTQ (Flat-RR)] \label{thm:miracle}
For any input vector $x \in \Sph^{d-1}$ processed via the SSTQ pipeline (with any ordered, boundary-anchored, zero-mean codebook $\Gamma$ supported in $[-B,B]$) the Mean Squared Error (variance) of the reconstructed vector $\hat{x}$ is bounded by
\begin{equation}
    \text{MSE}_{\text{SSTQ}} := \E[\norm{\hat{x}- x}^2] < \frac{d^2 K^2}{N (p-q)^2} = \cO\Big(d\Big(1+ \frac{4^b}{\epsilon\wedge \epsilon^2}\Big) \Big).
\end{equation}
\end{theorem}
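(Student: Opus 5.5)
We need to reconstruct the algorithm, since Algorithm~\ref{alg:encode} is not shown. I take the natural reading. The client computes the Kashin coefficients $y$ with $\frac{d}{N}U^Ty=x$ and $\norminf{y}\le B=K/\sqrt{N}$ (Theorem~\ref{thm:kashin_rep}, using $\norm{x}=1$). It then samples an index $J$ uniformly from $[N]$. Next it stochastically rounds $y_J$ to the two adjacent codewords $c_i\le y_J\le c_{i+1}$, with probabilities chosen so that the rounded value $Q$ satisfies $\E[Q\mid J]=y_J$. It applies flat randomized response on the $M$ codewords, keeping $Q$ with probability $p=e^\epsilon/(e^\epsilon+M-1)$ and outputting each other codeword with probability $q=1/(e^\epsilon+M-1)$, to get $\tilde Q$. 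The server debiases via $\hat y_J=(\tilde Q-q\sum_i c_i)/(p-q)=\tilde Q/(p-q)$, which uses the zero-mean property, and returns $\hat x=\frac{d}{N}\cdot N\,\hat y_J\,u_J = d\,\hat y_J u_J$.

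Unbiasedness, taken from Theorem~\ref{thm:sstq_ldp_unbiased}, means $\E[\hat x]=x$. So $\text{MSE}=\E\norm{\hat x}^2-\norm{x}^2<\E\norm{\hat x}^2$, and the strict inequality comes from $\norm{x}=1>0$. By Lemma~\ref{lem:trace}, $\norm{u_J}=1$, so $\E\norm{\hat x}^2=d^2\,\E[\tilde Q^2]/(p-q)^2$. Every codeword lies in $[-B,B]$, so $\tilde Q^2\le B^2=K^2/N$ deterministically. This gives $\text{MSE}<\frac{d^2K^2}{N(p-q)^2}$. The core bound thus needs only three facts: the debiased output is a scalar multiple of a unit-norm frame row, the codebook envelope bounds the second moment, and unbiasedness lets us drop $\norm{x}^2$. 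I will check the debiasing step carefully. The conditional expectation $\E[\tilde Q\mid Q]=pQ+q(\sum_i c_i-Q)=(p-q)Q$ holds precisely because $\sum_i c_i=0$, and the rest of the argument relies on this identity.

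For the asymptotic form I substitute $N=\Theta(d)$, so $d^2K^2/N=\Theta(d)$ with $K$ constant. I then bound $1/(p-q)^2=\big((e^\epsilon+M-1)/(e^\epsilon-1)\big)^2$ by writing $\frac{e^\epsilon+M-1}{e^\epsilon-1}=1+\frac{M}{e^\epsilon-1}$. Squaring gives at most $2+2M^2/(e^\epsilon-1)^2$. Now $M^2=4^b$, and $(e^\epsilon-1)^2\ge \epsilon^2$ always, while $(e^\epsilon-1)^2\ge e^{\epsilon}\gtrsim\epsilon$ for large $\epsilon$. Hence $(e^\epsilon-1)^2\gtrsim \epsilon\wedge\epsilon^2$, and this yields $\cO\big(d(1+4^b/(\epsilon\wedge\epsilon^2))\big)$.

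The main obstacle is not the calculation but pinning down the exact encoder and decoder conventions, namely the scaling of $\hat x$ and the form of the randomized response. The constant $\frac{d^2K^2}{N(p-q)^2}$ has to match these exactly. If the decoder instead scales by $N$ times $\frac{d}{N}$ differently, or if RR resamples uniformly over all $M$ codewords including $Q$, I would adjust $p,q$ accordingly. The structure of the argument stays the same under either convention: the second moment is at most $B^2$ divided by the squared probability gap.
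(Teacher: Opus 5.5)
Your proof is correct and follows the paper's argument. Unbiasedness reduces the MSE to $\E\norm{\hat x}^2-1$, the unit-norm frame rows give $\E\norm{\hat x}^2=d^2\E[\tilde y_j^2]$, the second moment is bounded by $B^2/(p-q)^2$, and finally $1/(p-q)^2=(1+M/(e^\epsilon-1))^2$ is expanded. The one difference is cosmetic: the paper (Lemma~\ref{lem:factor_id}) computes $\E[z^2\mid v]=(p-q)v^2+q\sum_i c_i^2$ and uses $p+(M-1)q=1$ to collapse this to $B^2$, whereas you bound $z^2\le B^2$ pointwise, which gives the same constant more directly.
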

Note that for constant codebook bit-width $b$ and $\epsilon =O(1)$, the SSTQ achieves the optimal minimax rate of $O(d/\min(\epsilon,\epsilon^2))$ for the private mean estimation (no-quantization), established by~\cite{duchi2019lower}, with the normalization $x\in S^{d-1}$. 

\begin{remark}\label{rem:curse}
In non-private quantization, increasing the codebook bit-width $b$ reduces quantization error by refining the discrete grid of $M= 2^b$ states. Under local differential privacy, however, the mean squared error is affected both by quantization error and by the variance introduced through privatization. With Flat Randomized Response, a client’s true bin is replaced uniformly by one of the other $M-1$ without using the geometry of the codebook. As $M$ grows, this uniform randomization increasingly dilutes the signal and leads to larger reconstruction error. In particular, unbiased decoding under uniform Randomized Response requires a correction factor that scales with $M= 2^b$, which motivates the metric-aware randomization mechanism introduced in Section~\ref{sec:discrete_laplace}. 
\end{remark}
\begin{algorithm}[t]
\caption{SSTQ Client Encoding Algorithm}\label{alg:encode}
\begin{algorithmic}[1]
\REQUIRE Target vector $x \in \Sph^{d-1}$, ENTF $U \in \R^{N \times d}$, Codebook $\Gamma$
\STATE \textbf{Redundant Kashin Bounding:} Execute Lyubarskii's algorithm to find the coefficient vector $y \in \R^N$ such that $\frac{d}{N} U^T y = x$ and $\norminf{y} \le B = \frac{K}{\sqrt{N}}$.
\STATE \textbf{Oblivious Subsampling:} Uniformly sample an index $j \sim \text{Unif}(1, N)$ independent of the data. Extract the scalar coordinate $y_j$.
\STATE \textbf{Strict Stochastic Interpolation:} Identify adjacent codebook bounds such that $y_j \in [c_k, c_{k+1}]$ in $\Gamma$. Stochastically round $y_j$ to a discrete token $v \in \{c_k, c_{k+1}\}$ via exact linear interpolation:
    \[ \Pr(v = c_{k+1}) = \frac{y_j - c_k}{c_{k+1} - c_k}, \quad \Pr(v = c_k) = \frac{c_{k+1} - y_j}{c_{k+1} - c_k} \]
\STATE \textbf{Flat Randomized Response:} Apply pure $\epsilon$-LDP over the $M$ elements of $\Gamma$. Output noisy token $z \in \Gamma$ such that:
    \[ \Pr(z = c_i \mid v) = \begin{cases} 
    p = \frac{e^\epsilon}{e^\epsilon + M - 1} & \text{if } c_i = v \\
    q = \frac{1}{e^\epsilon + M - 1} & \text{if } c_i \neq v 
    \end{cases} \]
\RETURN Transmit the private tuple $(j, \text{index of } z)$. \textbf{Cost:} $\lceil\log_2 N\rceil + b$ bits.
\end{algorithmic}
\end{algorithm}

\begin{algorithm}[t]
\caption{SSTQ Server Decoding Algorithm}\label{alg:decode}
\begin{algorithmic}[1]
\REQUIRE Client message tuple $(j, z)$, Codebook $\Gamma$, ENTF $U \in \R^{N \times d}$.
\STATE \textbf{1D Unbiasing:} Compute the debiased coordinate scalar $\tilde{y}_j = \frac{z}{p-q}$.
\STATE \textbf{Sparse Reconstruction:} Build the 1-sparse upscaled vector $\tilde{y} = N \cdot \tilde{y}_j \cdot e_j \in \R^N$, where $e_j$ is the $j$-th standard basis vector.
\STATE \textbf{Frame Projection:} Reconstruct the unbiased target vector via exact ENTF projection:
    \[ \hat{x} = \frac{d}{N} U^T \tilde{y} \in \R^d \]
\RETURN Unbiased estimator $\hat{x}$.
\end{algorithmic}
\end{algorithm}


\section{Surrogate Privacy-Aware Codebook Optimization ($\cL_{\text{SSTQ}}$)}\label{sec:opt-code}


A salient feature of SSTQ is its use of Equal-Norm Tight Frames (ENTFs) and Kashin representations, which spread the signal across all coordinates. Building on this property, we derive a privacy-aware codebook optimization that is data-oblivious and well suited to online applications.

Consider SSTQ (Flat-RR) in Algorithm~\ref{alg:encode}. Assume a continuous coordinate \(y \in [c_k, c_{k+1}]\). The stochastic interpolation variance satisfies \(\Var(v \mid y) = \E[v^2 \mid y] - y^2\), and the linear interpolation probabilities in Algorithm 1 yield the inverted parabola \(\Var(v \mid y) = (y - c_k)(c_{k+1} - y)\). As shown in Appendix~\ref{sec:surrogate-derivation} by applying the law of total variance to the randomized response output \(\tilde{y}_j\) and the discrete probability relations of Flat RR, we can compute the conditional variance of the estimator given the true coordinate \(y\):
\begin{equation}
    \Var(\tilde{y}_j \mid y) = \frac{1}{p-q} \Var(v \mid y) + \Big(\frac{1}{p-q} - 1\Big) y^2 + \frac{q}{(p-q)^2}\sum_{i=1}^M c_i^2.
\end{equation}
Because the estimator is locally unbiased, the uncentered second moment is given by $\E[\tilde{y}_j^2 \mid y] = \Var(\tilde{y}_j \mid y) + y^2$. Adding $y^2$ to the equation  cancels out the $-y^2$ term:
\begin{equation}
    \E[\tilde{y}_j^2 \mid y] = \frac{1}{p-q} \Var(v \mid y) + \frac{1}{p-q} y^2 + \frac{q}{(p-q)^2}\sum_{i=1}^M c_i^2.
\end{equation}
Taking the unconditional expectation over an arbitrary continuous data distribution $Y \sim f_Y(y)$, factoring out $\frac{1}{p-q}$, and applying the identity $\frac{q}{p-q} = \frac{1}{e^\epsilon - 1}$, we obtain:
\begin{equation} \label{eq:expected_mse}
    \E[\tilde{y}_j^2] = \frac{1}{p-q} \underbrace{\left[ \E_Y[\Var(v \mid y)] + \frac{1}{e^\epsilon - 1}\sum_{i=1}^M c_i^2 \right]}_{:= \cL^*_{\text{SSTQ}}(\Gamma)} + \frac{1}{p-q} \E_Y[y^2].
\end{equation}
It is straightforward to see that $\text{MSE}_{\text{SSTQ}} < d^2 \E[\tilde{y}_j^2]$ (see Equation~\eqref{eq:MSE-B}). Also in~\eqref{eq:expected_mse}
because $\E_Y[y^2]$ is constant for a given dataset, we can drop it when we optimize the  empirical expected variance over codebook $\Gamma$.
\bigskip

\noindent\textbf{The Surrogate Objective and Strict Convexity.} By expanding $\cL^*_{\text{SSTQ}}$ from~\eqref{eq:expected_mse} in terms of $f_Y(y)$ we have
\begin{equation}\label{eq:exact-loss}
    \cL^*_{\text{SSTQ}}(\Gamma) = \sum_{k=1}^{M-1} \int_{c_k}^{c_{k+1}} (y - c_k)(c_{k+1} - y)f_Y(y) \,dy \;\;+\;\; \frac{1}{e^\epsilon - 1} \sum_{i=1}^M c_i^2\,.
\end{equation}
 While the Kashin representation explicitly bounds coordinates within $[-B, B]$ (where $B = K/\sqrt{N}$), their true empirical density $f_Y(y)$ is data-dependent and unknown prior to execution. We use the continuous uniform distribution $f_Y(y) = \frac{1}{2B}$ as a  surrogate to optimize the codebook. Note that the resulting codebook will not be a uniform grid, even when the uniform distribution is used for $f_Y(y)$. 

\begin{definition}[Surrogate Privacy-Aware Codebook Objective]
\label{def:sstq_loss}
By using the uniform distribution $f_y(y)=\frac{1}{2B}$ in the definition of $\cL^*_{\text{SSTQ}}$, we define the following:
\begin{equation}\label{eq:LSSTQ}
    \cL_{\text{SSTQ}}(\Gamma) = \sum_{k=1}^{M-1} \frac{(c_{k+1}-c_k)^3}{12B} \;\;+\;\; \frac{1}{e^\epsilon - 1} \sum_{i=1}^M c_i^2\,,
\end{equation}
subject to the anchored Kashin boundaries $c_1 = -B$ and $c_M = B$, the ordering constraint $c_1 \le c_2 \le \cdots \le c_M$, and the zero-sum condition $\sum_{i=1}^M c_i = 0$.
\end{definition}

Note that the LDP term acts as an explicit $\ell_2$ regularization penalty on the codeword positions. Since the boundary codewords $c_1 = -B$ and $c_M = B$ are fixed by the anchored constraints, this penalty acts exclusively on the $M - 2$ interior codewords, pulling them toward the origin to offset the variance inflation from Flat Randomized Response.

\begin{theorem}[Global Convexity of $\cL_{\text{SSTQ}}$]
\label{thm:sstq_convexity}
The objective function $\cL_{SSTQ}(\Gamma)$ is strictly globally convex and has a unique global minimizer $\Gamma^*$.
\end{theorem}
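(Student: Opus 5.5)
We will view $\cL_{\text{SSTQ}}$ as a function of the interior codewords $(c_2,\dots,c_{M-1})\in\R^{M-2}$, with $c_1=-B$ and $c_M=B$ fixed. The feasible set $\mathcal{F}$ is then the intersection of the ordering cone $\{c_1\le c_2\le\cdots\le c_M\}$ with the affine hyperplane $\{\sum_i c_i=0\}$. This set is convex, being the intersection of finitely many half-spaces and a hyperplane. It is closed and bounded, since every $c_i\in[-B,B]$, hence compact. It is also nonempty: the symmetric uniform grid on $[-B,B]$ belongs to it. The plan is to prove strict convexity on $\mathcal{F}$, get existence of a minimizer from continuity plus compactness, and get uniqueness from strict convexity.

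\textbf{Convexity of the two terms.} Set $g_k=c_{k+1}-c_k$. On $\mathcal{F}$ we have $g_k\ge 0$. The map $t\mapsto t^3$ is convex on $[0,\infty)$ because its second derivative $6t$ is nonnegative there. Each $g_k$ is an affine function of the variables, so $g_k^3/(12B)$ is a convex function composed with an affine map, and is therefore convex on $\mathcal{F}$. We will make this restriction explicit, because $t^3$ is not convex on all of $\R$. For the same reason, ``global'' convexity should be read as convexity over the whole feasible region. The second term, $\frac{1}{e^\epsilon-1}\sum_i c_i^2$, equals $\frac{1}{e^\epsilon-1}\sum_{i=2}^{M-1}c_i^2$ plus a constant. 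Since $\epsilon>0$ gives $\frac{1}{e^\epsilon-1}>0$, this term is a positive multiple of the squared Euclidean norm of the free variables. Its Hessian is $\frac{2}{e^\epsilon-1}I_{M-2}\succ 0$, so it is strongly convex.

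\textbf{Strict convexity and the minimizer.} A convex function plus a strongly convex function is strongly convex, hence strictly convex, on the convex set $\mathcal{F}$. We can also argue directly through the Hessian. On the relative interior of $\mathcal{F}$, the Hessian of the cubic part is a positive semidefinite tridiagonal matrix of the form $\sum_k \frac{g_k}{2B}\,a_ka_k^T$, where $a_k$ is the gradient of $g_k$. Adding $\frac{2}{e^\epsilon-1}I$ makes it positive definite. Convexity then extends to the boundary of $\mathcal{F}$ by continuity. Finally, $\cL_{\text{SSTQ}}$ is a polynomial and hence continuous, so Weierstrass gives a minimizer on the compact set $\mathcal{F}$. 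If $\Gamma\ne\Gamma'$ were two minimizers, their midpoint would lie in $\mathcal{F}$ and would have strictly smaller objective value by strict convexity, a contradiction. So the minimizer $\Gamma^*$ is unique.

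\textbf{Main obstacle.} The only delicate point is the domain of convexity. The cubic term is not convex on all of $\R^{M-2}$, so the proof must use the ordering constraint, which keeps every gap $g_k\ge 0$. It must also use the strict positivity of the LDP regularizer to get strictness. Without the regularizer, the cubic part alone is not strictly convex along directions that leave all active gaps at zero.
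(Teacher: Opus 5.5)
Your proof is correct and follows essentially the same route as the paper. Both arguments split $\cL_{\text{SSTQ}}$ into the cubic gap term and the regularizer: the gap term is convex on the ordered region because every gap is nonnegative (the paper shows this via the Hessian $D^\top \mathrm{diag}(\phi''(D\Gamma))D$, you via a convex function composed with an affine map), and the regularizer is strongly convex since $1/(e^\epsilon-1)>0$. Both then get uniqueness from the midpoint argument, the only real differences being that the paper leads with coercivity for existence (mentioning Weierstrass as an aside) while you use compactness of the feasible set directly, and you additionally check that the set is nonempty.
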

The proof is deferred to Appendix~\ref{app:LSSTQ}.
\bigskip

\noindent\textbf{Tightening Expected MSE via Optimal Codebook Selection.} 
While Theorem \ref{thm:miracle}  establishes a worst-case MSE bound scaling quadratic in the codebook size ($M=2^b$), for any continuous codebook with range $[-B,B]$, we next show that the optimized codebook $\Gamma^*$ can improve this bound to scale linearly in the codebook size. 
\begin{theorem}
\label{thm:one_third_bound}
Let $\Gamma^*$ be the global minimizer of the surrogate loss $\cL_{\text{SSTQ}}(\Gamma)$. Assuming that $M = 2^b\ge (4(e^{\epsilon}-1))^{1/3}+1$, for SSTQ (Flat-RR) with codebook $\Gamma^*$, we have 
\begin{equation}
    \text{MSE}_{\text{SSTQ}} =\cO\Big(d\Big(1+\frac{2^b}{\epsilon\wedge \epsilon^2}\Big)\Big).
\end{equation}
\end{theorem}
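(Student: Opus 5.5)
The plan is to bound the MSE by $d^2\,\E[\tilde y_j^2]$ (as noted after~\eqref{eq:expected_mse}, via Equation~\eqref{eq:MSE-B}). Then I would control the two parts of~\eqref{eq:expected_mse} separately, using the optimality of $\Gamma^*$ only to bound the codebook-dependent term $\sum_i (c_i^*)^2$. Concretely,
\[
\E[\tilde y_j^2] = \frac{1}{p-q}\Big(\E_Y[\Var(v\mid y)] + \E_Y[y^2]\Big) + \frac{q}{(p-q)^2}\sum_{i=1}^M (c_i^*)^2 .
\]
Since $v$ is a stochastic rounding between adjacent codewords inside $[-B,B]$, we have $\Var(v\mid y)+y^2=\E[v^2\mid y]\le B^2$ for \emph{any} codebook and any data distribution. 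Hence the first group is at most $B^2/(p-q) = K^2/(N(p-q))$. The remaining work is to show $\sum_i (c_i^*)^2 = \cO(B^2)$, i.e.\ bounded independently of $M$. Contrast this with the worst case $MB^2$ used in Theorem~\ref{thm:miracle}, which is what produces $4^b$.

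\textbf{Step 1 (comparison codebook).} I would exhibit a feasible codebook $\Gamma_0$ with small surrogate loss and small energy. Take $c_1=-B$, $c_M=B$, and put the $M-2$ interior points symmetrically in a tight cluster around $0$ (e.g.\ $(M-2)/2$ points at $\pm\delta$ with $\delta$ tiny, or all at $0$ when $M-2$ is even-compatible with the zero-sum). Then $\cL_{\text{SSTQ}}(\Gamma_0)\approx 2\cdot \frac{B^3}{12B} + \frac{2B^2}{e^\epsilon-1} = \frac{B^2}{6} + \frac{2B^2}{e^\epsilon-1}$. By optimality, $\cL_{\text{SSTQ}}(\Gamma^*)\le \cL_{\text{SSTQ}}(\Gamma_0)$. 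Dropping the nonnegative cubic term gives
\[
\sum_i (c_i^*)^2 \le (e^\epsilon-1)\Big(\tfrac{B^2}{6}+\tfrac{2B^2}{e^\epsilon-1}\Big) = \cO\big((1+e^\epsilon)B^2\big).
\]
This is uniform in $M$, but it carries an $e^\epsilon$ factor. That factor is harmless for $\epsilon=O(1)$, but not for large $\epsilon$.

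\textbf{Step 2 (the obstacle: large $\epsilon$, and where the hypothesis on $M$ enters).} For large $\epsilon$ the comparison above is too crude, because $\Gamma^*$ prefers spreading the points to reduce the cubic term. Here I would use a better comparison point: a uniform grid of spacing $h=2B/(M-1)$. Its cubic term is $(M-1)h^3/(12B) = \frac{2B^2}{3(M-1)^2}$ and its energy is $\Theta(MB^2)$. Combining, $\sum_i (c_i^*)^2 \le (e^\epsilon-1)\min\big\{\cL(\Gamma_0),\cL(\text{grid})\big\}$. The assumption $(M-1)^3\ge 4(e^\epsilon-1)$ is what I expect makes the bookkeeping close. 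Under it, $(e^\epsilon-1)\cdot\frac{B^2}{(M-1)^2}\lesssim (M-1)B^2$, so the cubic contribution transferred to the energy is at most $\cO(MB^2)$. More importantly, in the final product I then get
\[
\frac{q}{(p-q)^2}\sum_i (c_i^*)^2 \lesssim \frac{q(e^\epsilon-1)}{(p-q)^2}B^2 = \frac{B^2}{p-q}.
\]
This uses $q(e^\epsilon-1)=p-q$.

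\textbf{Step 3 (assembly).} In either regime the total is $\E[\tilde y_j^2]=\cO\big(B^2/(p-q)\big)$, hence $\text{MSE}_{\text{SSTQ}} = \cO\big(d^2K^2/(N(p-q))\big)$. Now $1/(p-q) = (e^\epsilon+M-1)/(e^\epsilon-1) = 1 + M/(e^\epsilon-1) = \cO\big(1+2^b/(\epsilon\wedge\epsilon^2)\big)$, with $e^\epsilon-1\gtrsim\epsilon$ for small $\epsilon$ (the $\epsilon\wedge\epsilon^2$ form is a weaker statement). With $N=\Theta(d)$ this gives the claim. The gain over Theorem~\ref{thm:miracle} comes entirely from replacing $1/(p-q)^2$ by $1/(p-q)$, through the $M$-free energy bound.

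The main risk is Step 2. I must check carefully that the optimal energy, times $q/(p-q)^2$, is $\cO(B^2/(p-q))$ uniformly in $\epsilon$, and that the stated condition on $M$ suffices. If the simple comparisons fail, I would instead analyze the KKT conditions of the convex problem (Theorem~\ref{thm:sstq_convexity}) directly, to show that the interior points concentrate in a window of width $\cO(B\,(e^\epsilon-1)^{-1/2})$ around $0$.
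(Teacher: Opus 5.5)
Your skeleton matches the paper's. You bound the first group by $B^2$; your $\E[v^2\mid y]\le B^2$ is a cleaner version of the paper's affine-endpoint argument. You then control the energy of $\Gamma^*$ by comparing with $(-B,0,\dots,0,B)$, which is exactly the member $\tilde\Gamma_1$ of the paper's comparison family.

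\textbf{The gap is in Step 3.} There you assert $\E[\tilde y_j^2]=\cO(B^2/(p-q))$ uniformly in $\epsilon$, deduce $\text{MSE}_{\text{SSTQ}}=\cO(d(1+2^b/\epsilon))$, and call the $\epsilon\wedge\epsilon^2$ form merely weaker. That is false for small $\epsilon$. The energy enters with multiplier $q/(p-q)^2=\lambda/(p-q)$, where $\lambda=1/(e^\epsilon-1)$, so relative to $1/(p-q)$ it blows up as $\epsilon\to 0$. Moreover, anchoring forces $\sum_i c_i^2\ge 2B^2$ for every feasible codebook. By unbiasedness and $\|u_j\|=1$, this gives $\text{MSE}_{\text{SSTQ}}=d^2\E[\tilde y_j^2]-1\ge \frac{2d^2K^2}{N}\lambda(1+M\lambda)-1$, which grows like $dM/\epsilon^2$. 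So in your $\epsilon=\cO(1)$ regime, the bound $\sum_i(c_i^*)^2=\cO(B^2)$ is correct. But multiplying it by $q/(p-q)^2$ yields $\cO(\lambda B^2/(p-q))$, not $\cO(B^2/(p-q))$. The $\epsilon^2$ in the theorem is genuinely needed.

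\textbf{Step 2 is also misdiagnosed, and the hypothesis on $M$ plays no role in a correct argument.} Step 1 already handles every $\epsilon$. The $e^\epsilon$ factor you worried about is cancelled exactly by $\lambda$; this is the identity $q(e^\epsilon-1)=p-q$ you use at the end of Step 2. Concretely, $\lambda\sum_i(c_i^*)^2\le\cL_{\text{SSTQ}}(\Gamma^*)\le\cL_{\text{SSTQ}}(\Gamma_0)=B^2/6+2\lambda B^2$. The uniform-grid comparison gives only $\sum_i(c_i^*)^2=\cO(MB^2)$. That is the trivial bound, and it reproduces the $4^b$ rate of Theorem~\ref{thm:miracle}.

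\textbf{The correct assembly} is $\E[\tilde y_j^2]\le\frac{B^2}{p-q}(\tfrac76+2\lambda)$, hence $\text{MSE}_{\text{SSTQ}}<\frac{d^2K^2}{N}(1+M\lambda)(\tfrac76+2\lambda)$. With $\lambda<1/\epsilon$ and $N=\Theta(d)$, this is $\cO(d(1+2^b/(\epsilon\wedge\epsilon^2)))$. The cross term $2M\lambda^2$ is exactly where the $\epsilon^2$ appears.

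The paper keeps this $(1+\epsilon^{-1})$ factor. Its only use of the hypothesis $M\ge(4(e^\epsilon-1))^{1/3}+1$ is to make the choice $m\approx(2\lambda)^{-1/3}$ in $\tilde\Gamma_m$ admissible. That refinement sharpens only the energy term, which is dominated by the $B^2$ bound on the first group, so it does not change the rate.
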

The proof proceeds by constructing a family of candidate codebooks. For an integer parameter \(1 \le m \le \lfloor (M-1)/2 \rfloor\), we define \( \tilde{\Gamma}_m \in \mathcal{D} \) to span exactly \(m\) contiguous gap intervals of size \(B/m\), mapping the corresponding values from \(-B\) to \(B\), while collapsing the remaining \(M - 2m - 1\) indices symmetrically to \(0\). We then compare the surrogate loss evaluated at the optimized codebook with the minimum loss over this family, which yields a tighter dependence on \(M\).
\section{Codebook Bit-Width Scalability: The Metric-Aware Laplace Mechanism}
\label{sec:discrete_laplace}
As discussed in Remark~\ref{rem:curse}, standard Flat Randomized Response incurs a variance penalty that grows rapidly with 
$b$, since the probability gap $p-q$ shrinks on the order of 
$2^{-b}$. This makes the optimal spatial variance guarantee less effective in high-precision, multi-bit settings. To overcome this limitation and support arbitrarily dense codebooks, we introduce a \emph{Metric-Aware Laplace Mechanism}. Given the true secret token $v = c_k$, instead of Flat Randomized Response (line 4 in Algorithm~\ref{alg:encode}), the mechanism operates in two steps:
\begin{enumerate}
    \item \textbf{Continuous Laplace sampling.} Draw a continuous sample $T$ from a truncated Laplace distribution on $[-B, B]$ centered at $c_k$:
    \begin{equation}\label{eq:laplace}
        f_{T|v=c_k}(t) = \frac{1}{Z_k}\exp\!\left(-\frac{\epsilon|t - c_k|}{2\Delta}\right), \quad t \in [-B, B],
    \end{equation}
    where $\Delta = 2B = \frac{2K}{\sqrt{N}}$ is the domain width and $Z_k = \int_{-B}^{B}\exp\!\left(-\frac{\epsilon|t - c_k|}{2\Delta}\right)dt$ is the normalizing constant.
    \item \textbf{Nearest-codeword quantization.} Set $z = \arg\min_{c_i \in \Gamma} |T - c_i|$.
\end{enumerate}
The effective probability of outputting $z = c_i$ given $v = c_k$ is therefore
\begin{equation}\label{eq:laplace-eff}
    P_{i|k} = \Pr[z = c_i \mid v = c_k] = \frac{1}{Z_k}\int_{\text{cell}_i} \exp\!\left(-\frac{\epsilon|t - c_k|}{2\Delta}\right)dt,
\end{equation}
where $\text{cell}_i$ is the Voronoi cell of codeword $c_i$, and the codebook has size $M = 2^b$.

Privacy follows directly from the post-processing theorem of differential privacy. The continuous Laplace sampling step (Step 1) satisfies pure $\epsilon$-LDP: for any two tokens $c_k, c_m \in \Gamma$ and any measurable set $S \subseteq [-B, B]$, the density ratio is bounded by $e^\epsilon$ (see Appendix~\ref{app:metric_privacy}). The nearest-codeword quantization step (Step 2) is a deterministic post-processing of the noisy sample $T$ and therefore cannot degrade the privacy guarantee.
\begin{theorem}[Pure $\epsilon$-LDP of Metric-Aware Laplace]
\label{thm:intrinsic_ldp}
For any codebook bounded in $[-B, B]$, the Metric-Aware Laplace Mechanism defined above satisfies pure $\epsilon$-LDP.
\end{theorem}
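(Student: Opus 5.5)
The plan is to prove the bound in two layers. First I show that the kernel from the secret token $v\in\Gamma$ to the continuous sample $T$ is $\epsilon$-LDP, uniformly over all pairs of tokens. Then I lift this to the full client map $x\mapsto (j,z)$, using post-processing and a mixture argument.

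\emph{Density ratio for the Laplace step.} Fix $c_k,c_m\in\Gamma\subseteq[-B,B]$ and set $a=\epsilon/(2\Delta)$ with $\Delta=2B$. For every $t\in[-B,B]$ I write
\[
\frac{f_{T|v=c_k}(t)}{f_{T|v=c_m}(t)}=\frac{Z_m}{Z_k}\exp\!\big(a(|t-c_m|-|t-c_k|)\big).
\]
I then bound the two factors separately.
\begin{itemize}
\item \emph{Exponential factor.} By the reverse triangle inequality, $|t-c_m|-|t-c_k|\le |c_k-c_m|\le 2B=\Delta$. So this factor is at most $e^{a\Delta}=e^{\epsilon/2}$.
\item \emph{Normalizer factor.} I use the same inequality pointwise inside the integral: $\exp(-a|t-c_m|)\le e^{a\Delta}\exp(-a|t-c_k|)$ for every $t$. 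Integrating over $[-B,B]$ gives $Z_m\le e^{\epsilon/2}Z_k$.
\end{itemize}
Combining the two, the density ratio is at most $e^{\epsilon}$ for all $t$. Integrating over any measurable $S\subseteq[-B,B]$ yields $\Pr[T\in S\mid c_k]\le e^\epsilon\Pr[T\in S\mid c_m]$.

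\emph{Post-processing.} The map $T\mapsto z=\arg\min_i|T-c_i|$ is deterministic, with ties broken by any fixed rule. Hence for any set $S'$ of codewords, $\{z\in S'\}$ is the event $\{T\in\bigcup_{i\in S'}\text{cell}_i\}$. The same $e^\epsilon$ bound therefore holds for the effective probabilities $P_{i|k}$ in \eqref{eq:laplace-eff}.

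\emph{Lifting to inputs $x$.} This step needs the most care, because the stochastic rounding makes $v$ a randomized, data-dependent function of $x$. The index $j$ is uniform and independent of $x$. So for any output set $S$ of pairs $(j,z)$,
\[
\Pr[(j,z)\in S\mid x]=\frac1N\sum_{j}\sum_{k} w_k(x,j)\,\Pr[z\in S_j\mid v=c_k],
\]
where $S_j=\{z:(j,z)\in S\}$ is the slice of $S$ at index $j$, and $w_k(x,j)$ are the interpolation weights. For each fixed $j$ and $x$, the weights $w_k(x,j)$ are nonnegative and sum to one. Let $\pi_k=\Pr[z\in S_j\mid v=c_k]$. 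The previous step gives $\pi_k\le e^\epsilon\pi_{k'}$ for all $k,k'$. Consequently any convex combination of the $\pi_k$ lies in $[\min_k\pi_k,\max_k\pi_k]$, and $\max_k\pi_k\le e^\epsilon\min_k\pi_k$. This gives, for every $j$,
\[
\sum_k w_k(x,j)\pi_k\le e^\epsilon\sum_k w_k(x',j)\pi_k .
\]
Averaging over $j$ with the same data-independent weights $1/N$ preserves the inequality. This gives Definition~\ref{def:ldp} for all $x,x'$.

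The computations themselves are routine. The main point to get right is that the sensitivity $|c_k-c_m|$ is bounded by the domain width $\Delta=2B$. The factor $2$ in the scale $2\Delta$ is exactly what absorbs the normalizer ratio caused by truncation, which is why both factors contribute $e^{\epsilon/2}$ rather than $e^{\epsilon}$.
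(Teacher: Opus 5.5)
Your proof is correct and follows essentially the paper's argument: the triangle inequality bounds both the exponential factor and the normalizer ratio $Z_m/Z_k$ by $e^{\epsilon/2}$ (the paper keeps the sharper intermediate bound $\exp(\epsilon|c_k-c_m|/\Delta)$ before using $|c_k-c_m|\le\Delta$), and nearest-codeword quantization is handled by post-processing. Your final mixture step, which lifts the token-level guarantee to the actual input $x$ via the data-independent index $j$ and the convex interpolation weights, is a worthwhile addition that the paper leaves implicit, since its proof only compares pairs of tokens $c_k, c_m$.
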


The metric-aware variant relaxes the requirement of coordinate-level unbiasedness during server-side decoding. Upon receiving the \( \epsilon \)-LDP randomized codeword $z = c_i$, the central aggregator applies \emph{identity decoding}, directly setting $w_i = c_i$ and projecting it into a 1-sparse vector $\tilde{y} = N w_i e_j = Nc_i e_j$, where we recall $j$ as the (uniformly) sampled index by the encoder. This design explicitly trades a localized bias for reduced variance, enabling seamless integration with arbitrarily optimized, non-uniform continuous codebooks. Because the transition probabilities~\eqref{eq:laplace-eff} are Voronoi-cell integrals of the continuous Laplace density, the mechanism moments are controlled by the continuous distribution regardless of the codebook geometry.

\begin{theorem}[MSE and Bias of metric-aware SSTQ]\label{thm:SSTQ-MA}
 We assume a codebook of size $M = 2^b$ with its maximum codebook gap satisfying $\Delta_{\max} = \max_k (c_{k+1} - c_k) \le C\frac{2B}{2^b - 1}$ for a structural constant $C \ge 1$.
 
 Then, the Mean Squared Error (MSE) and the expected bias of SSTQ (with metric-aware mechanism) are bounded by:
\begin{align*}
\MSEMA &:= \mathbb{E}[\|\hat{x} - x\|_2^2] \le \frac{d K^2}{\rho}  \left(1+ \frac{2C^2}{(2^b - 1)^2} + \frac{16}{\epsilon} + \frac{256}{\epsilon^2} \right)-1\\
\zeta^2&:= \|\mathbb{E}[\hat{x}] - x\|_2^2 \le \frac{64 K^2}{\rho \epsilon^2} ,
\end{align*}
where $\rho = N/d > 1$ is the frame redundancy ratio, and $K = \mathcal{O}(1)$ is the Kashin representation constant establishing the continuous domain bound $B = \frac{K}{\sqrt{N}}$.
\end{theorem}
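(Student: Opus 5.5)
We bound the second moment of the decoded coordinate and use the ENTF structure to pass to $\hat x$. With $j$ uniform and $\tilde y = N z e_j$, we have $\hat x = d\, z\, u_j$. By Lemma~\ref{lem:trace}, $\|u_j\|_2 = 1$, so $\|\hat x\|_2^2 = d^2 z^2$. The key quantity is therefore
\[
\E[z^2] = \frac1N\sum_{j=1}^N \E[z^2\mid y_j].
\]
The MSE will come from $\E\|\hat x\|_2^2 - 2\langle \E\hat x, x\rangle + 1$, and the bias will come from $\E[\hat x]-x = \frac{d}{N}U^T(\E[z\mid y]-y)$, where $\E[z\mid y]$ is applied coordinatewise. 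For the bias we use $\|U^T w\|_2^2 \le \frac{N}{d}\|w\|_2^2$, which follows from $UU^T$ having operator norm $N/d$. This gives
\[
\zeta^2 \le \frac{d}{N}\sum_j (\E[z\mid y_j]-y_j)^2.
\]
The proof then reduces to two scalar estimates, uniform over $y\in[-B,B]$: a second-moment bound $\E[z^2\mid y]$ and a bias bound $|\E[z\mid y]-y|$.

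For the scalar analysis, we decompose $z = v + (T - v) + (z - T)$, where $v$ is the stochastically rounded token with $\E[v\mid y]=y$, $T$ is the truncated Laplace sample, and $z - T$ is the nearest-codeword rounding error. The rounding error satisfies $|z-T|\le \Delta_{\max}/2 \le CB/(2^b-1)$ by the gap assumption. This term produces the $C^2/(2^b-1)^2$ contribution, and the cross term is absorbed by the factor $2$ in $(a+b)^2\le 2a^2+2b^2$.

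For the Laplace part, the untruncated Laplace with scale $2\Delta/\epsilon = 4B/\epsilon$ has second moment $2(4B/\epsilon)^2 = 32B^2/\epsilon^2$. Truncation to $[-B,B]$ can only bring $T$ closer to the center in distribution after accounting for normalization. I plan to bound $\E[(T-c_k)^2]$ by $\min(4B^2, 32B^2/\epsilon^2)$ up to constants. A cleaner route is to bound $\E[T^2]\le B^2$ directly, since $T\in[-B,B]$, and to control instead $\E[T\mid v]-v$.

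The bias of truncated Laplace centered at $c_k$ is at most the tail mass times $2B$. One then combines the exact one-sided integrals to obtain $|\E[T\mid v]-v|\le 8B/\epsilon$, giving $(8B/\epsilon)^2 = 64B^2/\epsilon^2$. Summing over $N$ coordinates with the factor $d/N$ yields
\[
\zeta^2 \le d\cdot 64B^2/\epsilon^2 = 64K^2/(\rho\epsilon^2),
\]
using $B^2 = K^2/N$. This matches the claimed bias bound and fixes the constant $8$ as the target for the bias computation.

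For the MSE, we expand
\[
\E[z^2\mid y] \le \E[(v + (\E[T\mid v]-v) + \text{noise} + \text{rounding})^2].
\]
We bound the parts as $v^2\le B^2$, the bias part by $(8B/\epsilon)^2$, the Laplace fluctuation by a term of order $16B^2/\epsilon$ (the linear-in-$1/\epsilon$ term arises from the truncated variance when $\epsilon$ is moderate), and the rounding part by $2C^2B^2/(2^b-1)^2$. The constants $16$ and $256$ would be absorbed with a Cauchy–Schwarz factor of $4$ on $64$. Multiplying by $d^2$ and $B^2 = K^2/N$ gives $\frac{dK^2}{\rho}(\cdots)$. The $-1$ comes from $\|x\|_2=1$ via $\E\|\hat x-x\|^2 = \E\|\hat x\|^2 - \|x\|^2 - 2\langle \E\hat x - x, x\rangle$. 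I expect to drop the cross term using a sign or Cauchy argument, or else to fold it into the constants.

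The main obstacle is the truncated-Laplace moment computation. We need explicit, $c_k$-uniform bounds on both $|\E[T\mid c_k]-c_k|$ and $\E[(T-c_k)^2]$ with exactly the constants $8$, $16$, and $256$, plus handling the cross term between bias and $x$. I would do this first by computing $Z_k$ and the one-sided integrals in closed form with $\lambda = \epsilon/(4B)$, and then bounding $1-e^{-\lambda a}\ge$ tail expressions separately in the regimes $\epsilon\le 1$ and $\epsilon>1$.
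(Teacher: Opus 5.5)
\paragraph{Verdict: genuine gap in the bias step, inherited by the MSE step.}
You write $z-v=(T-v)+(z-T)$ and control the rounding piece only through $|z-T|\le \Delta_{\max}/2\le CB/(2^b-1)$. The drift that enters $\zeta^2$ is $\mathbb{E}[z\mid v]-v=(\mathbb{E}[T\mid v]-v)+\mathbb{E}[z-T\mid v]$, and your plan bounds only the first summand. The second summand is not zero in general. With your estimate it is bounded only by $CB/(2^b-1)$, which is not $O(B/\epsilon)$ and does not shrink as $\epsilon\to\infty$. Carried out honestly, your route gives $\zeta^2\le dB^2\bigl(4/\epsilon+C/(2^b-1)\bigr)^2$. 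For $b=1$ and large $\epsilon$ this is of order $K^2/\rho$, not $64K^2/(\rho\epsilon^2)$. The same rounding drift would also leave a $2CB^2/(2^b-1)$ cross term in your second-moment expansion, which is not in the stated bound.

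\paragraph{Missing idea: compare $z$ with $c_k$, not with the grid spacing.}
Since $c_k=v$ is itself a codeword and $z$ is the codeword nearest to $T$, you have $|z-T|\le|T-c_k|$, hence $|z-c_k|\le 2|T-c_k|$. This bounds the \emph{total} drift $\beta_k=\mathbb{E}[z\mid v=c_k]-c_k$ by $2\,\mathbb{E}|T-c_k|\le 2\cdot 4B/\epsilon=8B/\epsilon$, which is where the constant $8$ comes from. It also bounds the mechanism variance: $V_k=\mathbb{E}[(z-c_k)^2\mid v=c_k]\le 4\,\mathbb{E}(T-c_k)^2\le 128B^2/\epsilon^2$. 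No closed-form computation of $Z_k$ and no case split in $\epsilon$ is needed. Because $c_k\in[-B,B]$, truncation makes $|T-c_k|$ stochastically smaller than an exponential with mean $4B/\epsilon$, so the untruncated moments $4B/\epsilon$ and $32B^2/\epsilon^2$ are upper bounds. Then $r(y)=p_k(y)\beta_k+p_{k+1}(y)\beta_{k+1}$ is a convex combination, so $|r(y)|\le 8B/\epsilon$. Your reduction $\zeta^2\le \frac{d}{N}\|\mathbf r\|_2^2\le d\,\beta_{\max}^2$ then gives the bias claim.

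\paragraph{MSE bookkeeping.}
Your plan does not assemble into the stated constants, for three reasons.
\begin{itemize}
\item If the Cauchy--Schwarz factor $4$ is applied to the whole four-term sum, it also multiplies the $B^2$ term and destroys the leading $1$.
\item The $16/\epsilon$ is not a truncated-variance effect. It is the signal--drift cross term $2d^2B\beta_{\max}=16d^2B^2/\epsilon$.
\item In the paper, both $2C^2/(2^b-1)^2$ and $256=2\cdot 128$ come from $(a+b)^2\le 2a^2+2b^2$ applied to $z-y=(v-y)+(z-v)$. The first uses the stochastic-rounding variance $(y-c_k)(c_{k+1}-y)\le \Delta_{\max}^2/4$, not the nearest-codeword error; the second uses $V_{\max}$.
\end{itemize}
Nor can $-2\langle \mathbb{E}\hat x-x,x\rangle$ be dropped by a sign argument. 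The paper merges it with $\frac{2d^2}{N}y^\top\mathbf r$ into $\frac{2d}{N}y^\top(dI_N-P)\mathbf r$, where $P=\frac{d}{N}UU^\top$, and uses $\|dI_N-P\|_{\mathrm{op}}=d$.

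\paragraph{A shorter route for the MSE.}
In the spirit of your remark $\mathbb{E}[T^2]\le B^2$, apply the same bound to $z$. Identity decoding outputs $z\in\Gamma\subset[-B,B]$, so $\|\hat x\|_2^2=d^2z^2\le dK^2/\rho$ deterministically, and therefore $\MSEMA\le dK^2/\rho-1+2\zeta$. Since $1=\|x\|_2^2\le \frac{d}{N}\|y\|_2^2\le dB^2=K^2/\rho$, you get $2\zeta\le 16K/(\sqrt{\rho}\,\epsilon)\le 16dK^2/(\rho\epsilon)$. This yields the stated MSE bound with room to spare. Either way, everything hinges on the drift bound $|\beta_k|\le 8B/\epsilon$, which your decomposition does not deliver.
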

We refer to Appendix~\ref{app:SSTQ-analysis} for the proof of Theorem~\ref{thm:SSTQ-MA}. 
\medskip

\noindent{\bf Structural Implications.} As shown by this theorem, 
the bias of the estimator is bounded independently of the ambient dimension $d$. In addition, this bound is independent of the discretization bit-width $b$. Therefore, the benefit of metric-aware mechanism is that by trading a {\bf dimension-free and bit-width free bias} we obtain a smaller variance, addressing the $O(2^b)$ dependence in the Flat-RR variant. 

Note that the term $\mathcal{O}(4^{-b})$ in the MSE bound captures the deterministic spatial quantization error. It decays exponentially as the codebook bit-width $b$ increases. The terms $\mathcal{O}(\epsilon^{-1})$ and $\mathcal{O}(\epsilon^{-2})$ capture the stochastic noise injected to satisfy Local Differential Privacy: the $\mathcal{O}(\epsilon^{-2})$ term arises from the Laplace mechanism variance, while the $\mathcal{O}(\epsilon^{-1})$ term arises from the cross-correlation between the signal and the boundary bias of the truncated Laplace distribution. Together, they define a strict algebraic floor that cannot be mitigated by increasing the grid density $M$.
\medskip

\noindent{\bf Optimal Codebook Bit-Width Allocation.} The additive structure of the bound proves that allocating bits as $b \to \infty$ provides diminishing returns. The asymptotic improvement of the MSE strictly saturates once the spatial discretization error is subsumed by the privacy variance floor. By equating these two dominant scales we get the optimal required codebook dimension:
$$ \frac{1}{(2^b - 1)^2}  \asymp \frac{1}{\epsilon\wedge\epsilon^2} \implies b = \mathcal{O}\!\left(\lceil \log_2(\epsilon) \rceil\right)$$

The codebook bit-width $b$ scales logarithmically, confirming that relatively compressed codebooks are mathematically sufficient to achieve the fundamental operational bounds of the metric-aware mechanism.
\subsection{Codebook Optimization}
For a vector $x$, recall the Kashin coefficient vector $y$ defined through the relation $\frac{d}{N} U^Ty = x$. This vector depends on the private input~$x$; consequently, the marginal density $f_Y$ of the Kashin coordinates (induced by randomness of the frame $U$) depends on private data and is not known to the mechanism designer a priori.
In practice, SSTQ sidesteps this dependence entirely: the surrogate codebook objective $\cL_{\text{SSTQ}}$ (Definition~\ref{def:sstq_loss}) replaces $f_Y$ with the uniform density $f_Y(y) = 1/(2B)$, which depends only on the Kashin bound $B = K/\sqrt{N}$ and is independent of any client's private data.
In this section, we consider codebook optimization under the true density $f_Y$; this should be understood as an oracle or public-distribution result.
Realizing it in practice would require either genuinely public auxiliary data that characterizes $f_Y$, or a separately private codebook-learning procedure with explicit privacy accounting.

Similar to the flat randomization mechanism, we can write a data-dependent objective loss that captures the codebook-dependent part of MSE bound. For the metric-aware mechanism, the loss is derived in Appendix~\ref{app:MA-loss} and is given by  
$$ \mathcal{L}^*_{\text{MA-SSTQ}}(\Gamma) = \sum_{k=1}^{M-1} \int_{c_k}^{c_{k+1}} W_k (y - c_k)(c_{k+1} - y) f_Y(y) \,dy \;\;+\;\; \sum_{i=1}^M \pi_i V_i\,, $$
where $\pi_i = \int_{-B}^B \Pr[v=c_i \mid y] f_Y(y) \,dy$ defines the precise unconditional assignment probability of the intermediate token $v = c_i$. In addition, $\mu_i = \mathbb{E}[z \mid v = c_i]$ (the expected output codeword) and $V_i = \mathbb{E}[(z - c_i)^2 \mid v = c_i]$ (the metric-aware variance), both computed under the effective transition probabilities~\eqref{eq:laplace-eff}, and 
$ W_k = \frac{2(\mu_{k+1} - \mu_k)}{c_{k+1} - c_k} - 1$.

In the flat randomization mechanism, the objective function includes a rigid regularization penalty derived from the variance of uniform point selection: $G(\Gamma) \propto \sum_{i=1}^M c_i^2$. For a uniform grid over $[-B, B]$, this sum diverges as $\mathcal{O}(M)$. Consequently, the uniform grid yields a theoretically vacuous MSE bound as $M$ grows. By optimizing the codebook against a uniform surrogate distribution $f_Y(y) = \frac{1}{2B}$, the optimizer is forced to cluster points near the origin, suppressing the $\mathcal{O}(M)$ penalty to an $\mathcal{O}(1)$ bound. This structural adaptation provides a strict theoretical improvement over the uniform grid, independent of the actual target distribution.

In contrast, the metric-aware mechanism utilizes identity decoding, which localizes the mechanism variance. The variance penalty evaluates as an expectation over the assignment probabilities: $\sum_{i=1}^M \pi_i V_i$. As shown by~\eqref{eq:Vmax-bound}, the local mechanism variance under the continuous Laplace is unconditionally bounded by $V_i \le V_{\max} \approx \mathcal{O}(\epsilon^{-2})$, independently of the codebook geometry and size. Also the assignment probabilities sum to exactly $1$, and so the global variance penalty is strictly $\mathcal{O}(1)$ with respect to $M$. The uniform grid does not diverge. Therefore, optimizing a uniform surrogate loss cannot yield an asymptotic scaling improvement over the uniform grid; it merely shifts bounded variance against spatial resolution, which strictly weakens the worst-case bound.

However, if the true data distribution $f_Y(y)$ is fixed and known, optimizing the true loss allows the codebook to systematically exploit the geometric concentration of the data. By leveraging high-resolution quantization theory, the optimizer allocates denser coordinates to high-probability regions, strictly reducing the spatial quantization error below the geometric limits of the uniform grid. We formalize this in the next theorem.
\medskip

\noindent{\bf Tight Bounds under the True Distribution.} We formalize this improvement by introducing the Panter-Dite functional, which quantifies the compressibility of a continuous density. Let the distribution shape factor be defined as:
\[ C_{f} = \frac{1}{4B^2} \left( \int_{-B}^B f_Y(y)^{1/3} \,dy \right)^3 \]
By Hölder's inequality, $C_{f} \le 1$ for all valid densities supported on $[-B, B]$, with equality holding if and only if $f_Y(y)$ is the uniform distribution.



\begin{theorem}\label{thm:loss-MA}
Let $x \in \R^d$ with $\|x\|_2 = 1$. Let $\rho = N/d > 1$ be the frame redundancy ratio, and $K = \mathcal{O}(1)$ be the Kashin representation constant establishing the continuous domain bound $B = \frac{K}{\sqrt{N}}$. Assume the true Kashin coordinates follow a known, strictly positive and continuous probability density $f_Y(y)$ supported on $[-B, B]$.

For any finite codebook size $M = 2^b \ge 2$, let $\Gamma^*$ be the optimal codebook that minimizes the loss $\mathcal{L}^*_{\text{MA-SSTQ}}(\Gamma)$. The expected bias and Mean Squared Error of the metric-aware SSTQ estimator are  bounded by:
\begin{align*}
\zeta^2 &:= \|\mathbb{E}[\hat{x}] - x\|_2^2 \le \frac{64 K^2}{\rho \epsilon^2} \\
\MSEMA &:= \mathbb{E}[\|\hat{x} - x\|_2^2] \le \frac{d K^2}{\rho}  \left(1+ \frac{2C_0^2}{(2^b - 1)^2} + \frac{16}{\epsilon} + \frac{256}{\epsilon^2} \right)-1
\end{align*}
where the spatial constant $C_0^2$ satisfies $C_0^2 \le \min\left(1, \; \frac{2}{3} C_f R_{M}\right)$. 
\end{theorem}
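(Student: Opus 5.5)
The plan is to reuse the proof of Theorem~\ref{thm:SSTQ-MA} essentially verbatim, with one change: the structural gap constant $C$ is replaced by an effective constant $C_0$ tied to the optimized codebook $\Gamma^*$.

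First, the bias bound. In the proof of Theorem~\ref{thm:SSTQ-MA}, $\zeta^2 \le 64K^2/(\rho\epsilon^2)$ comes from the boundary bias of the truncated Laplace density together with identity decoding. That bias satisfies $|\mu_i - c_i| = \cO(B/\epsilon)$ uniformly in $i$. Summed over the $1$-sparse reconstruction, this gives a bound that depends on neither the codebook geometry nor on $M$. The argument therefore carries over to $\Gamma^*$ unchanged, and I will simply cite it.

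Second, the MSE. I will rewrite the MSE decomposition from Appendix~\ref{app:SSTQ-analysis}:
\[
\MSEMA = d^2 \E[\tilde y_j^2]/N - 1 \quad\text{(up to the frame identity)}.
\]
Then I will split $\E[\tilde y_j^2]$ into three parts:
\begin{itemize}
\item the signal term $\E_Y[y^2] \le B^2$;
\item the mechanism terms $16/\epsilon + 256/\epsilon^2$, which come from $\sum_i \pi_i V_i \le V_{\max}$ and the cross term;
\item the spatial term $\sum_k \int W_k (y-c_k)(c_{k+1}-y) f_Y$.
\end{itemize}
This matches $\mathcal{L}^*_{\text{MA-SSTQ}}$ plus terms that do not depend on $\Gamma$. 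In Theorem~\ref{thm:SSTQ-MA} the spatial term was bounded by $2C^2B^2/(2^b-1)^2$ using $\Delta_{\max}$. Here I will instead define $C_0^2$ as the normalized spatial term evaluated at $\Gamma^*$:
\[
C_0^2 := \frac{(2^b-1)^2}{2B^2}\cdot(\text{spatial part of } \mathcal{L}^*_{\text{MA-SSTQ}}(\Gamma^*)),
\]
with the $\Gamma$-independent pieces handled as before. It then remains to show $C_0^2 \le \min(1, \tfrac{2}{3}C_f R_M)$.

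The bound $C_0^2 \le 1$ follows by optimality. Since $\Gamma^*$ minimizes $\mathcal{L}^*_{\text{MA-SSTQ}}$, its loss is at most that of the uniform grid, which has $\Delta_{\max} = 2B/(2^b-1)$, i.e.\ $C=1$. The $\sum_i\pi_iV_i$ part is bounded by $V_{\max}$ for every codebook. So comparing against the uniform grid and reusing the Theorem~\ref{thm:SSTQ-MA} bound with $C=1$ gives $C_0^2\le 1$.

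The bound $C_0^2 \le \tfrac{2}{3}C_fR_M$ comes from comparing $\Gamma^*$ with the companded (Panter--Dite) codebook instead. That codebook has point density proportional to $f_Y^{1/3}$, i.e.\ $c_k = F_\lambda^{-1}((k-1)/(M-1))$ with $\lambda \propto f_Y^{1/3}$. On a cell, $\int (y-c_k)(c_{k+1}-y) f_Y \approx f_Y(\xi_k)\Delta_k^3/6$, and $W_k \le 1$ holds because $\mu$ is a contraction. Summing with $\Delta_k \approx 1/((M-1)\lambda(\xi_k))$ and applying Hölder gives
\[
\frac{1}{6(M-1)^2}\Big(\int f_Y^{1/3}\Big)^3 = \frac{4B^2 C_f}{6(M-1)^2}.
\]
After normalizing this is $\tfrac{2}{3}C_f$ times a factor $R_M$. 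I take $R_M$ to be the ratio that absorbs the finite-$M$ Riemann-sum error and the $W_k$ weighting, with $R_M\to 1$ as $M\to\infty$ by continuity and strict positivity of $f_Y$. The same optimality argument then transfers this bound to $\Gamma^*$.

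The main obstacle is making this finite-$M$ companding estimate rigorous. That means controlling $W_k$, in particular showing $0\le W_k\le 1$ through monotonicity of $\mu_i$ in $i$ under the Laplace kernel, and showing that $R_M$ is well defined. I would first try to prove $\mu_{k+1}-\mu_k \le c_{k+1}-c_k$ by a coupling of shifted truncated Laplace samples. If that fails, I would define $R_M$ directly as the exact-to-asymptotic ratio so that the inequality holds by construction.
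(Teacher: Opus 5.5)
Your bias step and your overall plan match the paper: reuse the decomposition~\eqref{eq:MSE-MA}, then compare $\Gamma^*$ against a uniform grid and a companded codebook. The central comparison step, however, does not work as you set it up. You define $C_0^2$ as the normalized \emph{spatial part} of $\mathcal{L}^*_{\text{MA-SSTQ}}(\Gamma^*)$ and then transfer the competitors' bounds to it ``by optimality''. But $\Gamma^*$ minimizes the \emph{sum} $\mathrm{Sp}(\Gamma)+\sum_i\pi_iV_i$. A competitor $\Gamma'$ therefore only gives $\mathrm{Sp}(\Gamma^*)\le \mathrm{Sp}(\Gamma')+\sum_i\pi_i'V_i'-\sum_i\pi_i^*V_i^*$. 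The bound $\sum_i\pi_iV_i\le V_{\max}$ controls the added variance term but says nothing about how small the subtracted one is. You therefore lose an additive $V_{\max}=\cO(B^2/\epsilon^2)$, which after your normalization inflates $C_0^2$ by $\cO((2^b-1)^2/\epsilon^2)$. Nothing stops the optimizer from trading spatial error for lower mechanism variance. The paper never isolates the spatial part at $\Gamma^*$. It compares \emph{total} losses, $\mathcal{L}^*_{\text{MA-SSTQ}}(\Gamma^*)\le\mathcal{L}^*_{\text{MA-SSTQ}}(\Gamma')\le(\text{spatial bound for }\Gamma')+2V_{\max}$, and $C_0^2$ is whatever makes $\mathcal{L}^*_{\text{MA-SSTQ}}(\Gamma^*)\le \frac{2B^2C_0^2}{(2^b-1)^2}+\frac{256B^2}{\epsilon^2}$ hold. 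Since~\eqref{eq:MSE-MA} uses $d^2\mathcal{L}^*_{\text{MA-SSTQ}}$ only as a whole, that is all you need. The comparison also requires $\Gamma'$ to be feasible. This is why the paper notes that the metric-aware variant drops the zero-sum constraint: your companded codebook is not zero-sum for asymmetric $f_Y$.

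The obstacle you single out is self-inflicted. You never need $W_k$: the paper bounds the pointwise error as $e(y)\le 2\E[(y-v)^2]+2\E[(v-z)^2]= 2(y-c_k)(c_{k+1}-y)+2\bigl(p_k(y)V_k+p_{k+1}(y)V_{k+1}\bigr)$. The resulting factors of $2$ are exactly the $2C_0^2$ and the $256=2\cdot 128$ in the statement. Your attempt to show $0\le W_k\le 1$ would fail on the lower side anyway. For small $\epsilon$ the truncated Laplace is nearly flat on $[-B,B]$, so $\mu_k$ is almost independent of $k$ and $W_k\approx -1$.

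Likewise, $R_M$ is not a free fudge factor. The paper defines it as $\frac{1}{M-1}\sum_k \sup_{[c_k,c_{k+1}]}f_Y/\inf_{[c_k,c_{k+1}]}f_Y$ over the intervals of the equal-$f_Y^{1/3}$-mass proxy. Redefining it ``so that the inequality holds by construction'' would empty the statement. With the paper's definition the finite-$M$ estimate is exact and needs no Riemann sums or H\"older. Let $M_k,m_k$ be the sup and inf of $f_Y$ on the $k$-th interval and $S=\int_{-B}^B f_Y^{1/3}$. Then $m_k^{1/3}\Delta_k\le S/(M-1)$, hence $\int_{c_k}^{c_{k+1}}(y-c_k)(c_{k+1}-y)f_Y\,dy\le M_k\Delta_k^3/6\le \frac{S^3}{6(M-1)^3}\frac{M_k}{m_k}$. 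Summing with $S^3=4B^2C_f$ gives $\frac{B^2}{(M-1)^2}\cdot\frac23 C_fR_M$.

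Finally, your identity $\MSEMA=d^2\E[\tilde y_j^2]/N-1$ holds only for unbiased estimators. Under identity decoding you must keep the cross term $\frac{2d}{N}y^\top(dI_N-P)\mathbf r$ of~\eqref{eq:MSE-MA}, which is what produces the $16/\epsilon$ term.
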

Here, $R_{M} = \frac{1}{M-1} \sum_{k=1}^{M-1} \frac{\sup_{y \in [c_k, c_{k+1}]} f_Y(y)}{\inf_{y \in [c_k, c_{k+1}]} f_Y(y)} \ge 1$ is the average local oscillation ratio of the density across the intervals of the proxy codebook. Since $C_f < 1$ for any strictly non-uniform density, and $R_{M} \to 1$, as $b \to \infty$, hence for sufficiently high resolutions, $C_0^2 < 1$, guaranteeing strict  improvement in the MSE bound compared to the bound derived in Theorem~\ref{thm:SSTQ-MA} (Recall that there $C\ge1$).
\medskip

\noindent{\bf The Asymptotic Limit ($\epsilon \to \infty$) and the Panter-Dite Framework.} The benefits of codebook optimization formulation become most apparent when considering the limit as privacy constraints relax ($\epsilon \to \infty$). 

In this regime, the continuous Laplace density $f_{T|v}(t) \propto \exp(-\frac{\epsilon|t - v|}{4B})$ concentrates at a Dirac delta centered at $v$. Consequently, $T \to v$ almost surely, the quantization step maps $T$ to its nearest codeword deterministically. The localized variance vanishes ($V_k \to 0$), the structural drift collapses ($\beta_k \to 0$), and the spatial scale modifier $W_k \to 1$. Note that the initial stochastic interpolation step remains active even in this limit; for a non-codeword input $y_j$, the mechanism reduces to stochastic endpoint rounding followed by deterministic identity output, which is the classical dithered quantizer.

Substituting these limits, the loss simplifies to the classical spatial quantization distortion:
\[ \lim_{\epsilon \to \infty} \cL_{\text{MA-SSTQ}}(\Gamma) = \sum_{k=1}^{M-1} \int_{c_k}^{c_{k+1}} (y-c_k)(c_{k+1}-y) f_Y(y) \,dy \]

In the high-resolution limit ($M \to \infty$) of classical quantization theory, the discrete codebook intervals can be approximated by a continuous point density function $\lambda(y)$. The spatial sum converges to the continuous integral $\frac{1}{6(M-1)^2} \int_{-B}^B \frac{f_Y(y)}{\lambda(y)^2} \,dy$. 

The Panter-Dite framework (1951)~\cite{panter2006quantization} establishes that minimizing this spatial integral via Hölder's inequality strictly requires the continuous point density to satisfy:
\[ \lambda^*(y) = \frac{f_Y(y)^{1/3}}{\int_{-B}^B f_Y(z)^{1/3} \,dz} \]

Therefore, as local differential privacy constraints vanish, the optimal metric-aware codebook abandons variance regularization and allocates discrete coordinates exactly proportional to $f_Y(y)^{1/3}$. This formally bridges optimal differentially private randomization with the classical foundations of scalar source coding.


\section{Federated Learning Convergence Analysis}


We next consider the application of our proposed SSTQ algorithm to private federated learning. We adopt a standard federated learning setup and operate under common assumptions of non-convex optimization and bounded statistical variance, stated below. Consider $n$ clients, with $f_i(w)$ the local loss objective for client $i$, and $F(w) = \frac{1}{n} \sum_{i=1}^n f_i(w)$ the global objective.
\begin{assumption}[Optimization Conditions]
\label{ass:optimization}
Let $\cF_t = \sigma\!\bigl(w_0,\; \hat{g}_{i,s} : i \in [n],\, s < t\bigr)$ denote the
$\sigma$-algebra generated by the initial point and all transmitted gradients
prior to step $t$.  Since $w_t$ is a deterministic function of
$(w_0, \hat{g}_{i,0}, \ldots, \hat{g}_{i,t-1})_{i=1}^{n}$, the iterate $w_t$
is $\cF_t$-measurable.  Write $\E_t[\cdot] = \E[\cdot \mid \cF_t]$.
We assume:
\begin{enumerate}[label=(\roman*)]
    \item \textbf{Smoothness.}
    The global objective $F(w) = \frac{1}{n}\sum_{i=1}^{n} f_i(w)$ is $L$-smooth,
    i.e.\ $\norm{\nabla F(x) - \nabla F(y)} \le L\norm{x - y}$ for all $x, y$.
    \item \textbf{Stochastic gradient bounds.}
    Let $g_{i,t} = \nabla f_i(w_t, \xi_{i,t})$ be the stochastic minibatch gradient
    sampled by client~$i$ at step~$t$.  We assume bounded conditional variance
    and a bounded conditional second moment:
    \[
        \E_t\!\bigl[\norm{g_{i,t} - \nabla f_i(w_t)}^2\bigr] \le \sigma^2,
        \qquad
        \E_t\!\bigl[\norm{g_{i,t}}^2\bigr] \le G^2,
        \qquad \forall\, i,\, t.
    \]
    \item \textbf{Compression noise.}
    Let $\hat{g}_{i,t}$ be the (private quantized)  gradient transmitted by
    client~$i$.  We assume a bounded conditional compression error:
    \[
        \E_t\!\bigl[\norm{\hat{g}_{i,t} - g_{i,t}}^2\bigr] \le \omega\, G^2,
        \qquad \forall\, i,\, t.
    \]
    \item \textbf{Bounded bias (a.s.).}
    Define the global server estimator $\hat{g}_t = \frac{1}{n}\sum_{i=1}^{n}\hat{g}_{i,t}$
    and the conditional bias $b_t \coloneqq \E_t[\hat{g}_t] - \nabla F(w_t)$.
    We assume there exists $\zeta \ge 0$ such that
    \[
        \norm{b_t}^2 \le \zeta^2
        \qquad \text{almost surely, for all } t.
    \]
    \item \textbf{Conditional independence.}
    Conditioned on $\cF_t$, the client estimator errors
    $\{\hat{g}_{i,t} - \nabla f_i(w_t)\}_{i=1}^{n}$ are mutually independent.
    \item \textbf{Bounded below.}
    $F^* \coloneqq \inf_{w} F(w) > -\infty$.
    Denote $\Delta_0 \coloneqq F(w_0) - F^*$.
\end{enumerate}
\end{assumption}

%
Our next theorem is a general convergence result for distributed SGD under
Assumption~\ref{ass:optimization}.
\begin{theorem}[Non-Convex DSGD Convergence Rate]
\label{thm:convergence}
Under Assumption~\ref{ass:optimization}, consider the iterates
$w_{t+1} = w_t - \eta\,\hat{g}_t$ run for $T$ steps across $n$ clients.
Define $V \coloneqq \frac{2(\omega G^2 + \sigma^2)}{n} + \zeta^2$.
\medskip

\noindent\textbf{(Non-asymptotic bound.)}
Setting $\eta = \min\!\bigl\{\frac{1}{4L},\; \sqrt{\frac{\Delta_0}{L V T}}\bigr\}$
guarantees:
\begin{equation}\label{eq:nonasymptotic}
    \min_{0 \le t \le T-1} \E\!\bigl[\norm{\nabla F(w_t)}^2\bigr]
    \;\le\;
    \frac{16 L \Delta_0}{T} + \frac{8\sqrt{\Delta_0 L V}}{\sqrt{T}} + 2\zeta^2.
\end{equation}
\medskip

\noindent\textbf{(Asymptotic rate.)}
For $T \ge 16 L \Delta_0 / V$, the above simplifies to:
\begin{equation}\label{eq:asymptotic}
    \min_{0 \le t \le T-1} \E\!\bigl[\norm{\nabla F(w_t)}^2\bigr]
    \;\le\;
    \cO\!\left(\frac{\sqrt{\omega G^2 + \sigma^2}}{\sqrt{nT}} + \zeta^2\right),
\end{equation}
where $\cO(\cdot)$ hides dependence on the problem constants $\Delta_0$ and $L$.
\end{theorem}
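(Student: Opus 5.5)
The plan is the standard descent-lemma argument for biased stochastic gradients; the only care needed is in the second moment of $\hat g_t$. By $L$-smoothness (Assumption~\ref{ass:optimization}(i)), with $w_{t+1}=w_t-\eta\hat g_t$,
\[
F(w_{t+1}) \le F(w_t) - \eta\langle \nabla F(w_t),\hat g_t\rangle + \tfrac{L\eta^2}{2}\norm{\hat g_t}^2 .
\]
We take $\E_t$ and write $\E_t[\hat g_t]=\nabla F(w_t)+b_t$.

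For the linear term we use $-\langle a,a+b\rangle \le -\tfrac12\norm{a}^2+\tfrac12\norm{b}^2$, which gives
\[
-\eta\langle\nabla F(w_t),\E_t\hat g_t\rangle\le -\tfrac{\eta}{2}\norm{\nabla F(w_t)}^2+\tfrac{\eta}{2}\zeta^2 .
\]
For the quadratic term we decompose
\[
\E_t\norm{\hat g_t}^2=\norm{\E_t\hat g_t}^2+\E_t\norm{\hat g_t-\E_t\hat g_t}^2 .
\]
The mean part is at most $2\norm{\nabla F(w_t)}^2+2\zeta^2$. For the fluctuation part we invoke conditional independence (v): the centred client errors $\hat g_{i,t}-\E_t\hat g_{i,t}$ are independent given $\cF_t$, so the variance of their average is $\frac1{n^2}\sum_i$ of the individual variances. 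Each individual variance is bounded by
\[
\E_t\norm{\hat g_{i,t}-\nabla f_i(w_t)}^2\le 2\E_t\norm{\hat g_{i,t}-g_{i,t}}^2+2\E_t\norm{g_{i,t}-\nabla f_i(w_t)}^2\le 2(\omega G^2+\sigma^2),
\]
using (ii) and (iii). Hence the fluctuation is at most $2(\omega G^2+\sigma^2)/n$, and overall $\E_t\norm{\hat g_t}^2\le 2\norm{\nabla F(w_t)}^2 + 2V$, since $2\zeta^2+2(\omega G^2+\sigma^2)/n\le 2V$. This step is the main place where constants can slip. In particular, variance is the minimum over centres, so centring at $\nabla f_i$ instead of the exact conditional mean is legitimate.

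With $\eta\le 1/(4L)$ we have $L\eta^2\le \eta/4$, so the gradient coefficient is $-\tfrac{\eta}{2}+L\eta^2\le-\tfrac{\eta}{4}$. This gives
\[
\E_t F(w_{t+1})\le F(w_t)-\tfrac{\eta}{4}\norm{\nabla F(w_t)}^2+\tfrac{\eta}{2}\zeta^2+L\eta^2V .
\]
Taking full expectations, telescoping over $t=0,\dots,T-1$, using (vi), and bounding the minimum by the average yields
\[
\min_t\E\norm{\nabla F(w_t)}^2\le \frac{4\Delta_0}{\eta T}+4L\eta V+2\zeta^2 .
\]

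For the step-size cases, if $\eta=\sqrt{\Delta_0/(LVT)}$ then both terms equal $4\sqrt{\Delta_0LV/T}$, giving $8\sqrt{\Delta_0LV/T}$. If instead $\eta=1/(4L)$, then $\frac{1}{4L}\le\sqrt{\Delta_0/(LVT)}$, so $4L\eta V=V\le 4\sqrt{\Delta_0 LV/T}$, while the first term is $16L\Delta_0/T$. In both cases \eqref{eq:nonasymptotic} follows.

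For the asymptotic form, $T\ge16L\Delta_0/V$ means $16L\Delta_0/T\le\sqrt{16L\Delta_0 V/T}$, so the $1/T$ term is absorbed into the $\sqrt{\Delta_0LV/T}$ term. Finally we split $\sqrt V\le\sqrt{2(\omega G^2+\sigma^2)/n}+\zeta$. The stray $\zeta\sqrt{\Delta_0L/T}$ is at most $\tfrac12\zeta^2+\tfrac12\Delta_0L/T$ by AM–GM, and the last piece is dominated by the $1/\sqrt{nT}$ term for large $T$ (absorbed into the $\cO$, which hides $\Delta_0,L$). This gives \eqref{eq:asymptotic}.
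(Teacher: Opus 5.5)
Your proposal is correct and follows essentially the same route as the paper's proof: the $L$-smoothness descent inequality, Young's inequality on $\langle\nabla F(w_t),b_t\rangle$, the bound $\E_t\norm{\hat g_t}^2\le 2\norm{\nabla F(w_t)}^2+2V$ via conditional independence and assumptions (ii)--(iii), telescoping to $\frac{4\Delta_0}{\eta T}+4L\eta V+2\zeta^2$, and the same two-case step-size analysis. The one loose spot, which the paper shares, is absorbing the leftover $\Delta_0L/T$ ``for large $T$''. To make this uniform over the stated regime $T\ge 16L\Delta_0/V$, note that this condition gives $\Delta_0L/T\le V/16$, so either $\zeta^2\ge V/2$ and $\Delta_0L/T\le\zeta^2/8$, or $\zeta^2<V/2$ and $\Delta_0L/T\le\tfrac12\sqrt{\Delta_0L}\,\sqrt{\omega G^2+\sigma^2}/\sqrt{nT}$.
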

The formal proof, which utilizes the $L$-smooth descent lemma and controls the bias term via Young's inequality, is deferred to Appendix~\ref{app:fed}.
We next specialize this general convergence theorem to the two variants of our SSTQ framework. 
\begin{corollary}[Convergence under Flat Randomized Response]
\label{cor:convergence_flat}
Using SSTQ with Flat RR, the estimator is strictly unbiased ($\zeta = 0$) per Theorem \ref{thm:sstq_ldp_unbiased} and by Theorem \ref{thm:one_third_bound} (assuming $M \ge (4(e^\epsilon - 1))^{1/3} + 1$), the variance multiplier is bounded by $\omega = \cO(d \cdot 2^b / (\epsilon \wedge \epsilon^2))$. Hence, the convergence rate of distributed SGD simplifies to:
\begin{equation}
    \min_{0 \le t \le T-1} \E[\norm{\nabla F(w_t)}^2] \le \cO\Big(\frac{1}{T} + \frac{\sqrt{d \cdot 2^b \cdot (\epsilon \wedge \epsilon^2)^{-1} G^2+\sigma^2}}{\sqrt{nT}}\Big).
\end{equation}
\end{corollary}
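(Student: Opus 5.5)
The plan is to specialize Theorem~\ref{thm:convergence} by identifying the constants $\zeta$ and $\omega$ for SSTQ with Flat RR, and then substituting them into the asymptotic bound~\eqref{eq:asymptotic}.

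\emph{Bias.} By Theorem~\ref{thm:sstq_ldp_unbiased}, each client's decoded vector satisfies $\E[\hat{g}_{i,t}\mid g_{i,t}]=g_{i,t}$. I will condition on $\cF_t$ and use the tower property over the minibatch randomness $\xi_{i,t}$ to get $\E_t[\hat g_{i,t}]=\E_t[g_{i,t}]=\nabla f_i(w_t)$. This uses that stochastic gradients are unbiased, which I will take as implicit in the setup. Averaging over clients gives $b_t=0$ almost surely, so $\zeta=0$.

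\emph{Variance multiplier.} Theorem~\ref{thm:one_third_bound} is stated for unit vectors, so I first rescale. For $g\neq 0$, apply the pipeline to $g/\norm{g}$ and multiply the output by $\norm{g}$. (This requires the norm to be transmitted or bounded; I will treat it as known or clipped at $G$, as is customary.) The MSE then scales as $\norm{g}^2$, so
\[
\E\bigl[\norm{\hat g_{i,t}-g_{i,t}}^2 \,\big|\, g_{i,t}\bigr]\le \cO\Big(d\big(1+\tfrac{2^b}{\epsilon\wedge\epsilon^2}\big)\Big)\norm{g_{i,t}}^2 .
\]
Taking $\E_t$ and using $\E_t\norm{g_{i,t}}^2\le G^2$ yields $\omega=\cO\big(d(1+2^b/(\epsilon\wedge\epsilon^2))\big)$. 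This becomes $\cO(d\,2^b/(\epsilon\wedge\epsilon^2))$ once the additive $d$ is absorbed, which holds whenever $\epsilon\wedge\epsilon^2\lesssim 2^b$. That condition is guaranteed in the relevant regime by $M\ge(4(e^\epsilon-1))^{1/3}+1$, or otherwise one simply keeps the $1+$ term inside the $\cO$.

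\emph{Plugging in.} Conditional independence across clients holds because each client uses its own private coins for subsampling and RR, so Assumption~\ref{ass:optimization} is met. Substituting $\zeta=0$ and $\omega$ into $V=2(\omega G^2+\sigma^2)/n$ and into~\eqref{eq:nonasymptotic} gives $16L\Delta_0/T+8\sqrt{\Delta_0LV/T}$. This equals $\cO\big(1/T+\sqrt{d2^b(\epsilon\wedge\epsilon^2)^{-1}G^2+\sigma^2}/\sqrt{nT}\big)$, with the $\cO$ hiding $L$ and $\Delta_0$.

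The main obstacle is the rescaling step. The MSE theorems are stated only for $x\in\Sph^{d-1}$, and the rescaled estimator must remain unbiased with an MSE that is homogeneous of degree two in $\norm{g}$. I would verify this first from the linearity of Algorithm~\ref{alg:decode}.
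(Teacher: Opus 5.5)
Your proposal is correct and follows the paper's proof. The paper simply substitutes $\zeta=0$ (Theorem~\ref{thm:sstq_ldp_unbiased}) and $\omega=\cO(d\,2^b/(\epsilon\wedge\epsilon^2))$ (Theorem~\ref{thm:one_third_bound}) into the non-asymptotic bound of Theorem~\ref{thm:convergence}; your checks on unbiased minibatch gradients, private coins for (v), and absorbing the additive $d$ under the condition on $M$ are details the paper leaves implicit.

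The one step to fix is the rescaling. Dividing by $\norm{g}$ forces the server to learn $\norm{g}$, which breaks $\epsilon$-LDP and the bit budget. With a fixed public codebook, the error is also not homogeneous in $\norm{g}$, because the term $\frac{q}{(p-q)^2}\sum_i c_i^2$ does not shrink with $g$. Instead, rescale by the public clipping level $G$ and observe that the proof of Theorem~\ref{thm:one_third_bound} only uses $\norminf{y}\le B$. It therefore holds on the whole unit ball and gives $\E_t\norm{\hat g_{i,t}-g_{i,t}}^2\le\omega G^2$ directly.
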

For higher codebook bit-widths where the $\cO(2^b)$ penalty becomes a computational bottleneck, we switch the framework to the Metric-Aware Laplace mechanism with identity decoding. This trades absolute unbiasedness for smaller variance bound.

\begin{corollary}[Convergence under Metric-Aware Mechanism]
\label{cor:convergence_metric}
Using SSTQ with the Metric-Aware Laplace mechanism, by Theorem~\ref{thm:SSTQ-MA} the variance multiplier and the bias are bounded by $\omega = \cO(d(1 + \epsilon^{-1} + \epsilon^{-2}))$ and $\zeta^2 = \cO\left(G^2 \epsilon^{-2}\right)$. Hence, the convergence rate of distributed SGD simplifies to:
\begin{equation}
    \min_{0 \le t \le T-1} \E[\norm{\nabla F(w_t)}^2] \le \cO\left(\frac{1}{T} + \frac{\sqrt{G^2 d(1 + \epsilon^{-1} + \epsilon^{-2}) +\sigma^2}}{\sqrt{nT}} + G^2 \epsilon^{-2}\right).
\end{equation}
\end{corollary}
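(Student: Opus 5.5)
This corollary is a direct substitution into Theorem~\ref{thm:convergence}. It rests on two facts: the metric-aware SSTQ estimator applied to a client gradient satisfies items (iii) and (iv) of Assumption~\ref{ass:optimization} with explicit constants, and the remaining items (smoothness, bounded noise, lower bound) are unaffected by the mechanism. I will first transfer Theorem~\ref{thm:SSTQ-MA}, which is stated for $x\in\Sph^{d-1}$, to an arbitrary gradient $g_{i,t}$. For $g\neq 0$ the client encodes $g/\norm{g}$, and the server multiplies the decoded vector by $\norm{g}$; the norm is assumed available, or is replaced by the a priori scale $G$, and clipping to radius $G$ is the alternative. The construction is positively homogeneous once the Kashin bound $B$ is rescaled by $\norm{g}$. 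Therefore conditional on $g_{i,t}$ we get $\E\norm{\hat g_{i,t}-g_{i,t}}^2 \le \norm{g_{i,t}}^2\,\MSEMA$ and $\norm{\E[\hat g_{i,t}\mid g_{i,t}]-g_{i,t}}^2\le \norm{g_{i,t}}^2\cdot 64K^2/(\rho\epsilon^2)$.

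Next I take $\E_t$. Because the client's mechanism randomness (subsampled index, rounding, Laplace noise) is fresh and independent of $\cF_t$ and of the minibatch, the tower property together with $\E_t\norm{g_{i,t}}^2\le G^2$ gives $\omega = \MSEMA$. Treating $K,\rho$ as constants and $b$ as fixed, this is $\cO(d(1+\epsilon^{-1}+\epsilon^{-2}))$; I absorb the $2C^2/(2^b-1)^2\le 2C^2$ term into the constant $1$.

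The bias is the step I expect to need care. Here $b_t = \frac1n\sum_i(\E_t[\hat g_{i,t}]-\nabla f_i(w_t))$, and since $\E_t[g_{i,t}]=\nabla f_i(w_t)$ this equals $\frac1n\sum_i \E_t[\beta(g_{i,t})]$, where $\beta(g)=\E[\hat g\mid g]-g$. By Jensen and convexity of the squared norm,
\[
\norm{b_t}^2\le \frac1n\sum_i \E_t\norm{\beta(g_{i,t})}^2\le \frac{64K^2}{\rho\epsilon^2}\,\E_t\norm{g_{i,t}}^2 \le \frac{64K^2G^2}{\rho\epsilon^2}.
\]
Item (iv) requires this to hold almost surely. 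Since $\E_t\norm{g_{i,t}}^2\le G^2$ is itself an almost-sure statement about a conditional expectation, the bound holds a.s., and $\zeta^2=\cO(G^2\epsilon^{-2})$. Conditional independence (v) holds because clients use private coins and independent minibatches.

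Finally I plug $\omega$ and $\zeta^2$ into the asymptotic form~\eqref{eq:asymptotic}, using the non-asymptotic~\eqref{eq:nonasymptotic} to retain the $16L\Delta_0/T=\cO(1/T)$ term for general $T$. Then $\sqrt{\omega G^2+\sigma^2}=\sqrt{G^2 d(1+\epsilon^{-1}+\epsilon^{-2})+\sigma^2}$, and the $2\zeta^2$ term becomes $\cO(G^2\epsilon^{-2})$. One subtlety to check is that $\zeta^2$ also appears inside $V$ in the $\sqrt{V/T}$ term. That contribution is $\sqrt{\zeta^2}/\sqrt T\le \tfrac12(\zeta^2+1/T)$ by AM--GM, so it is absorbed into the displayed terms, which yields the stated rate.
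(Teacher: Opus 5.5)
Your proposal is correct and takes the same route as the paper. The paper's proof is just the substitution of $\omega=\cO(d(1+\epsilon^{-1}+\epsilon^{-2}))$ and $\zeta^2=\cO(G^2\epsilon^{-2})$ from Theorem~\ref{thm:SSTQ-MA} into the non-asymptotic bound of Theorem~\ref{thm:convergence}. Your rescaling argument, the Jensen bound on $b_t$, and the AM--GM absorption of the $\zeta/\sqrt{T}$ term only make explicit what the paper leaves implicit; the last of these already appears inside the paper's proof of Theorem~\ref{thm:convergence}.

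One caution concerns the transfer to general gradients. Letting the server learn $\norm{g}$ would break $\epsilon$-LDP and the $\lceil\log_2 N\rceil+b$ bit budget. The version to analyze is therefore the fixed-scale mechanism with $B=KG/\sqrt{N}$, applied to gradients with $\norm{g_{i,t}}\le G$ almost surely, as in the paper's clipped experiments. For that mechanism the same bounds follow directly with $\norm{g}^2$ replaced by $G^2$.
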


\section{Neural Network Experiments}\label{sec:nn-experiments}

We evaluate the practical performance of the LDP mechanisms in a federated learning setting with neural network models on two standard image classification benchmarks: Fashion-MNIST and CIFAR-10.

\paragraph{Experimental Setup.}
We train a fully connected neural network with a single hidden layer of 32~units and ReLU activations.
For Fashion-MNIST (10 classes, $784$-dimensional inputs), the model has $d = 25{,}450$ parameters; for CIFAR-10 (10 classes, $3{,}072$-dimensional inputs), the model has $d = 98{,}666$ parameters.
For all Kashin-based methods (SQKR, SSTQ), we construct the frame using a randomized partial Hadamard transform.
We simulate a federated setting with $W = 100$ workers, each holding a disjoint partition of the training data.
At each of the $T = 100$ communication rounds, every worker computes a stochastic gradient on a mini-batch of size~$64$, clips it to $\|g\|_2 \le C = 0.1$, applies the LDP mechanism, and transmits the privatized message to the server.
The server aggregates the received messages, applies a gradient norm cap of~$10.0$ for stability, and updates the global model with a constant learning rate ($\eta = 0.2$ for Fashion-MNIST, $\eta = 0.5$ for CIFAR-10).
All methods operate under pure $\varepsilon$-LDP with per-round $\varepsilon = 3$ and bit budget $b = 4$.

\begin{remark}[Effect of server-side clipping]
Server-side gradient clipping at threshold $R$ is a nonlinear operation: $\E[C_R(\hat{g})] \ne C_R(\E[\hat{g}])$ in general, so clipping introduces a small bias even when the underlying estimator is unbiased. In the convergence framework of Theorem~\ref{thm:convergence}, this clipping-induced bias is absorbed into the $\zeta^2$ term. In our experiments, the client-side clipping at $C = 0.1$ combined with the frame scaling $B = K/\sqrt{N}$ ensures that the reconstructed one-coordinate estimator has bounded norm, and the server-side cap of $10.0$ activates infrequently. The theoretical convergence rates in Corollaries~\ref{cor:convergence_flat} and~\ref{cor:convergence_metric} hold exactly for the unclipped algorithm; the clipped variant converges to a neighborhood that includes the additional clipping bias.
\end{remark}

\begin{remark}[Per-round vs.\ cumulative privacy budget]\label{rem:composition}
All reported privacy parameters are \emph{per-round} guarantees: each client's message in each round satisfies $\varepsilon$-LDP independently. Under basic composition with $T$~rounds of full participation, the cumulative budget is $\varepsilon_{\mathrm{total}} = T\varepsilon$. We report per-round $\varepsilon$ (i.e., message-level privacy) because it isolates the mechanism's intrinsic privacy--utility trade-off and is the standard metric across all baselines compared (vqSGD, SQKR, PrivUnit). In practice, client subsampling (where each client participates in only a fraction of rounds) provides privacy amplification by subsampling~\cite{balle2018privacy}, and advanced composition~\cite{kairouz2015composition} yields $\varepsilon_{\mathrm{total}} = \mathcal{O}(\varepsilon\sqrt{T\log(1/\delta)})$ under $(\varepsilon,\delta)$-DP. These orthogonal techniques apply equally to all mechanisms and are independent of the quantization scheme.
\end{remark}

To ensure a fair comparison, we use a common random seed for mini-batch selection across all methods, so that each method trains on exactly the same sequence of data batches.
The only source of randomness that differs across methods is the LDP noise.
Test accuracy is evaluated on the full held-out test set ($10{,}000$ images) at each round.

\paragraph{Results.}
Figure~\ref{fig:nn-experiments} presents the training loss, test accuracy, and accuracy-vs-communication-cost tradeoff for both datasets.
The communication cost per client per round is summarized in Table~\ref{tab:bits}.

\begin{figure}[ht]
  \centering
  \includegraphics[width=\textwidth]{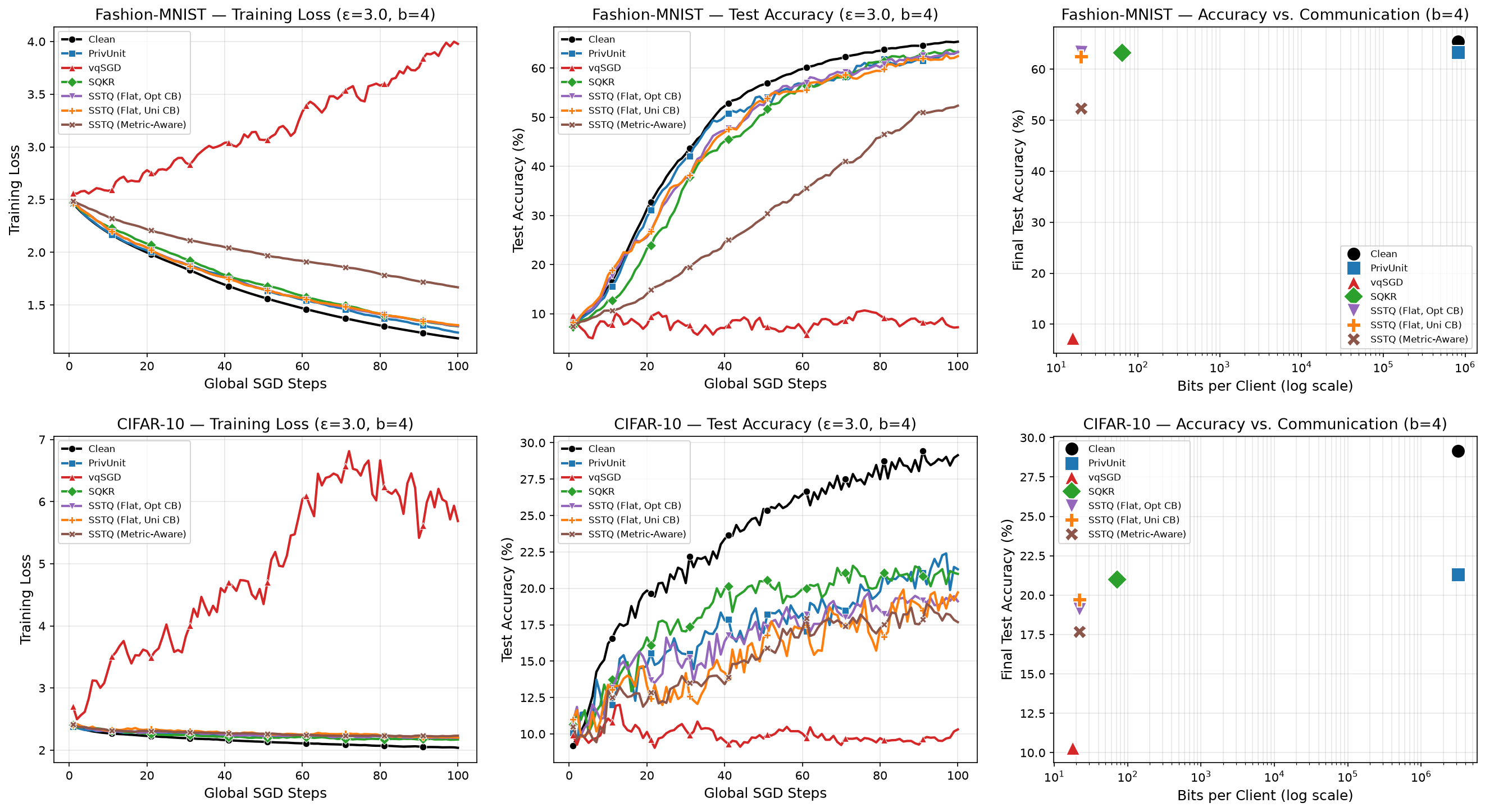}
  \caption{Neural network experiments under $\varepsilon$-LDP ($\varepsilon=3$, $b=4$, $W=100$ workers).
  \textbf{Top row:} Fashion-MNIST ($d=25{,}450$). \textbf{Bottom row:} CIFAR-10 ($d=98{,}666$).
  \textbf{Left:} Training cross-entropy loss vs.\ SGD steps.
  \textbf{Center:} Test accuracy vs.\ SGD steps.
  \textbf{Right:} Final test accuracy vs.\ bits transmitted per client (log scale).}
  \label{fig:nn-experiments}
\end{figure}

\begin{table}[ht]
  \centering
  \caption{Communication cost per client per round ($b=4$, $\varepsilon=3$).}
  \label{tab:bits}
  \begin{tabular}{lccc}
    \toprule
    \textbf{Method} & \textbf{Formula} & \textbf{Fashion-MNIST} & \textbf{CIFAR-10} \\
    \midrule
    Clean / PrivUnit & $32d$ & $814{,}400$ & $3{,}157{,}312$ \\
    vqSGD & $\lceil\log_2 d\rceil+1$ & $16$ & $18$ \\
    SQKR & $k(\lceil\log_2 d\rceil + 1)$ & $64$ & $72$ \\
    SSTQ (all variants) & $\lceil\log_2 N\rceil + b$ & $20$ & $22$ \\
    \bottomrule
  \end{tabular}
\end{table}

\emph{Fashion-MNIST.}
On the training loss (Figure~\ref{fig:nn-experiments}, top-left), Clean SGD and PrivUnit decrease steadily, reaching final losses of approximately $1.2$ and $1.25$, respectively.
SQKR and SSTQ (Flat-RR) converge at a slightly slower rate, settling around $1.3$, which reflects the additional variance from quantization and privacy noise.
SSTQ (Metric-Aware) converges more slowly (final loss $\approx 1.7$), consistent with the small bias $\zeta^2$ from the identity decoder (Corollary~\ref{cor:convergence_metric}).
vqSGD exhibits \emph{divergent} training loss, increasing from $2.5$ to $\approx 4.0$, confirming that its $O(d^3/\varepsilon^2)$ variance scaling drastically degrades its performance at  $d = 25{,}450$.

On test accuracy (Figure~\ref{fig:nn-experiments}, top-center), Clean SGD reaches $65.4\%$ and PrivUnit closely tracks at $63.3\%$.
Among quantized methods, SQKR achieves $63.2\%$ using $64$~bits per client, while SSTQ (Flat-RR, optimized codebook) achieves $63.3\%$ and SSTQ (Flat-RR, uniform codebook) reaches $62.4\%$---both using only $20$~bits per client, a $3.2\times$ reduction compared to SQKR.
Notably, the SSTQ Flat variants and SQKR now achieve nearly the same accuracy as PrivUnit, with the gap between all three quantized methods within $1\%$.
vqSGD fails to learn ($7.2\%$, near random chance).

We note that SSTQ (Metric-Aware) underperforms SSTQ (Flat-RR) due to its bias. However, its advantage becomes apparent at larger bit widths $(b)$, as discussed in Appendix~\ref{app:add-exp} and demonstrated in Figure~\ref{fig:sgd-b4-vs-b8} therein.

\emph{CIFAR-10.}
The higher dimensionality ($d = 98{,}666$) amplifies the variance effects.
On training loss (Figure~\ref{fig:nn-experiments}, bottom-left), Clean SGD and PrivUnit steadily decrease, while SQKR and SSTQ variants remain relatively flat around $2.2$.
vqSGD again diverges dramatically, with training loss rising to $\approx 5.7$.
On test accuracy (Figure~\ref{fig:nn-experiments}, bottom-center), Clean SGD reaches $29.1\%$ and PrivUnit $21.3\%$.
Among quantized methods, SQKR achieves $21.0\%$ (72 bits), followed by SSTQ (Flat-RR, uniform codebook) at $19.7\%$, SSTQ (Flat-RR, optimized codebook) at $19.1\%$ (both using 22 bits), and SSTQ (Metric-Aware) at $17.7\%$.
vqSGD reaches $10.3\%$ (18 bits).

\paragraph{Accuracy vs.\ Communication Cost.}
The rightmost panels of Figure~\ref{fig:nn-experiments} present the accuracy-vs-bits tradeoff (see also Table~\ref{tab:bits}).
SSTQ achieves accuracy comparable to SQKR while requiring substantially fewer bits per client.
On Fashion-MNIST, SSTQ (Flat-RR, optimized codebook) achieves $63.3\%$---essentially matching SQKR's $63.2\%$---while using $3.2\times$ fewer bits ($20$ vs.\ $64$ bits).
On CIFAR-10, the gap is similarly small: SSTQ (Flat-RR, optimized codebook) reaches $19.1\%$ compared to SQKR's $21.0\%$, a difference of only $2\%$, while using $3.3\times$ fewer bits ($22$ vs.\ $72$ bits).
This communication advantage arises because SQKR transmits $k = \min(\lceil 2\varepsilon \rceil, b_0)$ coordinates, each requiring $\lceil\log_2 d\rceil + 1$ bits to encode, whereas SSTQ transmits a single coordinate index plus a $b$-bit quantization level, yielding a total of $\lceil\log_2 N\rceil + b$ bits.

vqSGD uses the fewest bits ($16$--$18$) but fails to converge, confirming that its cubic variance scaling renders it impractical in high dimensions.
PrivUnit achieves the best accuracy among private methods but requires full-precision transmission ($>$800K bits), making it impractical for communication-constrained settings. It is worth noting that the $32d$ cost reported in Table~\ref{tab:bits} reflects float32 encoding. Using lower precision (e.g., $16d$ or $8d$) reduces the constant but does not change the $\Theta(d)$ scaling, which is inherent to any mechanism that outputs a $d$-dimensional vector.

\paragraph{Computational Cost.}
All Kashin-based methods (SQKR, SSTQ) share the same computational bottleneck: the iterative Kashin representation, which involves repeated applications of the randomized partial Hadamard transform on a vector of dimension $N = 2^{\lceil\log_2(2.5\,d)\rceil}$, costing $O(N \log N)$ per client per round.
In our experiments, each SGD step takes approximately $18$\,s on Fashion-MNIST and $65$\,s on CIFAR-10 (single CPU), with the Kashin transform accounting for over $99\%$ of the per-worker computation.
The remaining operations (quantization, RR or Laplace sampling) are $O(M)$ for SSTQ and $O(N)$ for SQKR. The server-side reconstruction requires applying a single column of the frame transform, costing $O(d)$ per client, which is negligible compared to the Kashin encoding.

In contrast, vqSGD avoids the Kashin representation entirely: it operates directly on the gradient vector using simplex sampling, costing only $O(d)$ per client.
This makes vqSGD significantly faster per step (approximately $5$--$10\times$ compared to Kashin-based methods), but as demonstrated above, this computational advantage is negated by its prohibitive variance scaling in high dimensions.

PrivUnit also avoids Kashin representations ($O(d)$ per client) but requires transmitting full-dimensional vectors, making it communication-efficient only in a per-dimension sense.

\bigskip

\textbf{Conclusion and Limitations.}  Our proposed SSTQ framework provides a principled, communication-efficient framework for high-dimensional mean estimation that maintains pure LDP guarantees while reclaiming optimal geometric scaling. By unifying discrete quantization with adaptive, privacy-aware codebook optimization, it bypasses the variance penalties in  quantization with fixed geometrical structures. However, our approach assumes a predetermined communication budget and a known data range; future work should address practical extensions to highly non-stationary data streams and the design of robust, range-agnostic adaptive mechanisms to further improve real-world deployment flexibility.
\bigskip


\bibliographystyle{ieeetr}
\bibliography{ref}

\newpage
\appendix
\addcontentsline{toc}{section}{Appendices}

\section{Curse of Dimensionality in Fixed-Geometry LDP}\label{app:curse}
\label{sec:curse}

Previous studies in geometric vector quantization, most notably vqSGD \citep{gandikota2021vqsgd}, have claimed that quantizing vectors to a cross-polytope and applying Randomized Response yields a variance of \(\mathcal{O}(d^2/\epsilon^2)\) for the proposed unbiased estimator. We challenge this assertion and demonstrate that this formulation actually incurs a variance of \(\Theta(d^3/\epsilon^2)\). This fundamental scaling issue critically impairs the performance of high-dimensional gradient embeddings in privacy-constrained settings.
\medskip

\noindent\textbf{Formalization of vqSGD and Cross-Polytope LDP.}
Geometric vector quantization methods, such as vqSGD, typically map continuous vectors $x \in \Sph^{d-1}$ to the vertices of a discrete geometric object. A canonical choice is the cross-polytope codebook, defined as:
\begin{equation}
    \mathcal{C}_{cp} = \{ \pm \sqrt{d} e_i \}_{i=1}^d,
\end{equation}
where $e_i$ is the $i$-th standard basis vector in $\R^d$. Note that the total size of this discrete domain is exactly $M = 2d$ vertices.

To achieve pure $\epsilon$-Local Differential Privacy, vqSGD applies a Flat Randomized Response (RR) to a selected vertex $v \in \mathcal{C}_{cp}$. The mechanism outputs a noisy vertex $z \in \mathcal{C}_{cp}$ such that the transition probabilities are:
\begin{align}
    p &= \Pr(z = v \mid v) = \frac{e^\epsilon}{e^\epsilon + 2d - 1}, \label{eq:p} \\
    q &= \Pr(z = c \neq v \mid v) = \frac{1}{e^\epsilon + 2d - 1}. \label{eq:q}
\end{align}
Using the fact that the sum of vertices is zero, the server debiases the output $z$ using a simple scaling factor of $\frac{1}{p-q}$ and outputs the estimator $\hat{x} = \frac{z}{p-q}$.
\medskip

\noindent\textbf{The High-Dimensional Variance Growth.}
Many analyses conceptually treat the unbiasing scalar $\frac{1}{(p-q)^2}$ as an $\cO(1/\epsilon^2)$ constant when evaluating the expected Mean Squared Error (MSE). We explicitly show that because the codebook size $M = 2d$ is intertwined with the ambient dimension, the probability gap $p-q$ shrinks as $\cO(1/d)$.

\begin{theorem}[Strict Lower Bound of the Dimensionality Curse]
\label{thm:curse_proof}
For any input vector $x \in \Sph^{d-1}$ quantized via the Cross-Polytope $\mathcal{C}_{cp}$ and perturbed via Flat Randomized Response for pure $\epsilon$-LDP, the exact Mean Squared Error (variance) of the unbiased estimator scales asymptotically as:
\begin{equation}
    \text{MSE}_{\text{vqSGD}} = \Theta\left(\frac{d^3}{\epsilon^2}\right).
\end{equation}
\end{theorem}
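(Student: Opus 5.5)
We compute the estimator's second moment exactly and then read off its order in $d$ and $\epsilon$. The encoding is the vqSGD cross-polytope scheme. First $x$ is stochastically mapped to a vertex $v\in\mathcal{C}_{cp}$ with $\E[v]=x$; for example $v=\mathrm{sign}(x_i)\sqrt d\, e_i$ with probability proportional to $|x_i|$, mixed with an antipodal pair to absorb the slack. Then $z$ is obtained from $v$ by Flat RR with the $p,q$ of \eqref{eq:p}--\eqref{eq:q}, and the server outputs $\hat x = z/(p-q)$.

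\textbf{Unbiasedness.} Since $\sum_{c\in\mathcal{C}_{cp}} c = 0$, we have $\E[z\mid v] = p v + q\sum_{c\ne v} c = (p-q)v$. Hence $\E[\hat x]=\E[v]=x$, which justifies calling the quantity of interest a variance.

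\textbf{Exact MSE.} Every vertex has $\norm{c}^2=d$, so $\norm{z}^2=d$ almost surely, independently of $x$ and of the quantization randomness. Therefore
\[
\text{MSE}_{\text{vqSGD}} = \E\norm{\hat x}^2 - \norm{x}^2 = \frac{d}{(p-q)^2} - 1 .
\]
With $M=2d$ we get $p-q = \frac{e^\epsilon-1}{e^\epsilon+2d-1}$, which gives the closed form
\[
\text{MSE}_{\text{vqSGD}} = \frac{d\,(e^\epsilon+2d-1)^2}{(e^\epsilon-1)^2} - 1 .
\]
The point worth stressing is that the mechanism's randomness enters only through the scalar $1/(p-q)^2$. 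No cancellation can reduce the error, so the lower bound is just as exact as the upper bound.

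\textbf{Asymptotics.} We take $\epsilon=O(1)$, or more generally $e^\epsilon=O(d)$, which is the regime in which the $\Theta(d^3/\epsilon^2)$ claim is meant to hold.
\begin{itemize}
\item For $\epsilon\in(0,1]$ we have $\epsilon\le e^\epsilon-1\le (e-1)\epsilon$, and $2d\le e^\epsilon+2d-1\le 2d+2$. Together these give $\text{MSE}\asymp d\cdot d^2/\epsilon^2 = \Theta(d^3/\epsilon^2)$.
\item For larger constant $\epsilon$ the same sandwich gives $\Theta(d^3)$, with constants depending on $\epsilon$.
\item The subtracted $1$ is negligible because $d(2d)^2/(e^\epsilon-1)^2\ge 4d^3/(e^\epsilon-1)^2\gg 1$ in this regime.
\end{itemize}

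The only delicate step is stating the regime precisely, not the algebra. If $e^\epsilon\gg d$, then $p-q\to 1$ and the MSE degrades to $\Theta(d)$. The theorem should therefore be read with $\epsilon$ bounded, or $e^\epsilon\lesssim d$, and we would make this hypothesis explicit when writing out the constants.
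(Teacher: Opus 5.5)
Your proposal is correct and follows essentially the same route as the paper: unbiasedness from the zero-sum vertex set, then $\|z\|_2^2=d$ almost surely, which gives the exact identity $\text{MSE}_{\text{vqSGD}}=d/(p-q)^2-1$ with $p-q=(e^\epsilon-1)/(e^\epsilon+2d-1)$, and finally isolating the dominant $4d^3/(e^\epsilon-1)^2$ term for bounded $\epsilon$. Your explicit sandwich for $\epsilon\in(0,1]$ and your regime caveat sharpen the paper's informal appeal to $e^\epsilon-1\approx\epsilon$, but in your broader regime $e^\epsilon=O(d)$ with $\epsilon\to\infty$ the exact rate is $\Theta(d^3/(e^\epsilon-1)^2)$, which agrees with $\Theta(d^3/\epsilon^2)$ only when $\epsilon=O(1)$.
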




Before proceeding to the proof, we provide a remark on alternative polytopes, noting that they are subject to analogous limitations.

\begin{remark}[Generalization to Alternative Polytopes]
While vqSGD and related geometric approaches occasionally employ alternative discrete hulls—such as orthoplexes or random spherical simplices (where $M = d+1$)—these isomorphic geometries fundamentally share an $\mathcal{O}(d)$ vertex count. Consequently, their flat Randomized Response probability gaps inherently shrink as $\mathcal{O}(1/d)$, unavoidably triggering the same $\Theta(d^3/\epsilon^2)$ variance curse. Conversely, mapping to dense uniform hypercubes bypasses this geometric penalty but necessitates transmitting $\mathcal{O}(d)$ bits per client, which fails to satisfy the fundamental requirements of federated communication compression.
\end{remark}
%
\begin{proof}[Proof of Theorem~\ref{thm:curse_proof}]
Let $x \in \Sph^{d-1}$ be an input unit vector. Assume the client stochastically quantizes $x$ to a true vertex $v$ in the cross-polytope codebook $\mathcal{C}_{cp} = \{ \pm \sqrt{d} e_i \}_{i=1}^d$ such that $\E[v \mid x] = x$. The pure $\epsilon$-LDP Flat Randomized Response mechanism then outputs a noisy token $z \in \mathcal{C}_{cp}$ given $v$. The server constructs the unbiased estimator $\hat{x} = \frac{z}{p-q}$. 

To compute the Mean Square Error (MSE), we use the unbiasedness property of the estimator to write $\E[\hat{x}] = \E_v[ \E_{z \mid v}[\frac{z}{p-q} \mid v] \mid x] = \E[v] = x$, and therefore:
\begin{align}
    \text{MSE}_{\text{vqSGD}} &= \E[\norm{\hat{x}- x}^2] \nonumber \\
    &= \E[\norm{\hat{x}}^2] - 2\E[\langle \hat{x}, x \rangle] + \norm{x}^2 \nonumber \\
    &= \E[\norm{\hat{x}}^2] - 2\langle x, x \rangle + \norm{x}^2 \nonumber \\
    &= \E[\norm{\hat{x}}^2] - \norm{x}^2 \nonumber \\
    &= \E\left[\norm{\frac{z}{p-q}}^2\right] - 1 \nonumber \\
    &= \frac{1}{(p-q)^2} \E[\norm{z}^2] - 1. \label{eq:mse_expansion}
\end{align}

By definition of the cross-polytope codebook $\mathcal{C}_{cp}$, every  vertex $z \in \mathcal{C}_{cp}$ takes the form $\pm \sqrt{d} e_i$, and so $\norm{z}^2= d$.

Next, we evaluate the unbiasing scalar. The cross-polytope $\mathcal{C}_{cp}$ consists of exactly $M = 2d$ discrete vertices. Under Flat Randomized Response for pure $\epsilon$-LDP over $M$ elements, the transition probabilities are defined as:
\begin{equation}
    p = \frac{e^\epsilon}{e^\epsilon + 2d - 1}, \quad \text{and} \quad q = \frac{1}{e^\epsilon + 2d - 1}.
\end{equation}
The probability gap evaluates to:
\begin{equation}
    p - q = \frac{e^\epsilon - 1}{e^\epsilon + 2d - 1}.
\end{equation}
Substituting this multiplier and $\E[\norm{z}^2] = d$ directly back into Equation~\eqref{eq:mse_expansion}:
\begin{equation}
    \text{MSE}_{\text{vqSGD}} = \frac{d}{(p-q)^2} - 1 = d \left( \frac{e^\epsilon + 2d - 1}{e^\epsilon - 1} \right)^2 - 1.
\end{equation}

We expand the  numerator by writing $(e^\epsilon + 2d - 1)^2$ as $((e^\epsilon - 1) + 2d)^2$:
\begin{align*}
    \text{MSE}_{\text{vqSGD}} &= \frac{d}{(e^\epsilon - 1)^2} \Big( (e^\epsilon - 1)^2 + 4d(e^\epsilon - 1) + 4d^2 \Big) - 1\\
 &= d \left( 1 + \frac{4d}{e^\epsilon - 1} + \frac{4d^2}{(e^\epsilon - 1)^2} \right) - 1 \nonumber \\
    &= d - 1 + \frac{4d^2}{e^\epsilon - 1} + \frac{4d^3}{(e^\epsilon - 1)^2}.
\end{align*}

As $d \to \infty$, the cubic term $\frac{4d^3}{(e^\epsilon - 1)^2}$ dominates the error expression. For any fixed privacy budget $\epsilon > 0$, the denominator $(e^\epsilon - 1)^2$ is a strictly positive constant. Utilizing the Taylor series expansion for $\epsilon \le 1$, we observe that $e^\epsilon - 1 \approx \epsilon$, so $(e^\epsilon - 1)^2 = \Theta(\epsilon^2)$. Therefore, the variance scales as:
\begin{equation}
    \text{MSE}_{\text{vqSGD}} = \Theta\left(\frac{d^3}{\epsilon^2}\right).
\end{equation}
 This completes the proof.
\end{proof}

\section{Proofs for the SSTQ Mechanism}\label{app:SSTQ-1}
This appendix provides the formal mathematical proofs for the theoretical guarantees of the Subsampled Stochastic TurboQuant (SSTQ) architecture with Flat randomization response introduced in Section~\ref{sec:SSTQ-FR}. We first prove that SSTQ  satisfies pure $\epsilon$-Local Differential Privacy and guarantees an unbiased gradient estimator. We then establish a bound on the conditional variance of the 1D randomized response mechanism, by which we prove that SSTQ achieves the information-theoretic optimal $\cO(d/\epsilon^2)$ Mean Squared Error.

\subsection{Proof of Strict $\epsilon$-LDP and Unbiasedness}

\begin{proof}[Proof of Theorem \ref{thm:sstq_ldp_unbiased}]
We establish the privacy and utility guarantees in two distinct parts.

\textbf{Part 1: Strict pure $\epsilon$-LDP.}
The SSTQ client encoding algorithm transmits a tuple $(j, z)$, where $j$ is the subsampled coordinate index and $z \in \Gamma$ is the noisy stochastically quantized token. 
First, the subsampling index $j \sim \text{Unif}(1, N)$ is generated uniformly at random, totally independent of the client's private data $x$. Because the distribution of $j$ does not depend on $x$, its transmission consumes exactly $\epsilon = 0$ privacy budget. 

Second, the stochastic quantization step maps the continuous Kashin coordinate $y_j$ into a finite, discrete codebook $\Gamma = \{c_1, \dots, c_M\}$ bounded in $[-B, B]$, yielding a true token $v \in \Gamma$. This step is a local data-independent mapping applied to the already bounded coordinate.

Third, the algorithm applies Flat Randomized Response over the constant domain $M = 2^b$ to produce the noisy token $z$. By the definition of the mechanism, the transition probabilities are $p = \frac{e^\epsilon}{e^\epsilon + M - 1}$ if $z = v$, and $q = \frac{1}{e^\epsilon + M - 1}$ if $z \neq v$. For any two possible true tokens $v, v' \in \Gamma$ and any output $z \in \Gamma$, the maximum probability ratio is strictly bounded by:
\begin{equation}
    \frac{\Pr(z \mid v)}{\Pr(z \mid v')} \le \frac{p}{q} = \frac{\frac{e^\epsilon}{e^\epsilon + M - 1}}{\frac{1}{e^\epsilon + M - 1}} = e^\epsilon.
\end{equation}
Because the maximum probability ratio between any two inputs is exactly $e^\epsilon$, the mechanism satisfies pure $\epsilon$-LDP  on the client device, without requiring a trusted shuffler or shared public coin.

\textbf{Part 2: Unbiasedness.}
We trace the expectations backwards from the server's estimator $\hat{x}$ to the true vector $x$.
First, the server receives $z$ and constructs the 1D debiased scalar $\tilde{y}_j = \frac{z}{p-q}$. Given the  zero-mean codebook constraint ($\sum_{i=1}^M c_i = 0$), the expected value of $z$ given the true token $v$ is:
\begin{equation}
    \E[z \mid v] = p v + q \sum_{c_i \neq v} c_i = p v + q \left( \sum_{i=1}^M c_i - v \right) = p v + q (0 - v) = (p - q)v.
\end{equation}
Thus, the debiased scalar yields $\E[\tilde{y}_j \mid v] = \E\left[\frac{z}{p-q} \;\middle|\; v\right] = v$.

Second, the true token $v$ is generated via exact linear stochastic interpolation of the continuous coordinate $y_j \in [c_k, c_{k+1}]$. The expected value is:
\begin{equation}
    \E[v \mid y_j] = c_{k+1} \Pr(v = c_{k+1}) + c_k \Pr(v = c_k) = c_{k+1} \left( \frac{y_j - c_k}{c_{k+1} - c_k} \right) + c_k \left( \frac{c_{k+1} - y_j}{c_{k+1} - c_k} \right) = y_j.
\end{equation}

Third, taking the expectation over the uniform oblivious subsampling index $j \sim \text{Unif}(1, N)$, the server's reconstructed 1-sparse vector $\tilde{y} = N \cdot \tilde{y}_j \cdot e_j$ yields:
\begin{equation}
    \E_j[\tilde{y}] = \sum_{j=1}^N \Pr(j) (N \cdot y_j \cdot e_j) = \sum_{j=1}^N \frac{1}{N} (N \cdot y_j \cdot e_j) = \sum_{j=1}^N y_j e_j = y.
\end{equation}

Finally, the server projects $\tilde{y}$ back into the ambient space via the Equal-Norm Tight Frame $U$. Because the Kashin representation  guarantees exact reconstruction $\frac{d}{N} U^T y = x$ (Theorem \ref{thm:kashin_rep}), applying linearity of expectation yields:
\begin{equation}
    \E[\hat{x}] = \E\left[\frac{d}{N} U^T \tilde{y}\right] = \frac{d}{N} U^T \E[\tilde{y}] = \frac{d}{N} U^T y = x.
\end{equation}
The global estimator is therefore strictly unbiased.
\end{proof}

\subsection{MSE bound for SSTQ (Flat-RR)}

We begin by proving Lemma~\ref{lem:trace} on ENTF trace property.
\begin{proof}[Proof of Lemma \ref{lem:trace}]
Consider the trace of the matrix $U U^T \in \R^{N \times N}$. By the cyclic permutation property of the trace operator:
\begin{equation}
    \text{Tr}(U U^T) = \text{Tr}(U^T U).
\end{equation}
We are given $U^T U = \frac{N}{d} I_d$. Therefore:
\begin{equation}
    \text{Tr}(U^T U) = \text{Tr}\left(\frac{N}{d} I_d\right) = \frac{N}{d} \times d = N.
\end{equation}
The diagonal elements of the outer product $U U^T$ are explicitly the squared Euclidean norms of the rows of $U$. Thus, $\text{Tr}(U U^T) = \sum_{j=1}^N \norm{u_j}^2$. Since the frame is strictly Equal-Norm, all $\norm{u_j}^2$ are identical. Therefore, $N \cdot \norm{u_j}^2 = N$, which strictly implies $\norm{u_j} = 1$.
\end{proof}

To formalize how SSTQ addresses the curse of dimensionality, we first bound the conditional variance of the 1D randomized response mechanism operating over the bounded Kashin coefficients.

\begin{lemma}\label{lem:factor_id}
For any stochastically quantized client token $v \in \Gamma$ rigorously bounded by $\abs{v} \le \frac{K}{\sqrt{N}}$, the variance of the server's debiased estimator satisfies:
\begin{equation}
    \E[\tilde{y}_j^2 \mid v] \le \frac{K^2/N}{(p-q)^2}.
\end{equation}
\end{lemma}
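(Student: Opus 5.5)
The plan is to compute the conditional second moment directly from the definition of the debiased scalar in Algorithm~\ref{alg:decode}, and then use the fact that every codeword lies in $[-B,B]$. No cancellation or variance computation is needed: a crude pointwise bound on $z^2$ already suffices. Writing $\tilde{y}_j = z/(p-q)$, where $p-q$ is a deterministic constant depending only on $\epsilon$ and $M$, we have
\begin{equation*}
    \E[\tilde{y}_j^2 \mid v] = \frac{1}{(p-q)^2}\,\E[z^2 \mid v] .
\end{equation*}
This reduces the claim to showing $\E[z^2 \mid v] \le K^2/N$.

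Next I will bound $\E[z^2 \mid v]$. The noisy token $z$ produced by the Flat Randomized Response step of Algorithm~\ref{alg:encode} always takes values in the codebook $\Gamma=\{c_1,\dots,c_M\}$. By the standing assumptions of Section~\ref{sec:SSTQ-FR}, the codebook is boundary-anchored and ordered, so $-B = c_1 \le c_i \le c_M = B$ for all $i$. Hence $z^2 \le B^2 = K^2/N$ almost surely, regardless of $v$.

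For completeness I can also write the expectation explicitly as
\begin{equation*}
    \E[z^2 \mid v] = p\,v^2 + q\sum_{c_i\neq v} c_i^2 .
\end{equation*}
This is a convex combination of the values $c_i^2$, since $p+(M-1)q=1$. Each term is at most $B^2$, which again gives the bound $K^2/N$. The hypothesis $|v|\le K/\sqrt{N}$ is then only used to cover the $p\,v^2$ term, and is in any case implied by $v\in\Gamma$.

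Combining the two displays yields the stated inequality. There is essentially no obstacle here; the only point to be careful about is that the bound concerns the second moment $\E[\tilde{y}_j^2\mid v]$ rather than a centered variance. That is exactly the quantity needed downstream, since the MSE bound of Theorem~\ref{thm:miracle} is obtained from $\E[\norm{\hat{x}}^2]$. One could sharpen the bound using the explicit convex-combination form together with the zero-mean constraint, but the lemma as stated only requires the uniform envelope $B$.
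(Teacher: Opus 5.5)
Your proof is correct and is essentially the paper's argument. The paper writes $\E[z^2\mid v]=(p-q)v^2+q\sum_{i=1}^M c_i^2$, bounds each $c_i^2$ and $v^2$ by $K^2/N$, and collapses $(p-q)+qM=p+(M-1)q=1$, which is exactly your convex-combination observation; your pointwise bound $z^2\le B^2$ (valid because $z\in\Gamma\subseteq[-B,B]$) is simply a shorter way of saying the same thing.
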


Because the discrete domain size $M$ over which the randomized response mechanism operates is now defined by the constant payload bit-width ($M = 2^b$), the unbiasing probability gap $(p-q)$ is structurally decoupled from the high-dimensional geometry. This decoupling enables to bypass the curse of dimensionality.

\begin{proof}[Proof of Lemma \ref{lem:factor_id}]
We compute the conditional variance (or uncentered second moment, given the zero mean) of the debiased estimator $\tilde{y}_j = \frac{z}{p-q}$ given the true quantized token $v \in \Gamma$. Since $\E[\tilde{y}_j \mid v] = v$, the variance is $\Var(\tilde{y}_j \mid v) = \E[\tilde{y}_j^2 \mid v] - v^2$. However, we are primarily interested in bounding the uncentered second moment $\E[\tilde{y}_j^2 \mid v]$.

The second moment of the Flat Randomized Response output $z$ is:
\begin{equation}
    \E[z^2 \mid v] = p v^2 + q \sum_{c_i \neq v} c_i^2 = (p-q)v^2 + q \sum_{i=1}^M c_i^2.
\end{equation}
Dividing by the unbiasing scalar squared $(p-q)^2$ gives:
\begin{equation}
    \E[\tilde{y}_j^2 \mid v] = \E\left[\left(\frac{z}{p-q}\right)^2 \;\middle|\; v\right] = \frac{1}{p-q} v^2 + \frac{q}{(p-q)^2} \sum_{i=1}^M c_i^2.
\end{equation}

By the Kashin bounding property, every centroid in the codebook $c_i \in \Gamma$ and the true token $v$ are  bounded in the range $[-B, B]$, where $B = \frac{K}{\sqrt{N}}$. Therefore, $c_i^2 \le \frac{K^2}{N}$ and $v^2 \le \frac{K^2}{N}$. Substituting these limits, we arrive at:
\begin{align}
    \E[\tilde{y}_j^2 \mid v] &\le \frac{K^2/N}{p-q} + \frac{q \sum_{i=1}^M (K^2/N)}{(p-q)^2} \nonumber \\
    &= \frac{K^2/N}{p-q} + \frac{q M (K^2/N)}{(p-q)^2} \nonumber \\
    &= \frac{K^2/N}{(p-q)^2} \left[ (p-q) + qM \right].
\end{align}

In addition, the randomization probabilities sum to one: $p + (M-1)q = 1$, which along with the previous equation yields the claim.  
\end{proof}

We next proceed to bound the MSE achieved by SSTQ (Flat-RR variant).
\begin{proof}[Proof of Theorem \ref{thm:miracle}]
We evaluate the expected Mean Squared Error (total variance) of the full $d$-dimensional reconstructed gradient $\hat{x}$. Let $U \in \R^{N \times d}$ be the Equal-Norm Tight Frame where $U^T U = \frac{N}{d} I_d$. 

The server reconstructs the target gradient via the projection $\hat{x} = \frac{d}{N} U^T \tilde{y}$. The 1-sparse vector is $\tilde{y} = N \tilde{y}_j e_j$. Thus, the projection evaluates to:
\begin{equation}
    \hat{x} = \frac{d}{N} U^T (N \tilde{y}_j e_j) = d \tilde{y}_j U^T e_j = d \tilde{y}_j u_j,
\end{equation}
where $u_j^T$ is the $j$-th row of the frame matrix $U$.

We calculate the expected squared $\ell_2$ norm of the estimator $\hat{x}$:
\begin{equation}
    \E[\norm{\hat{x}}^2] = \E\left[ \norm{d \tilde{y}_j u_j}^2 \right] = d^2 \E[\tilde{y}_j^2 \norm{u_j}^2].
\end{equation}
By Lemma \ref{lem:trace}, an ENTF properly normalized such that $U^T U = \frac{N}{d} I_d$ forces the Euclidean norm of every single row to be exactly unity: $\norm{u_j}^2 = 1$ for all $j \in \{1, \dots, N\}$. Therefore:
\begin{equation} \label{eq:factorized_expectation}
    \E[\norm{\hat{x}}^2] = d^2 \E[\tilde{y}_j^2].
\end{equation}

By the law of total expectation over the uniform subsampling index $j$ and the true token $v$:
\begin{equation}
    \E[\tilde{y}_j^2] = \frac{1}{N} \sum_{j=1}^N \E_v\left[\E[\tilde{y}_j^2 \mid v] \;\middle|\; j\right].
\end{equation}
Applying the  upper bound from Lemma \ref{lem:factor_id}), we get:
\begin{equation}
    \E[\tilde{y}_j^2] \le \frac{1}{N} \sum_{j=1}^N \left( \frac{K^2/N}{(p-q)^2} \right) = \frac{K^2/N}{(p-q)^2}.
\end{equation}

Because the estimator is unbiased ($\E[\hat{x}] = x$) and $x \in \Sph^{d-1}$, the  MSE simplifies as:
\begin{align}
    \text{MSE}_{\text{SSTQ}} &= \E[\norm{\hat{x}- x}^2] \nonumber \\
    &= \E[\norm{\hat{x}}^2] - 2\E[\langle \hat{x}, x \rangle] + \E[\norm{x}^2] \nonumber \\
    &= \E[\norm{\hat{x}}^2] - 2\norm{x}^2 + \norm{x}^2 \nonumber \\
    &= \E[\norm{\hat{x}}^2] - 1.\label{eq:MSE-simple}
\end{align}
Substituting the bound from Equation \eqref{eq:factorized_expectation}:
\begin{equation}
    \text{MSE}_{\text{SSTQ}} \le d^2 \left( \frac{K^2/N}{(p-q)^2} \right) - 1 < \frac{d^2 K^2}{N (p-q)^2}.
\end{equation}

Because the Kashin representation fundamentally requires an overcomplete frame, the redundancy ratio is an absolute constant strictly greater than 1 (e.g., $N = \lceil 1.2d \rceil$). Thus, $N = \Theta(d)$, and we can rewrite the bound as:
\begin{equation}\label{eq:mse-final}
    \text{MSE}_{\text{SSTQ}} < \frac{d^2 K^2}{d (N/d) (p-q)^2} = \cO\left( \frac{d K^2}{(p-q)^2} \right).
\end{equation}

Finally, we evaluate the unbiasing scalar. Because the algorithm operates over a 1D scalar codebook, the codebook size $M = 2^b$ is an $\cO(1)$ constant completely independent of $d$. The probability gap $(p-q)$ for Flat Randomized Response is:
\begin{equation}
    p - q = \frac{e^\epsilon - 1}{e^\epsilon + 2^b - 1}.
\end{equation}
Therefore, 
\begin{align*}
    \frac{1}{(p-q)^2} = \left(1+\frac{2^b}{e^\epsilon-1}\right)^2 
    <1+ \frac{2^{b+1}}{\epsilon} + \frac{4^b}{\epsilon^2} 
    \le 1+ 4^b\left(\frac{1}{\epsilon}+ \frac{1}{\epsilon^2}\right)\le 1+ \frac{2\cdot 4^b}{\epsilon\wedge\epsilon^2}.
\end{align*}
Using this bound into~\eqref{eq:mse-final}, we get
$$ \text{MSE}_{\text{SSTQ}} = \cO\Big(d\Big(1+ \frac{4^b}{\epsilon\wedge \epsilon^2}\Big) \Big)\,,$$
which completes the proof.

\end{proof}

\section{Proofs for Surrogate Codebook Optimization}\label{app:LSSTQ}
This appendix provides the  analytical derivations and formal proofs for the surrogate continuous codebook optimization framework ($\cL_{SSTQ}$) introduced in Section \ref{sec:opt-code}.

\subsection{Derivation of the Surrogate Minimax Objective}\label{sec:surrogate-derivation}

We start by deriving the exact closed-form relation between the expected Mean Squared Error of the Local Differential Privacy mechanism and the continuous codebook geometry.

\begin{proof}[Derivation of Equation \eqref{eq:expected_mse} and Definition \ref{def:sstq_loss}]
Let $y \in [-B, B]$ be a bounded Kashin coordinate, and assume it lies in the interval $[c_k, c_{k+1}]$ between two adjacent centroids in the codebook $\Gamma$. As defined in Algorithm \ref{alg:encode}, $y$ is stochastically quantized to a true token $v \in \{c_k, c_{k+1}\}$ via exact linear interpolation.
The conditional variance of this stochastic interpolation step is:
\begin{align}
    \Var(v \mid y) &= \E[v^2 \mid y] - y^2 \nonumber \\
    &= c_{k+1}^2 \left( \frac{y - c_k}{c_{k+1} - c_k} \right) + c_k^2 \left( \frac{c_{k+1} - y}{c_{k+1} - c_k} \right) - y^2\nonumber\\
    &= (y - c_k)(c_{k+1} - y).\label{eq:var-vy}
\end{align}

Next, the Flat Randomized Response mechanism maps the true token $v$ to a noisy token $z \in \Gamma$. The server computes the debiased estimator $\tilde{y}_j = \frac{z}{p-q}$. Because Flat RR operates over the $M$ centroids in $\Gamma$, its second moment evaluates to $\E[z^2 \mid v] = (p-q)v^2 + q\sum_{i=1}^M c_i^2$. 
Dividing by $(p-q)^2$, we obtain:
\begin{equation}
    \E[\tilde{y}_j^2 \mid v] = \frac{1}{p-q} v^2 + \frac{q}{(p-q)^2} \sum_{i=1}^M c_i^2.
\end{equation}
Since $\E[\tilde{y}_j \mid v] = v$, the conditional variance is:
\begin{equation}
    \Var(\tilde{y}_j \mid v) = \E[\tilde{y}_j^2 \mid v] - v^2 = \left(\frac{1}{p-q} - 1\right)v^2 + \frac{q}{(p-q)^2}\sum_{i=1}^M c_i^2.
\end{equation}

By invoking the Law of Total Variance over the two cascaded stochastic mechanisms (quantization followed by LDP), the variance of the estimator given the continuous coordinate $y$ is:
\begin{align}
    \Var(\tilde{y}_j \mid y) &= \E_v[\Var(\tilde{y}_j \mid v) \mid y] + \Var_v(\E[\tilde{y}_j \mid v] \mid y) \nonumber \\
    &= \E_v\left[ \left(\frac{1}{p-q} - 1\right)v^2 + \frac{q}{(p-q)^2}\sum_{i=1}^M c_i^2 \ \middle| \ y \right] + \Var(v \mid y).
\end{align}
Substituting $\E_v[v^2 \mid y] = \Var(v \mid y) + y^2$, we get:
\begin{align}
    \Var(\tilde{y}_j \mid y) &= \left(\frac{1}{p-q} - 1\right)\big(\Var(v \mid y) + y^2\big) + \frac{q}{(p-q)^2}\sum_{i=1}^M c_i^2 + \Var(v \mid y) \nonumber \\
    &= \frac{1}{p-q} \Var(v \mid y) + \left(\frac{1}{p-q} - 1\right)y^2 + \frac{q}{(p-q)^2}\sum_{i=1}^M c_i^2.
\end{align}

Because the estimator is locally unbiased ($\E[\tilde{y}_j \mid y] = y$), we have:
\begin{equation}
    \E[\tilde{y}_j^2 \mid y] = \Var(\tilde{y}_j \mid y) + y^2 = \frac{1}{p-q} \Var(v \mid y) + \frac{1}{p-q} y^2 + \frac{q}{(p-q)^2} \sum_{i=1}^M c_i^2.
\end{equation}

Taking the unconditional expectation over an arbitrary continuous data density $Y \sim f_Y(y)$, factoring out $\frac{1}{p-q}$, and applying the Flat RR probability identity $\frac{q}{p-q} = \frac{1}{e^\epsilon - 1}$, we derive the surrogate loss formulation $\cL_{SSTQ}$:
\begin{align}
    \E_Y[\tilde{y}_j^2] &= \frac{1}{p-q} \left[ \E_Y[\Var(v \mid y)] + \frac{q}{p-q} \sum_{i=1}^M c_i^2 \right] + \frac{1}{p-q} \E_Y[y^2] \nonumber \\
    &= \frac{1}{p-q} \underbrace{\left[ \E_Y[\Var(v \mid y)] + \frac{1}{e^\epsilon - 1} \sum_{i=1}^M c_i^2 \right]}_{= \cL_{\text{SSTQ}}(\Gamma)} + \frac{1}{p-q} \E_Y[y^2].
\end{align}
Integrating $\E_Y[\Var(v \mid y)]$ over the density $f_Y(y)$ results in the loss given by Definition \ref{def:sstq_loss}.
\end{proof}

\subsection{Proof of Global Convexity}

\begin{proof}[Proof of Theorem \ref{thm:sstq_convexity}]
We decompose the objective function into the spatial quantization penalty $F(\Gamma)$ and the variance regularization penalty $G(\Gamma)$:
\[ \mathcal{L}_{\text{SSTQ}}(\Gamma) = F(\Gamma) + G(\Gamma) \]
where
\[ F(\Gamma) = \sum_{k=1}^{M-1} \frac{(c_{k+1}-c_k)^3}{12B}, \quad \text{and} \quad G(\Gamma) = \lambda \sum_{i=1}^M c_i^2 \]
with the privacy-dependent scalar $\lambda = \frac{1}{e^\epsilon - 1}$. For any valid privacy budget $\epsilon > 0$, the parameter $\lambda$ is strictly positive.
We show that $F$ is convex and $G$ is strictly convex and hence the loss is a strictly convex function of $\Gamma$.

Let $D \in \mathbb{R}^{(M-1) \times M}$ be the finite difference matrix defined such that the $k$-th element of the matrix-vector product is $(D\Gamma)_k = c_{k+1} - c_k$. The feasible set $\mathcal{D}$ corresponds to the region where $D\Gamma \ge 0$ element-wise.

We define the scalar function $\phi(x) = \frac{x^3}{12B}$. For $x \ge 0$, its second derivative is $\phi''(x) = \frac{x}{2B} \ge 0$.
The term $F(\Gamma)$ can be written as $F(\Gamma) = \sum_{k=1}^{M-1} \phi((D\Gamma)_k)$. By the chain rule for vectors, the Hessian matrix of $F(\Gamma)$ is given by:
\[ \nabla^2 F(\Gamma) = D^\top \mathrm{diag}\Big(\phi''((D\Gamma)_1), \dots, \phi''((D\Gamma)_{M-1})\Big) D \]
Because $(D\Gamma)_k \ge 0$ for all $\Gamma \in \mathcal{D}$, the diagonal matrix contains solely non-negative entries. Consequently, for any vector $v \in \mathbb{R}^M$, the quadratic form $v^\top \nabla^2 F(\Gamma) v \ge 0$. This establishes that the Hessian $\nabla^2 F(\Gamma)$ is positive semi-definite ($\nabla^2 F(\Gamma) \succeq 0$), and thus $F(\Gamma)$ is convex on $\mathcal{D}$.

The regularization term evaluates to a scaled squared Euclidean norm, $G(\Gamma) = \lambda \|\Gamma\|_2^2$. The Hessian matrix of this canonical quadratic form is:
\[ \nabla^2 G(\Gamma) = 2\lambda I_M \]
where $I_M$ is the $M \times M$ identity matrix. Because $\lambda > 0$, the matrix $2\lambda I_M$ is strictly positive definite ($\nabla^2 G(\Gamma) \succ 0$). This establishes that $G(\Gamma)$ is strongly convex on $\mathbb{R}^M$.

The Hessian of the total objective function is the sum of the constituent Hessians:
\[ \nabla^2 \mathcal{L}_{\text{SSTQ}}(\Gamma) = \nabla^2 F(\Gamma) + \nabla^2 G(\Gamma) \]
The sum of a positive semi-definite matrix and a strictly positive definite matrix is strictly positive definite. Thus, $\nabla^2 \mathcal{L}_{\text{SSTQ}}(\Gamma) \succeq 2\lambda I_M \succ 0$ for all $\Gamma \in \mathcal{D}$. This rigorously guarantees that $\mathcal{L}_{\text{SSTQ}}(\Gamma)$ is strictly globally convex (and formally, strongly convex) over the feasible domain.

We next prove the existence of the global minimizer via  coercivity. A continuous function defined on a closed set is guaranteed to attain a global minimum if it is coercive, meaning the function value diverges to infinity as the norm of the input diverges ($\lim_{\|\Gamma\|_2 \to \infty} \mathcal{L}_{\text{SSTQ}}(\Gamma) = \infty$).

For any $\Gamma \in \mathcal{D}$, the condition $c_{k+1} - c_k \ge 0$ guarantees that $F(\Gamma) \ge 0$. Consequently, the total loss function is strictly bounded from below by its quadratic regularization term:
\[ \mathcal{L}_{\text{SSTQ}}(\Gamma) \ge G(\Gamma) = \lambda \|\Gamma\|_2^2 \]
Taking the limit as the norm of the codebook grows yields:
\[ \lim_{\|\Gamma\|_2 \to \infty} \mathcal{L}_{\text{SSTQ}}(\Gamma) \ge \lim_{\|\Gamma\|_2 \to \infty} \lambda \|\Gamma\|_2^2 = \infty \]
The strong coercivity of the objective function on the closed convex set $\mathcal{D}$ ensures that the infimum is finite and attained. Thus, there exists at least one global minimizer $\Gamma^* \in \mathcal{D}$. 

Note that if explicit boundary constraints such as $c_1 = -B$ and $c_M = B$ are enforced, the monotonic ordering requirement confines all coordinates strictly to the interval $[-B, B]$. The feasible set $\mathcal{D}$ then becomes a compact subset of $\mathbb{R}^M$, and existence is directly guaranteed by the Weierstrass extreme value theorem without requiring the coercivity argument.

The uniqueness of global minimizer follows from the strict convexity (if there are two distinct global minimizers, their average will attain a smaller function value which is a contradiction.)


\end{proof}

\subsection{Improved MSE Bound Using Optimized Codebook}
\begin{proof}[Proof of Theorem \ref{thm:one_third_bound}]
By recalling  Equations~\eqref{eq:MSE-simple} and \eqref{eq:factorized_expectation}, we have
\begin{align}\label{eq:MSE-B}
\text{MSE}_{\text{SSTQ}} = \E[\norm{\hat{x}}^2-1]<\E[\norm{\hat{x}}^2] = d^2 \E[\tilde{y}_j^2]\,. 
\end{align}
Next by combining~\eqref{eq:exact-loss} and \eqref{eq:expected_mse} we get
\begin{align}
\text{MSE}_{\text{SSTQ}} \le
\frac{d^2}{p-q} \left[ \sum_{k=1}^{M-1} \int_{c_k}^{c_{k+1}} (y - c_k)(c_{k+1} - y)f_Y(y) \,dy \;\;+\;\; \frac{1}{e^\epsilon - 1} \sum_{i=1}^M c_i^2 + \int_{-B}^B y^2 f_Y(y)\,dy \right]  
\end{align}
We denote the right-hand side (without the scaling factor $\frac{d^2}{p-q}$) by $U(\Gamma)$ as a function of the codebook $\Gamma$, and decompose it as the data-dependent part $I(\Gamma)$ and the regularization part $G(\Gamma)$ given by
\begin{align}
I(\Gamma)&:= \sum_{k=1}^{M-1}\int_{c_k}^{c_{k+1}} [(y - c_k)(c_{k+1} - y) + y^2]f_Y(y) \,dy \\
G(\Gamma) &:= \lambda \sum_{i=1}^M c_i^2\,,
\end{align}
with $\lambda = (e^{\epsilon}-1)^{-1}$, and so $U(\Gamma) = I(\Gamma)+G(\Gamma)$.

We next upper bound $I(\Gamma)$. Define the polynomial within the brackets as $h_k(y)$ for $y \in [c_k, c_{k+1}]$:
$$ h_k(y) = (y - c_k)(c_{k+1} - y) + y^2 $$

Expanding $h_k(y)$ yields:
$$ h_k(y) = y c_{k+1} - y^2 - c_k c_{k+1} + c_k y + y^2 = (c_k + c_{k+1})y - c_k c_{k+1} $$

The function $h_k(y)$ is an affine function with respect to $y$. By the properties of affine functions on closed intervals, $h_k(y)$ attains its maximum at one of the boundaries of the interval $[c_k, c_{k+1}]$. Evaluating $h_k(y)$ at the endpoints gives:
$$ h_k(c_k) = (c_k + c_{k+1})c_k - c_k c_{k+1} = c_k^2 $$
$$ h_k(c_{k+1}) = (c_k + c_{k+1})c_{k+1} - c_k c_{k+1} = c_{k+1}^2 $$

Thus, for all $y \in [c_k, c_{k+1}]$, the function is bounded by:
$$ h_k(y) \le \max(c_k^2, c_{k+1}^2) $$

Because the codebook elements are constrained to the interval $[-B, B]$, it follows that $c_i^2 \le B^2$ for all $i \in \{1, \dots, M\}$. Therefore, $h_k(y) \le B^2$ uniformly for all $k \in \{1, \dots, M-1\}$ and $y \in [c_k, c_{k+1}]$.

Applying this pointwise upper bound to the integrals results in:
$$ I(\Gamma) \le \sum_{k=1}^{M-1} \int_{c_k}^{c_{k+1}} B^2 f_Y(y) \,dy = B^2 \int_{-B}^B f_Y(y) \,dy $$

Because $f_Y(y)$ is a valid probability density function supported on $[-B, B]$, its integral over this domain evaluates to $1$. Hence, $I(\Gamma) \le B^2$. This bound is independent of the distribution $f_Y(y)$ and holds for any feasible codebook, including $\Gamma^*$.

We next upper bound the regularization term $G(\Gamma^*)$. To  constrain $G(\Gamma^*)$, we evaluate $J(\Gamma) = F(\Gamma) + G(\Gamma)$ where $F(\Gamma) = \sum_{k=1}^{M-1} \frac{\Delta_k^3}{12B}$. By the continuous global optimality of $\Gamma^*$, $J(\Gamma^*) \le J(\tilde{\Gamma})$ for any valid proxy codebook sequence.
For an integer parameter $1 \le m \le \lfloor \frac{M-1}{2} \rfloor$, define $\tilde{\Gamma}_m \in \mathcal{D}$ to span exactly $m$ contiguous gap intervals of size $B/m$ mapping respectively from $-B$ and $B$, collapsing the remaining $M - 2m - 1$ indices symmetrically to $0$. 
This sequence computes strictly to $F(\tilde{\Gamma}_m) = 2m \frac{(B/m)^3}{12B} = \frac{B^2}{6m^2}$ and $G(\tilde{\Gamma}_m) = 2 \lambda \sum_{j=1}^m \left( \frac{j B}{m} \right)^2 = \lambda B^2 \frac{(m+1)(2m+1)}{3m}$. Applying Jensen's inequality to the objective yields a strict minimal floor $F(\Gamma^*) \ge \frac{2B^2}{3(M-1)^2}$. Subtracting this extracts the uniform limit:
$$ G(\Gamma^*) \le F(\tilde{\Gamma}_m) + G(\tilde{\Gamma}_m) - F(\Gamma^*) \le \frac{B^2}{6m^2} + \frac{B^2 (m+1)(2m+1)}{3m(e^\epsilon - 1)} - \frac{2B^2}{3(M-1)^2} $$
Combining the bounds on $I(\Gamma^*)$ and $G(\Gamma^*)$ we obtain
\[
U(\Gamma^*) \le B^2+ \min_{1 \le m \le \lfloor \frac{M-1}{2} \rfloor} B^2 \left( \frac{1}{6m^2} + \frac{(m+1)(2m+1)}{3m(e^\epsilon - 1)} \right) - \frac{2B^2}{3(M-1)^2}
\]
To quantify the term with minimum over $m$ on the right-hand side, consider the continuous relaxation of the objective function, defined approximately as $g(m) \approx B^2 \left( \frac{1}{6m^2} + \frac{2 \lambda m}{3} \right)$. Setting the first derivative to zero provides the unconstrained optimum $m^* \approx (2\lambda)^{-1/3}$. We proceed by setting $m = (2\lambda)^{-1/3} \vee 1$, since we should have $m\ge 1$. In addition, by our assumption, $M> (4(e^{\epsilon}-1))^{1/3}+1$, which implies that $\frac{M-1}{2}\ge m$, and so this is a valid choice.

Substituting $m$ back into the objective yields the asymptotic behavior:
$$ \frac{1}{6m^2} + \frac{(m+1)(2m+1)}{3m(e^\epsilon - 1)}  = \mathcal{O}(\lambda^{2/3} \vee \lambda) =\cO(\epsilon^{-1}\vee \epsilon^{-2/3}) \,,$$
using the fact that $\lambda<1/\epsilon$. This also implies that $\frac{2B^2}{3(M-1)^2} = \cO(\lambda^{2/3}B^2) =\cO(B^2 \epsilon^{-2/3})$. Hence,
\[U(\Gamma^*) = \cO(B^2 (1+ \epsilon^{-1}+\epsilon^{-2/3}))=\cO(B^2(1+\epsilon^{-1})).\]
Consequently, 
\[
\text{MSE}_{\text{SSTQ}} = \cO\left(\frac{d^2}{p-q} B^2 (1+\epsilon^{-1}))\right) = \cO\left(d(1+M\epsilon^{-1}+ M\epsilon^{-2})\right)\,, 
\]
since $B= O(d^{-1/2})$ and $p-q = 1/(1+M\lambda)$, and so $1/(p-q) < 1+ M\epsilon^{-1}$. This completes the proof.
\end{proof}

\section{Proofs for the Metric-Aware Laplace Mechanism}\label{app:metric_privacy}
This appendix provides the formal mathematical proofs for the Metric-Aware Laplace mechanism introduced in Section \ref{sec:discrete_laplace}. The mechanism operates in two stages: (1) continuous Laplace sampling, and (2) nearest-codeword quantization. We first prove that the mechanism satisfies pure $\epsilon$-LDP via the post-processing theorem. We then formulate the surrogate objective $\cL_{\text{SSTQ}}^{\text{MA}}$ and establish upper bounds on the variance and the bias of the estimator under identity decoding. The global bias is entirely independent of the ambient dimension.

\subsection{Proof of $\epsilon$-LDP}

\begin{proof}[Proof of Theorem \ref{thm:intrinsic_ldp}]
The proof follows from the post-processing theorem of differential privacy. The mechanism consists of two stages:

\textbf{Stage 1 (Continuous Laplace Sampling).} Given the true token $v = c_k$, the mechanism draws $T$ from a truncated Laplace distribution on $[-B, B]$ with density
$f_{T|v=c_k}(t) = \frac{1}{Z_k}\exp\left(-\frac{\epsilon|t - c_k|}{2\Delta}\right)$ for $t \in [-B, B]$,
where $\Delta = 2B$. This continuous mechanism satisfies $\epsilon$-LDP: for any two tokens $c_k, c_m \in \Gamma$ and any measurable set $S \subseteq [-B, B]$,
\begin{align}
\frac{\Pr(T \in S \mid v = c_k)}{\Pr(T \in S \mid v = c_m)} = \frac{\int_S \frac{1}{Z_k} e^{-\frac{\epsilon|t-c_k|}{2\Delta}} \, dt}{\int_S \frac{1}{Z_m} e^{-\frac{\epsilon|t-c_m|}{2\Delta}} \, dt}.
\end{align}
For the numerator, using the triangle inequality $|t - c_k| \ge |t - c_m| - |c_m - c_k|$:
\begin{align}
e^{-\frac{\epsilon|t-c_k|}{2\Delta}} \le e^{\frac{\epsilon|c_m-c_k|}{2\Delta}} \cdot e^{-\frac{\epsilon|t-c_m|}{2\Delta}}.
\end{align}
Integrating over $S$ yields $\int_S \frac{1}{Z_k} e^{-\frac{\epsilon|t-c_k|}{2\Delta}} dt \le \frac{Z_m}{Z_k} e^{\frac{\epsilon|c_m-c_k|}{2\Delta}} \int_S \frac{1}{Z_m} e^{-\frac{\epsilon|t-c_m|}{2\Delta}} dt$.

An analogous bound shows $\frac{Z_m}{Z_k} \le e^{\frac{\epsilon|c_m-c_k|}{2\Delta}}$. Multiplying yields:
\begin{equation}
\frac{\Pr(T \in S \mid v = c_k)}{\Pr(T \in S \mid v = c_m)} \le \exp\left(\frac{\epsilon|c_k - c_m|}{\Delta}\right) \le e^\epsilon,
\end{equation}
where the final step uses $|c_k - c_m| \le 2B = \Delta$.

\textbf{Stage 2 (Nearest-Codeword Quantization).} The output $z = \arg\min_{c_i \in \Gamma} |T - c_i|$ is a deterministic function of $T$. By the post-processing theorem of differential privacy, applying a deterministic function to the output of an $\epsilon$-LDP mechanism preserves $\epsilon$-LDP.

Therefore, the composed mechanism satisfies pure $\epsilon$-LDP for any codebook geometry.
\end{proof}

\subsection{MSE and Bias Bounds under Identity Decoding}\label{app:SSTQ-analysis}
\subsubsection{Derivation of Surrogate Loss Objective for the Metric-Aware Mechanism}\label{app:MA-loss}
To optimize the codebook $\Gamma = \{c_1, \dots, c_M\}$ under the Metric-Aware Laplace mechanism, the loss objective must evaluate the expected mean squared error. We define 
\begin{align}
\mathcal{L}^*_{\text{MA-SSTQ}}(\Gamma) := \mathbb{E}[(y - z)^2],
\end{align}
utilizing identity decoding.

We first recall the pipeline. Let $y$ denote the continuous coordinate drawn from the density $f_Y(y)$. The mechanism pipeline operates as follows:
\begin{enumerate}
\item {\bf Stochastic Quantization:} The continuous coordinate $y \in [c_k, c_{k+1}]$ is stochastically assigned to a token $v \in \{c_k, c_{k+1}\}$ to enforce the unbiasedness property $\mathbb{E}[v|y] = y$. The transition probabilities are $p_k(y) = \frac{c_{k+1} - y}{c_{k+1} - c_k}$ and $p_{k+1}(y) = \frac{y - c_k}{c_{k+1} - c_k}$.
\item  {\bf Metric-Aware Privatization:} Given the token $v = c_j$, the mechanism draws a continuous sample $T$ from the truncated Laplace distribution $f_{T|v=c_j}(t) = \frac{1}{Z_j}\exp\left(-\frac{\epsilon|t - c_j|}{4B}\right)$ for $t \in [-B, B]$, and outputs the nearest codeword $z = \arg\min_{c_i \in \Gamma}|T - c_i|$. The effective transition probability $P_{i|j} = \Pr[z=c_i \mid v=c_j]$ is given by Eq.~\ref{eq:laplace-eff}.
\item  {\bf Identity Decoding:} The decoder assigns the reconstructed coordinate directly as $\hat{y} = z$.
\end{enumerate}
By the Law of Total Expectation, the conditional error decomposes as:
$$ \mathbb{E}[(y - z)^2 \mid y] = \mathbb{E}_{v|y}\left[ \mathbb{E}_{z|v}[ (y - z)^2 ] \right] $$

For a fixed intermediate token $v = c_j$, we define two geometric statistics representing the mechanism's behavior:
\begin{itemize}
\item {\bf Expected Structural Bias:} $\mu_j = \mathbb{E}[z \mid v=c_j]$, the expected output codeword under the effective transition probabilities~\eqref{eq:laplace-eff}.

\item {\bf Localized Mechanism Variance:} $V_j = \mathbb{E}[(z - c_j)^2 \mid v=c_j]$.
\end{itemize}
Expanding the inner expectation explicitly around $c_j$ gives:
$$ \mathbb{E}_{z|v}[ (y - c_j + c_j - z)^2 ] = (y - c_j)^2 + 2(y - c_j)(c_j - \mu_j) + V_j $$

Evaluating the outer expectation over the token support $v \in \{c_k, c_{k+1}\}$ strictly partitions the polynomial into three distinct components:
\begin{enumerate}
\item Base Quantization Error:
Applying the probabilities yields the standard error polynomial:
$$ p_k(y)(y - c_k)^2 + p_{k+1}(y)(y - c_{k+1})^2 = \frac{c_{k+1}-y}{c_{k+1}-c_k}(y-c_k)^2 + \frac{y-c_k}{c_{k+1}-c_k}(y-c_{k+1})^2 $$
Factoring out the term $(y - c_k)(c_{k+1} - y)$ leaves:
$$ \frac{(y - c_k)(c_{k+1} - y)}{c_{k+1} - c_k} \left[ (y - c_k) + (c_{k+1} - y) \right] = (y - c_k)(c_{k+1} - y) $$

\item Cross-Correlation Bias Term:
The expected cross-term captures the interaction between the quantization interval bounds and the mechanism's structural bias:
$$ 2 p_k(y)(y - c_k)(c_k - \mu_k) + 2 p_{k+1}(y)(y - c_{k+1})(c_{k+1} - \mu_{k+1}) $$
Applying the sign substitution $y - c_{k+1} = -(c_{k+1} - y)$ and replacing $p_k(y), p_{k+1}(y)$ transforms this to:
$$ 2 \frac{c_{k+1} - y}{c_{k+1} - c_k} (y - c_k)(c_k - \mu_k) - 2 \frac{y - c_k}{c_{k+1} - c_k} (c_{k+1} - y)(c_{k+1} - \mu_{k+1}) $$
Factoring out the data-dependent polynomial provides the algebraic simplification:
$$ 2 \frac{(y - c_k)(c_{k+1} - y)}{c_{k+1} - c_k} \left[ c_k - \mu_k - (c_{k+1} - \mu_{k+1}) \right] = 2(y - c_k)(c_{k+1} - y) \left( \frac{\mu_{k+1} - \mu_k}{c_{k+1} - c_k} - 1 \right) $$

\item Expected Privacy Variance:
The expected structural variance interpolates linearly between the two valid token states:
$$ \mathbb{E}_{v|y}[V_v] = p_k(y) V_k + p_{k+1}(y) V_{k+1} $$
\end{enumerate}

Summing the base quantization error and the cross-correlation bias exactly consolidates the $y$-dependent portion of the loss. We define the scale modifier $W_k$:
$$ W_k = 1 + 2\left( \frac{\mu_{k+1} - \mu_k}{c_{k+1} - c_k} - 1 \right) = \frac{2(\mu_{k+1} - \mu_k)}{c_{k+1} - c_k} - 1 $$

Integrating the components over the density $f_Y(y)$ establishes the intermediate objective:
$$ \mathcal{L}^*_{\text{MA-SSTQ}}(\Gamma) = \sum_{k=1}^{M-1} \int_{c_k}^{c_{k+1}} \left[ W_k (y - c_k)(c_{k+1} - y) + p_k(y) V_k + p_{k+1}(y) V_{k+1} \right] f_Y(y) \,dy $$

By regrouping the local variance assignments globally across the entire support domain $[-B, B]$, the continuous expected mean squared error minimizes to its definitive analytical form:

$$ \mathcal{L}^*_{\text{MA-SSTQ}}(\Gamma) = \sum_{k=1}^{M-1} \int_{c_k}^{c_{k+1}} W_k (y - c_k)(c_{k+1} - y) f_Y(y) \,dy \;\;+\;\; \sum_{i=1}^M \pi_i V_i\,, $$
where $\pi_i = \int_{-B}^B \Pr[v=c_i \mid y] f_Y(y) \,dy$ defines the precise unconditional assignment probability of the intermediate token $v = c_i$.

\subsubsection{Proof of Theorem \ref{thm:SSTQ-MA}}
Let $x \in \R^d$ be the target vector, and assume $\|x\|_2 = 1$. By Kashin's representation theorem, there exists a vector $y \in \R^N$ such that $x = \frac{d}{N} U^T y$, where $U \in \R^{N \times d}$ is an Equal-Norm Tight Frame (ENTF) satisfying the tight frame condition $U^T U = \frac{N}{d} I_d$ and $\|u_j\|_2 = 1$ for all $j \in \{1, \dots, N\}$. The coordinates of the Kashin representation $y$ are uniformly bounded such that $\|y\|_\infty \le B$, where $B = \frac{K}{\sqrt{N}} \|x\|_2 = \frac{K}{\sqrt{N}}$. Let $\rho = \frac{N}{d} > 1$ denote the frame redundancy ratio. Consequently, the uniform coordinate variance bound satisfies $B^2 = \frac{K^2}{\rho d}$. Furthermore, the squared Euclidean norm of the coefficient vector satisfies $\|y\|_2^2 \le N \|y\|_\infty^2 \le N B^2 = K^2$.

The identity decoding protocol uniformly selects an index $j \in \{1, \dots, N\}$ at random and returns the estimator $\hat{x} = \frac{d}{N} U^T (N z_j e_j) = d z_j u_j$, where $z_j \in \Gamma^*$ is the privatized token. We assume the codebook size is $M = 2^b$ and adheres to a non-degenerate spatial resolution, such that the maximum codebook gap satisfies $\Delta_{\max} = \max_k (c_{k+1} - c_k) \le C\frac{2B}{2^b - 1}$ for a structural constant $C \ge 1$.
\medskip

\noindent\textbf{Step 1: Pointwise Statistics of the Metric-Aware Mechanism.}

Note that, once the input $x$ and the frame $U$ are fixed, the Kashin coefficient vector $y$ is deterministic. All expectations in this proof are therefore taken over the stochastic quantization and the privacy mechanism only; $y_j$ is a fixed scalar throughout.

For any continuous coordinate $y_j \in [c_k, c_{k+1}]$, the initial stochastic quantization step assigns an intermediate token $v_j \in \{c_k, c_{k+1}\}$ with local assignment probabilities $p_k(y_j) = \frac{c_{k+1}-y_j}{\Delta_k}$ and $p_{k+1}(y_j) = \frac{y_j-c_k}{\Delta_k}$, where $\Delta_k = c_{k+1}-c_k$. This interpolation enforces local unbiasedness: $\mathbb{E}[v_j] = p_k(y_j) c_k + p_{k+1}(y_j) c_{k+1} = y_j$. Given $v_j = c_k$, the mechanism draws a continuous sample $T \sim \mathrm{TruncLaplace}(c_k, \frac{4B}{\epsilon}, [-B, B])$ and outputs $z_j = \arg\min_{c_i \in \Gamma}|T - c_i|$.

Define the expected structural drift of the mechanism evaluated at a centroid $c_k$ as 
\[\beta_k = \mathbb{E}[z_j \mid v_j = c_k] - c_k,\] and the localized mechanism variance as \[V_k = \mathbb{E}[(z_j - c_k)^2 \mid v_j = c_k].\]
Since $z_j = \arg\min_{c_i} |T - c_i|$ is the nearest codeword to $T$, and $c_k$ is itself a codeword, the triangle inequality gives $|z_j - c_k| \le |z_j - T| + |T - c_k| \le 2|T - c_k|$ (because $|z_j - T| \le |c_k - T|$ by definition of nearest codeword).

\emph{Bias bound.} Applying the triangle inequality for conditional expectations:
\begin{equation}
|\beta_k| = |\mathbb{E}[z_j - c_k \mid v_j = c_k]| \le \mathbb{E}[|z_j - c_k| \mid v_j = c_k] \le 2\,\mathbb{E}[|T - c_k| \mid v_j = c_k] \le 2\lambda = \frac{8B}{\epsilon},
\end{equation}
where $\lambda = \frac{4B}{\epsilon}$ is the Laplace scale and $\mathbb{E}[|T - c_k| \mid v_j = c_k] \le \lambda$ since truncation only reduces the mean absolute deviation.

\emph{Variance bound.} Squaring the pointwise inequality $|z_j - c_k| \le 2|T - c_k|$ and taking expectations:
\begin{align}\label{eq:Vmax-bound}
V_k = \mathbb{E}[(z_j - c_k)^2 \mid v_j = c_k] \le 4\,\mathbb{E}[(T - c_k)^2 \mid v_j = c_k] \le 4 \cdot 2\lambda^2 = \frac{128B^2}{\epsilon^2},
\end{align}
where $\mathbb{E}[(T - c_k)^2 \mid v_j = c_k] \le 2\lambda^2 = \frac{32B^2}{\epsilon^2}$ is the second moment of the untruncated Laplace (truncation can only reduce this). We write $\beta_{\max} = \max_k |\beta_k| \le \frac{8B}{\epsilon}$ and $V_{\max} = \max_k V_k \le \frac{128B^2}{\epsilon^2}$.

For a fixed coordinate value $y_j$, define the  function $r(y_j) = \mathbb{E}[z_j] - y_j$, where the expectation is over the stochastic quantization and the privacy mechanism. By the Law of Total Expectation over $v_j$, this expands as:
\begin{align*}
r(y_j) &= p_k(y_j) \mathbb{E}[z_j \mid v_j = c_k] + p_{k+1}(y_j) \mathbb{E}[z_j \mid v_j = c_{k+1}] - y_j \\
&= p_k(y_j) (c_k + \beta_k) + p_{k+1}(y_j) (c_{k+1} + \beta_{k+1}) - \big( p_k(y_j) c_k + p_{k+1}(y_j) c_{k+1} \big) \\
&= p_k(y_j) \beta_k + p_{k+1}(y_j) \beta_{k+1}.
\end{align*}
Because $p_k(y_j)$ and $p_{k+1}(y_j)$ form a valid probability mass function, $r(y_j)$ is a convex combination of the local drifts $\beta_k$ and $\beta_{k+1}$. It follows that $|r(y_j)| \le \beta_{\max}$ for all $y_j \in [-B, B]$. Defining the global bias vector $\mathbf{r} = [r(y_1), \dots, r(y_N)]^T \in \R^N$ (which is deterministic), we obtain the norm inequality $\|\mathbf{r}\|_2^2 \le N \beta_{\max}^2$.
\medskip

\textbf{Step 2: Upper Bound on the Expected Bias ($\zeta^2$).} Taking the expectation of the estimator over the mechanism stochasticity and the uniform index ($j$) selection yields:
\[
\mathbb{E}[\hat{x}] = \frac{1}{N} \sum_{j=1}^N d \mathbb{E}[z_j \mid j] u_j = \frac{d}{N} U^T \mathbb{E}[z] = \frac{d}{N} U^T (y + \mathbf{r}) = x + \frac{d}{N} U^T \mathbf{r}.
\]
The squared Euclidean vector bias is $\zeta^2 = \left\| \frac{d}{N} U^T \mathbf{r} \right\|_2^2 = \frac{d^2}{N^2} \mathbf{r}^T U U^T \mathbf{r}$. Since $U^T U = \frac{N}{d} I_d$, the non-zero eigenvalues of the symmetric matrix $U U^T$ are exactly $\frac{N}{d}$. Applying this operator norm constraint bounds the quadratic form:
\[
\zeta^2 \le \frac{d^2}{N^2} \left( \frac{N}{d} \right) \|\mathbf{r}\|_2^2 = \frac{d}{N} \|\mathbf{r}\|_2^2 \le \frac{d}{N} \left( N \beta_{\max}^2 \right) = d \beta_{\max}^2.
\]
Substituting $\beta_{\max} \le \frac{8B}{\epsilon}$ and $B^2 = \frac{K^2}{\rho d}$ produces the final bound on the bias:
\[
\zeta^2 \le d \left( \frac{64B^2}{\epsilon^2} \right) = \frac{64 d K^2}{\rho d \epsilon^2} = \frac{64 K^2}{\rho \epsilon^2}.
\]
\medskip

\textbf{Step 3: Upper Bound on the Loss.}
For $y_j \in [c_k, c_{k+1}]$, we bound the pointwise error $e(y_j) = \mathbb{E}[(y_j - z_j)^2]$ (expectation over the mechanism randomness) using a direct decomposition that avoids the scale modifier $W_k$. Writing $y_j - z_j = (y_j - v_j) + (v_j - z_j)$ and applying $(a + b)^2 \le 2a^2 + 2b^2$:
\begin{align}\label{eq:eyj}
e(y_j) \le 2\,\mathbb{E}[(y_j - v_j)^2] + 2\,\mathbb{E}[(v_j - z_j)^2].
\end{align}
The first term evaluates exactly as $\mathbb{E}[(y_j - v_j)^2] = (y_j - c_k)(c_{k+1} - y_j) \le \frac{\Delta_k^2}{4} \le \frac{\Delta_{\max}^2}{4}$, since it is the variance of stochastic quantization. The second term is bounded by $\mathbb{E}[(v_j - z_j)^2] = p_k(y_j) V_k + p_{k+1}(y_j) V_{k+1} \le V_{\max}$. Combining:
\[
e(y_j) \le \frac{\Delta_{\max}^2}{2} + 2V_{\max} \le \frac{2C^2 B^2}{(2^b-1)^2} + \frac{256B^2}{\epsilon^2}.
\]
Taking the average over all coordinates $(j)$ provides an upper bound on the loss $\cL^*_{\text{MA-SSTQ}}$:
\begin{equation} \label{eq:surrogate_bound}
\cL^*_{\text{MA-SSTQ}} \le B^2 \left( \frac{2C^2}{(2^b - 1)^2} + \frac{256}{\epsilon^2} \right).
\end{equation}
\medskip

\noindent\textbf{Step 4: Upper Bound on the Mean Squared Error ($\MSEMA$).} We expand the mean squared error using the bias-variance decomposition: $\MSEMA = \mathbb{E}[\|\hat{x}\|_2^2] - 2x^T \mathbb{E}[\hat{x}] + \|x\|_2^2$. Since $\|x\|_2 = 1$, the final term evaluates to $1$.
Evaluating the uncentered second moment using the trace property $\|u_j\|_2^2 = 1$ yields:
\begin{align*}
\mathbb{E}[\|\hat{x}\|_2^2] &= \frac{1}{N} \sum_{j=1}^N d^2 \mathbb{E}[z_j^2] = \frac{d^2}{N} \sum_{j=1}^N \left( \mathbb{E}[(z_j - y_j)^2] + y_j^2 + 2y_j r(y_j) \right) \\
&= d^2 \cL^*_{\text{MA-SSTQ}} + \frac{d^2}{N} \|y\|_2^2 + \frac{2d^2}{N} y^T \mathbf{r}.
\end{align*}
Next, we evaluate the cross-term utilizing $\mathbb{E}[\hat{x}] = x + \frac{d}{N} U^T \mathbf{r}$. Let $P = \frac{d}{N} U U^T$ denote the orthogonal projection matrix onto the column space of $U$. Since $x = \frac{d}{N} U^T y$, we apply the mapping $x^T U^T = \frac{d}{N} y^T U U^T = y^T P$. Thus:
\[
-2 x^T \mathbb{E}[\hat{x}] = -2 \|x\|_2^2 - \frac{2d}{N} x^T U^T \mathbf{r} = -2 - \frac{2d}{N} y^T P \mathbf{r}.
\]
Combining these components provides the complete MSE formulation:
\begin{align}\label{eq:MSE-MA}
\MSEMA = d^2 \cL^*_{\text{MA-SSTQ}} + \left( \frac{d^2}{N} \|y\|_2^2 - 1 \right) + \frac{2d}{N} y^T (d I_N - P) \mathbf{r}.
\end{align}
We independently bound each of the three residual terms:
\begin{enumerate}
    \item \textit{Baseline Variance:} The Kashin representation enforces the inequality $\|y\|_2^2 \le K^2$. Therefore:
    \[
    \frac{d^2}{N} \|y\|_2^2 - 1 \le \frac{d^2 K^2}{N} - 1 = \frac{d K^2}{\rho} - 1.
    \]
    \item \textit{Loss Contribution:} Multiplying the bound in Eq. \ref{eq:surrogate_bound} by $d^2$ and substituting $B^2 = \frac{K^2}{\rho d}$ yields:
    \[
    d^2 \cL^*_{\text{MA-SSTQ}} \le d^2 B^2 \left( \frac{2C^2}{(2^b-1)^2} + \frac{256}{\epsilon^2} \right) = \frac{d K^2}{\rho} \left( \frac{2C^2}{(2^b-1)^2} + \frac{256}{\epsilon^2} \right).
    \]
    \item \textit{Cross-Correlation Term:} Because $P$ is an orthogonal projection matrix, its eigenvalues are bounded in $\{0, 1\}$. Consequently, the matrix $(d I_N - P)$ has a spectral norm of exactly $d$. Applying the Cauchy-Schwarz inequality provides:
    \[
    \left| \frac{2d}{N} y^T (d I_N - P) \mathbf{r} \right| \le \frac{2d}{N} \|y\|_2 \|d I_N - P\|_{\text{op}} \|\mathbf{r}\|_2 \le \frac{2d}{N} (\sqrt{N}B) (d) (\sqrt{N} \beta_{\max}) = 2 d^2 B \beta_{\max}.
    \]
    Substituting $\beta_{\max} \le \frac{8B}{\epsilon}$ and subsequently evaluating $B^2 = \frac{K^2}{\rho d}$ limits this correlation to:
    \[
    2 d^2 B \left( \frac{8B}{\epsilon} \right) = \frac{16 d^2 B^2}{\epsilon} = \frac{16 d K^2}{\rho \epsilon}.
    \]
\end{enumerate}
Summing the individual bounds groups the expression under the shared geometric parameter $\frac{d K^2}{\rho}$:
\[
\MSEMA \le \frac{d K^2}{\rho} \left( \frac{2C^2}{(2^b - 1)^2} + \frac{256}{\epsilon^2} \right) + \left( \frac{d K^2}{\rho} - 1 \right) + \frac{d K^2}{\rho} \left( \frac{16}{\epsilon} \right).
\]
Factoring out the shared coefficient completes the proof:
\[
\MSEMA \le \frac{d K^2}{\rho} \left( 1 + \frac{2C^2}{(2^b - 1)^2} + \frac{16}{\epsilon} + \frac{256}{\epsilon^2} \right) - 1.
\]
\subsubsection{Proof of Theorem~\ref{thm:loss-MA}}
Let $y \in \R^N$ be the exact Kashin representation vector satisfying $x = \frac{d}{N} U^T y$, where $U^T U = \frac{N}{d} I_d$. The geometric variance bounds dictate $B^2 = \frac{K^2}{\rho d}$ and $\|y\|_2^2 \le K^2$.
\medskip

{\bf Step 1: Pointwise Statistics and Mechanism Bounds.}
For any continuous coordinate $y_j \in [c_k, c_{k+1}]$, stochastic quantization maps the input to an intermediate locally unbiased token $v_j \in \{c_k, c_{k+1}\}$. The metric-aware mechanism subsequently draws $T \sim \mathrm{TruncLaplace}(c_k, \frac{4B}{\epsilon}, [-B,B])$ and outputs the nearest codeword $z_j = \arg\min_{c_i} |T - c_i|$.

Define the expected structural drift $\beta_k = \mathbb{E}[z_j \mid v_j = c_k] - c_k$ and localized variance $V_k = \mathbb{E}[(z_j-c_k)^2 \mid v_j=c_k]$. Since $z_j$ is the nearest codeword to $T$ and $c_k$ is itself a codeword, the triangle inequality gives $|z_j - c_k| \le 2|T - c_k|$, yielding clean bounds as established in~\eqref{eq:Vmax-bound}:
\[ \beta_{\max} := \max_{k} |\beta_k| \le \frac{8B}{\epsilon}, \quad \text{and} \quad V_{\max} := \max_{k} V_k \le \frac{128B^2}{\epsilon^2}. \]
The pointwise mechanism bias evaluates as $r(y_j) = \mathbb{E}[z_j] - y_j = p_k(y_j)\beta_k + p_{k+1}(y_j)\beta_{k+1}$, where the expectation is over the mechanism randomness (recall that $y_j$ is deterministic). Because this is a strictly convex combination of adjacent drifts, $|r(y_j)| \le \beta_{\max}$ universally. Thus, the deterministic bias vector $\mathbf{r} \in \R^N$ satisfies $\|\mathbf{r}\|_2^2 \le N \beta_{\max}^2$.
\medskip

{\bf Step 2: Expected Bias ($\zeta^2$) Bound.}
Under identity decoding, uniform index sampling yields the expectation $\mathbb{E}[\hat{x} \mid x] = x + \frac{d}{N} U^T \mathbf{r}$. 
The squared Euclidean vector bias is $\zeta^2 = \frac{d^2}{N^2} \mathbf{r}^T U U^T \mathbf{r}$. Applying the tight frame operator norm $\|U U^T\|_{\mathrm{op}} = \frac{N}{d}$ gives:
\[ \zeta^2 \le \frac{d^2}{N^2} \left( \frac{N}{d} \right) \|\mathbf{r}\|_2^2 = \frac{d}{N} \|\mathbf{r}\|_2^2 \le d \beta_{\max}^2. \]
Substituting $\beta_{\max} \le \frac{8B}{\epsilon}$ and $B^2 = \frac{K^2}{\rho d}$ produces exactly $\zeta^2 \le \frac{64 K^2}{\rho \epsilon^2}$.
\medskip

{\bf Step 3: Upper Bounding the True Operational Loss ($\cL^*_{\mathrm{MA-SSTQ}}$).}
Similar to~\eqref{eq:eyj}, we bound the pointwise error using the direct decomposition $e(y) \le 2\,\mathbb{E}[(y - v)^2] + 2\,\mathbb{E}[(v - z)^2]$, where the first term is $\le \frac{\Delta_k^2}{4}$ (stochastic quantization variance) for $y\in[c_k, c_{k+1}]$ and the second term is $\le V_{\max}$. Thus $e(y) \le \frac{\Delta_k^2}{2} + 2V_{\max}$. This way we can avoid the scale modifier $W_k$ and bound the loss as:
\begin{align}
\cL^*_{\mathrm{MA-SSTQ}}(\Gamma) &= \sum_{k=1}^{M-1} \int_{c_k}^{c_{k+1}} e(y) f_Y(y) \, dy\nonumber\\   
&\le \sum_{k=1}^{M-1} \frac{\Delta_k^2}{2} \int_{c_k}^{c_{k+1}}  f_Y(y) \, dy + 2V_{\max}\,.\label{eq:L-ey}
\end{align}

\emph{Key step.}
Because $\Gamma^*$ globally minimizes the \emph{full} operational loss $\cL^*_{\mathrm{MA\text{-}SSTQ}}$, we have $\cL^*_{\mathrm{MA\text{-}SSTQ}}(\Gamma^*) \le \cL^*_{\mathrm{MA\text{-}SSTQ}}(\Gamma')$ for any feasible competitor~$\Gamma'$.
We then apply the pointwise decomposition \emph{at the competitor} to obtain an explicit upper bound.

\emph{Remark on feasibility.}
The zero-sum constraint $\sum_i c_i = 0$ is required only for the Flat-RR variant (to ensure the unbiasing identity in~\eqref{eq:expected_mse}).
The metric-aware variant uses identity decoding and does not require this constraint; therefore the proxy codebooks $\Gamma_{\mathrm{unif}}$ and $\Gamma_{\mathrm{PD}}$ constructed below are both feasible competitors.

\begin{itemize}
\item \emph{Uniform Grid Bound:} We evaluate the full loss at $\Gamma_{\mathrm{unif}}$ with constant gap $\Delta_k = \frac{2B}{M-1}$. Using~\eqref{eq:L-ey} and
integrating the spatial term over $f_Y(y)$: $\sum_{k=1}^{M-1} \frac{\Delta_k^2}{2} \int_{c_k}^{c_{k+1}} f_Y(y)\,dy = \frac{2B^2}{(M-1)^2}$.
Therefore $\cL^*_{\mathrm{MA\text{-}SSTQ}}(\Gamma^*) \le \cL^*_{\mathrm{MA\text{-}SSTQ}}(\Gamma_{\mathrm{unif}}) \le \frac{2B^2}{(M-1)^2} + 2V_{\max}$.
This establishes the case $C_0^2 \le 1$.

\item \emph{Discrete Panter--Dite Bound:} Let $S = \int_{-B}^B f_Y(y)^{1/3} \,dy$, so that $S^3 = 4B^2 C_f$. We construct $\Gamma_{\mathrm{PD}}$ by partitioning $[-B, B]$ so that every interval contains equal mass under the cube-root density: $\int_{c_k}^{c_{k+1}} f_Y(y)^{1/3} \,dy = \frac{S}{M-1}$.
(Since the metric-aware variant does not impose $\sum_i c_i = 0$, this proxy is a feasible competitor.)

Let $M_k = \sup_{[c_k, c_{k+1}]} f_Y(y)$ and $m_k = \inf_{[c_k, c_{k+1}]} f_Y(y)$. Bounding the integral of the cube root yields $m_k^{1/3} \Delta_k \le \frac{S}{M-1} \implies \Delta_k \le \frac{S}{(M-1) m_k^{1/3}}$.
The spatial integral over the interval is bounded by extracting the supremum:
\[ \int_{c_k}^{c_{k+1}} (y-c_k)(c_{k+1}-y) f_Y(y) \,dy \le M_k \frac{\Delta_k^3}{6} \le \frac{S^3}{6(M-1)^3} \left( \frac{M_k}{m_k} \right). \]
Summing over all $M-1$ intervals yields the spatial component:
\[ S_{\mathrm{PD}} := \sum_{k=1}^{M-1} \frac{S^3}{6(M-1)^3} \left( \frac{M_k}{m_k} \right) = \frac{B^2}{(M-1)^2} \left( \frac{2}{3} C_f R_M \right). \]
Applying the full-loss bound: $\cL^*_{\mathrm{MA\text{-}SSTQ}}(\Gamma^*) \le \cL^*_{\mathrm{MA\text{-}SSTQ}}(\Gamma_{\mathrm{PD}}) \le 2 S_{\mathrm{PD}} + 2V_{\max}$.
\end{itemize}
Taking the minimum of both competitors, we establish $C_0^2 \le \min\left(1, \; \frac{2}{3} C_f R_{M}\right)$.
In both cases, the mechanism variance contributes an additive $2V_{\max} \le \frac{256 B^2}{\epsilon^2}$ that is already absorbed into the $\frac{256}{\epsilon^2}$ term of the final MSE bound.
Thus $\cL^*_{\mathrm{MA-SSTQ}}(\Gamma^*) \le \frac{2B^2 C_0^2}{(2^b-1)^2} + \frac{256B^2}{\epsilon^2}$.
\medskip

{\bf Step 4: MSE Expansion.} Following the derivation behind~\eqref{eq:MSE-MA}, we have $\MSEMA = d^2 \cL^*_{\mathrm{MA-SSTQ}} + (\frac{d^2}{N}\|y\|_2^2 - 1) + \frac{2d}{N}y^T(dI_N - P)\mathbf{r}$, where $P = \frac{d}{N} U U^T$.

 Substituting $M = 2^b$ and replacing $d^2 B^2 = \frac{d K^2}{\rho}$ provides: $d^2 \cL_{\mathrm{MA-SSTQ}} \le \frac{d K^2}{\rho} \left( \frac{2C^2}{(2^b-1)^2} + \frac{256}{\epsilon^2} \right)$.
 Next by using $\|y\|_2^2 \le K^2$, the constraint ensures $\frac{d^2}{N}\|y\|_2^2 - 1 \le \frac{d K^2}{\rho} - 1$.
In addition, applying Cauchy-Schwarz with the operator norm $\|dI_N - P\|_{\mathrm{op}} = d$ yields:
   \[ \left|\frac{2d}{N}y^T(dI_N - P)\mathbf{r}\right| \le \frac{2d}{N} \|y\|_2 (d) \|\mathbf{r}\|_2 \le \frac{2d^2}{N} (K) \left(\sqrt{N} \frac{8B}{\epsilon}\right) = \frac{16 d^2 K B}{\sqrt{N} \epsilon} = \frac{16 d K^2}{\rho \epsilon}. \]

Summing these independent components groups the parameters natively by the shared geometric constant $\frac{d K^2}{\rho}$:
\[ \MSEMA \le \frac{d K^2}{\rho} \left( \frac{2C^2}{(2^b - 1)^2} + \frac{256}{\epsilon^2} \right) + \left( \frac{d K^2}{\rho} - 1 \right) + \frac{d K^2}{\rho} \left( \frac{16}{\epsilon} \right). \]
Factoring the common terms extracts the final formalized bound.

\if false
\fi
\section{Proofs for Federated Learning Convergence Analysis}\label{app:fed}

This appendix provides the  proofs for the non-convex Distributed Stochastic Gradient Descent (DSGD) convergence rate established in Theorem \ref{thm:convergence}, as well as its specific operational corollaries for our SSTQ framework.

\subsection{Proof of Theorem \ref{thm:convergence}}
The proof proceeds in four parts: decomposing the Mean Squared Error (MSE) of the global estimator, establishing the descent lemma with bias, absorbing the bias via Young's inequality, and telescoping to yield the final convergence rate.

Throughout, we let $\cF_t = \sigma\!\bigl(w_0,\; \hat{g}_{i,s} : i \in [n],\, s < t\bigr)$ denote the $\sigma$-algebra generated by the initial point and all transmitted gradients prior to step $t$. Since $w_t$ is a deterministic function of $(w_0, \hat{g}_{i,0}, \ldots, \hat{g}_{i,t-1})_{i=1}^{n}$, the iterate $w_t$ is $\cF_t$-measurable. We write $\E_t[\cdot] = \E[\cdot \mid \cF_t]$ for the conditional expectation given $\cF_t$.


\textbf{Step 1: Gradient and MSE Decomposition.} \\
Define the client-level error $X_{i,t} = \hat{g}_{i,t} - \nabla f_i(w_t)$. The global estimator error decomposes as
\[
    \hat{g}_t - \nabla F(w_t) = \frac{1}{n}\sum_{i=1}^n X_{i,t}.
\]
Applying the standard bias-variance identity $\E[\norm{Y}^2] = \E[\norm{Y - \E[Y]}^2] + \norm{\E[Y]}^2$ conditionally on $\cF_t$, with $Y = \frac{1}{n}\sum_{i=1}^n X_{i,t}$:
\begin{equation}\label{eq:bv_decomp}
    \E_t\!\left[\norm{\frac{1}{n}\sum_{i=1}^n X_{i,t}}^2\right]
    = \E_t\!\left[\norm{\frac{1}{n}\sum_{i=1}^n \bigl(X_{i,t} - \E_t[X_{i,t}]\bigr)}^2\right]
    + \norm{\E_t\!\left[\frac{1}{n}\sum_{i=1}^n X_{i,t}\right]}^2.
\end{equation}

\emph{Variance term.}
By assumption~(v), the centered variables $X_{i,t} - \E_t[X_{i,t}]$ are conditionally independent and mean-zero given $\cF_t$. Hence, for the variance of their average:
\begin{equation}\label{eq:var_sum}
    \E_t\!\left[\norm{\frac{1}{n}\sum_{i=1}^n \bigl(X_{i,t} - \E_t[X_{i,t}]\bigr)}^2\right]
    = \frac{1}{n^2}\sum_{i=1}^n \E_t\!\bigl[\norm{X_{i,t} - \E_t[X_{i,t}]}^2\bigr]
    \le \frac{1}{n^2}\sum_{i=1}^n \E_t\!\bigl[\norm{X_{i,t}}^2\bigr],
\end{equation}
where the inequality uses $\mathrm{Var}(Z) \le \E[\norm{Z}^2]$ for any random variable $Z$.

\emph{Bias term.}
By linearity, the squared bias equals
\[
    \norm{\E_t\!\left[\frac{1}{n}\sum_{i=1}^n X_{i,t}\right]}^2
    = \norm{\E_t[\hat{g}_t] - \nabla F(w_t)}^2
    = \norm{b_t}^2
    \le \zeta^2 \quad \text{a.s.},
\]
by assumption~(iv).

Combining \eqref{eq:bv_decomp}--\eqref{eq:var_sum}:
\begin{equation}\label{eq:mse_intermediate}
    \E_t\!\left[\norm{\hat{g}_t - \nabla F(w_t)}^2\right]
    \le \frac{1}{n^2}\sum_{i=1}^n \E_t\!\bigl[\norm{X_{i,t}}^2\bigr] + \zeta^2.
\end{equation}

\emph{Bounding the local MSE.}
We bound each $\E_t[\norm{X_{i,t}}^2]$ by splitting the compression and stochastic errors. Using $\norm{A+B}^2 \le 2\norm{A}^2 + 2\norm{B}^2$ (which holds for all vectors $A, B$ without any independence requirement):
\begin{align}
    \E_t\!\bigl[\norm{X_{i,t}}^2\bigr]
    &= \E_t\!\bigl[\norm{(\hat{g}_{i,t} - g_{i,t}) + (g_{i,t} - \nabla f_i(w_t))}^2\bigr] \nonumber \\
    &\le 2\,\E_t\!\bigl[\norm{\hat{g}_{i,t} - g_{i,t}}^2\bigr]
       + 2\,\E_t\!\bigl[\norm{g_{i,t} - \nabla f_i(w_t)}^2\bigr] \nonumber \\
    &\le 2\omega G^2 + 2\sigma^2
    = 2(\omega G^2 + \sigma^2), \label{eq:local_mse}
\end{align}
where the last line applies assumptions~(ii) and~(iii).

Substituting \eqref{eq:local_mse} into \eqref{eq:mse_intermediate}:
\begin{equation}\label{eq:global_mse}
    \E_t\!\left[\norm{\hat{g}_t - \nabla F(w_t)}^2\right]
    \le \frac{2(\omega G^2 + \sigma^2)}{n} + \zeta^2.
\end{equation}

\emph{Second-moment bound.}
We additionally bound the conditional second moment of the update direction, which will be needed in Step~2. By the same inequality $\norm{A+B}^2 \le 2\norm{A}^2 + 2\norm{B}^2$:
\begin{align}
    \E_t\!\bigl[\norm{\hat{g}_t}^2\bigr]
    &= \E_t\!\bigl[\norm{(\hat{g}_t - \nabla F(w_t)) + \nabla F(w_t)}^2\bigr] \nonumber \\
    &\le 2\,\E_t\!\bigl[\norm{\hat{g}_t - \nabla F(w_t)}^2\bigr] + 2\norm{\nabla F(w_t)}^2 \nonumber \\
    &\le 2\!\left(\frac{2(\omega G^2 + \sigma^2)}{n} + \zeta^2\right) + 2\norm{\nabla F(w_t)}^2, \label{eq:second_moment}
\end{align}
where the last step substitutes \eqref{eq:global_mse}. Note that $\nabla F(w_t)$ is $\cF_t$-measurable and thus passes through $\E_t[\cdot]$ as a constant.

\textbf{Step 2: The Descent Lemma with Bias.} \\
By $L$-smoothness (assumption~(i)) and the update rule $w_{t+1} = w_t - \eta\,\hat{g}_t$:
\begin{equation}\label{eq:descent_pointwise}
    F(w_{t+1})
    \le F(w_t)
    - \eta\,\langle \nabla F(w_t),\, \hat{g}_t \rangle
    + \frac{\eta^2 L}{2}\,\norm{\hat{g}_t}^2.
\end{equation}
Taking the conditional expectation $\E_t[\cdot]$ of both sides (noting $F(w_t)$ and $\nabla F(w_t)$ are $\cF_t$-measurable):
\begin{align}
    \E_t[F(w_{t+1})]
    &\le F(w_t) - \eta\,\bigl\langle \nabla F(w_t),\, \E_t[\hat{g}_t] \bigr\rangle + \frac{\eta^2 L}{2}\,\E_t\!\bigl[\norm{\hat{g}_t}^2\bigr] \nonumber \\
    &= F(w_t) - \eta\,\bigl\langle \nabla F(w_t),\, \nabla F(w_t) + b_t \bigr\rangle + \frac{\eta^2 L}{2}\,\E_t\!\bigl[\norm{\hat{g}_t}^2\bigr] \nonumber \\
    &= F(w_t) - \eta\,\norm{\nabla F(w_t)}^2 - \eta\,\langle \nabla F(w_t),\, b_t \rangle + \frac{\eta^2 L}{2}\,\E_t\!\bigl[\norm{\hat{g}_t}^2\bigr]. \label{eq:descent_conditional}
\end{align}

\textbf{Step 3: Absorbing the Bias via Young's Inequality.} \\
We bound the inner-product bias term using the Cauchy--Schwarz inequality followed by Young's inequality ($ab \le \tfrac{a^2}{2} + \tfrac{b^2}{2}$ for $a,b \ge 0$):
\begin{equation}\label{eq:young}
    -\langle \nabla F(w_t),\, b_t \rangle
    \le \norm{\nabla F(w_t)}\,\norm{b_t}
    \le \frac{1}{2}\norm{\nabla F(w_t)}^2 + \frac{1}{2}\norm{b_t}^2
    \le \frac{1}{2}\norm{\nabla F(w_t)}^2 + \frac{\zeta^2}{2},
\end{equation}
where the last inequality uses $\norm{b_t}^2 \le \zeta^2$ a.s.\ (assumption~(iv)).

Substituting \eqref{eq:young} and the second-moment bound \eqref{eq:second_moment} into \eqref{eq:descent_conditional}:
\begin{align}
    \E_t[F(w_{t+1})]
    &\le F(w_t)
    -\frac{\eta}{2}\norm{\nabla F(w_t)}^2 + \frac{\eta\,\zeta^2}{2}
    + \frac{\eta^2 L}{2}\!\left[2\norm{\nabla F(w_t)}^2 + 2\!\left(\frac{2(\omega G^2 + \sigma^2)}{n} + \zeta^2\right)\right] \nonumber \\
    &= F(w_t)
    - \underbrace{\left(\frac{\eta}{2} - \eta^2 L\right)}_{\text{gradient coefficient}}\norm{\nabla F(w_t)}^2
    + \frac{\eta\,\zeta^2}{2}
    + \eta^2 L\!\left(\frac{2(\omega G^2 + \sigma^2)}{n} + \zeta^2\right). \label{eq:descent_combined}
\end{align}

\textbf{Step 4: Telescoping and Convergence.} \\
\emph{Learning rate constraint.}
We require $\eta \le \frac{1}{4L}$, which ensures $\eta^2 L \le \frac{\eta}{4}$ and hence:
\begin{equation}\label{eq:lr_constraint}
    \frac{\eta}{2} - \eta^2 L \ge \frac{\eta}{2} - \frac{\eta}{4} = \frac{\eta}{4} > 0.
\end{equation}
Substituting into \eqref{eq:descent_combined}:
\begin{equation}\label{eq:descent_clean}
    \E_t[F(w_{t+1})]
    \le F(w_t) - \frac{\eta}{4}\,\norm{\nabla F(w_t)}^2
    + \frac{\eta\,\zeta^2}{2}
    + \eta^2 L\!\left(\frac{2(\omega G^2 + \sigma^2)}{n} + \zeta^2\right).
\end{equation}

\emph{Telescoping.}
Rearranging \eqref{eq:descent_clean}:
\[
    \frac{\eta}{4}\,\norm{\nabla F(w_t)}^2
    \le F(w_t) - \E_t[F(w_{t+1})]
    + \frac{\eta\,\zeta^2}{2}
    + \eta^2 L\!\left(\frac{2(\omega G^2 + \sigma^2)}{n} + \zeta^2\right).
\]
Taking the full (unconditional) expectation on both sides and applying the tower property $\E[\E_t[\cdot]] = \E[\cdot]$:
\[
    \frac{\eta}{4}\,\E\!\left[\norm{\nabla F(w_t)}^2\right]
    \le \E[F(w_t)] - \E[F(w_{t+1})]
    + \frac{\eta\,\zeta^2}{2}
    + \eta^2 L\!\left(\frac{2(\omega G^2 + \sigma^2)}{n} + \zeta^2\right).
\]
Summing over $t = 0, 1, \ldots, T-1$ (the right-hand side telescopes):
\[
    \frac{\eta}{4}\sum_{t=0}^{T-1}\E\!\left[\norm{\nabla F(w_t)}^2\right]
    \le F(w_0) - \E[F(w_T)]
    + \frac{T\eta\,\zeta^2}{2}
    + T\eta^2 L\!\left(\frac{2(\omega G^2 + \sigma^2)}{n} + \zeta^2\right).
\]
Using $\E[F(w_T)] \ge F^*$ (assumption~(vi)) and dividing both sides by $\frac{\eta T}{4}$:
\begin{equation}\label{eq:avg_grad}
    \frac{1}{T}\sum_{t=0}^{T-1}\E\!\left[\norm{\nabla F(w_t)}^2\right]
    \le \frac{4\Delta_0}{\eta T}
    + 2\zeta^2
    + 4\eta L\!\left(\frac{2(\omega G^2 + \sigma^2)}{n} + \zeta^2\right),
\end{equation}
where $\Delta_0 \coloneqq F(w_0) - F^*$.

\emph{Optimal learning rate.}
Define the noise-plus-bias parameter $V \coloneqq \frac{2(\omega G^2 + \sigma^2)}{n} + \zeta^2$. The bound \eqref{eq:avg_grad} has the form $\frac{4\Delta_0}{\eta T} + 2\zeta^2 + 4\eta L V$. We set $\eta = \min\!\bigl\{\frac{1}{4L},\; \sqrt{\frac{\Delta_0}{LVT}}\bigr\}$ and consider both cases.

\emph{Case 1: $T \ge T_0 \coloneqq 16 L \Delta_0 / V$.} In this regime, $\eta^* = \sqrt{\Delta_0 / (LVT)} \le \frac{1}{4L}$, and substituting into \eqref{eq:avg_grad}:
\begin{align}
    \frac{1}{T}\sum_{t=0}^{T-1}\E\!\left[\norm{\nabla F(w_t)}^2\right]
    &\le \frac{4\Delta_0}{\sqrt{\Delta_0/(LVT)} \cdot T} + 2\zeta^2 + 4\sqrt{\frac{\Delta_0}{LVT}} \cdot LV \nonumber \\
    &= \frac{8\sqrt{\Delta_0 L V}}{\sqrt{T}} + 2\zeta^2. \label{eq:rate_with_V}
\end{align}

\emph{Case 2: $T < T_0$.} Here $\eta = \frac{1}{4L}$, and substituting into \eqref{eq:avg_grad}:
\begin{align}
    \frac{1}{T}\sum_{t=0}^{T-1}\E\!\left[\norm{\nabla F(w_t)}^2\right]
    &\le \frac{16L\Delta_0}{T} + 2\zeta^2 + V. \label{eq:burn_in}
\end{align}
For $T < T_0$, we have $\frac{8\sqrt{\Delta_0 L V}}{\sqrt{T}} > \frac{8\sqrt{\Delta_0 L V}}{\sqrt{T_0}} = \frac{8\sqrt{\Delta_0 L V} \cdot \sqrt{V}}{4\sqrt{L\Delta_0}} = 2V$, so the $V$ term in the burn-in bound is dominated by the square-root term in Case 1 evaluated at $T$. Therefore, the unified bound valid for all $T \ge 1$ is:
\begin{equation}
    \min_{0 \le t \le T-1}\E\!\left[\norm{\nabla F(w_t)}^2\right]
    \le \frac{16L\Delta_0}{T} + \frac{8\sqrt{\Delta_0 L V}}{\sqrt{T}} + 2\zeta^2.
\end{equation}

\emph{Expanding $V$ and simplifying.}
Recall $V = \frac{2(\omega G^2 + \sigma^2)}{n} + \zeta^2$. Thus $\sqrt{V} \le \sqrt{\frac{2(\omega G^2+\sigma^2)}{n}} + \sqrt{\zeta^2}$, where we have used $\sqrt{a+b} \le \sqrt{a} + \sqrt{b}$ for $a,b \ge 0$. Substituting:
\begin{align}
    \frac{8\sqrt{\Delta_0 L V}}{\sqrt{T}}
    &\le \frac{8\sqrt{\Delta_0 L}}{\sqrt{T}}\left(\sqrt{\frac{2(\omega G^2+\sigma^2)}{n}} + \zeta\right) \nonumber \\
    &= \frac{8\sqrt{2\,\Delta_0 L}\cdot\sqrt{\omega G^2+\sigma^2}}{\sqrt{nT}} + \frac{8\sqrt{\Delta_0 L}\cdot\zeta}{\sqrt{T}}. \label{eq:expanded}
\end{align}

Combining the unified bound with \eqref{eq:expanded}, and using $\min_{t} \le \frac{1}{T}\sum_t$:
\begin{equation}\label{eq:full_rate}
    \min_{0 \le t \le T-1}\E\!\left[\norm{\nabla F(w_t)}^2\right]
    \le \frac{16L\Delta_0}{T} + \frac{8\sqrt{2\,\Delta_0 L}\cdot\sqrt{\omega G^2+\sigma^2}}{\sqrt{nT}}
    + \frac{8\sqrt{\Delta_0 L}\cdot\zeta}{\sqrt{T}}
    + 2\zeta^2.
\end{equation}

\emph{Asymptotic simplification.}
For $T \ge T_0 = 16L\Delta_0 / V$, the $\frac{16L\Delta_0}{T}$ burn-in term is dominated by $\frac{8\sqrt{\Delta_0 L V}}{\sqrt{T}}$. For the $\frac{8\sqrt{\Delta_0 L}\cdot\zeta}{\sqrt{T}}$ term, we apply the AM--GM inequality $\frac{\zeta}{\sqrt{T}} \le \frac{\zeta^2}{2} + \frac{1}{2T}$, giving $\frac{8\sqrt{\Delta_0 L}\cdot\zeta}{\sqrt{T}} \le 4\sqrt{\Delta_0 L}\,\zeta^2 + \frac{4\sqrt{\Delta_0 L}}{T}$. Since $T \ge 16L\Delta_0/V \ge 1$, the second term is $\cO(1/T)$ and absorbed into the burn-in. The first term is absorbed into $\cO(\zeta^2)$ with the problem-dependent constant $4\sqrt{\Delta_0 L}$. Therefore, the asymptotic rate (for $T \ge 16L\Delta_0/V$) is:
\begin{equation}
    \min_{0 \le t \le T-1}\E\!\left[\norm{\nabla F(w_t)}^2\right]
    \le \cO\!\left(\frac{\sqrt{\omega G^2+\sigma^2}}{\sqrt{nT}} + \zeta^2\right),
\end{equation}
where the $\cO(\cdot)$ hides dependence on the problem constants $\Delta_0$ and $L$. This completes the proof.

\subsection{Proof of Corollary \ref{cor:convergence_flat}}
Under Flat Randomized Response, the SSTQ mechanism employs the unbiasing scalar $\frac{1}{p-q}$ on the server. As established in Theorem \ref{thm:sstq_ldp_unbiased}, this renders the global estimator perfectly unbiased in expectation: $b_t = \mathbf{0}$, which mathematically enforces $\zeta = 0$. 

By Theorem \ref{thm:miracle} (or more precisely, by Theorem \ref{thm:one_third_bound} when $M \ge (4(e^\epsilon-1))^{1/3}+1$), the total variance multiplier under Flat RR is bounded by $\omega = \cO(d \cdot 2^b / (\epsilon \wedge \epsilon^2))$. Substituting $\zeta = 0$ and this variance multiplier directly into the general convergence bound of Theorem \ref{thm:convergence} gives:
\begin{equation}
    \min_{0 \le t \le T-1} \E[\norm{\nabla F(w_t)}^2] \le \cO\left( \frac{1}{T} + \frac{\sqrt{ d \cdot 2^b (\epsilon \wedge \epsilon^2)^{-1} G^2 + \sigma^2 }}{\sqrt{nT}} \right).
\end{equation}
Because the bias is exactly zero, the gradient norm converges to zero as $T \to \infty$.

\subsection{Proof of Corollary \ref{cor:convergence_metric}}
The Metric-Aware Laplace framework trades absolute unbiasedness to address the $\cO(2^b)$ Flat RR variance inflation. 
By Theorem~\ref{thm:SSTQ-MA}, in this case we have
$\omega = \cO(d(1 + \epsilon^{-1} + \epsilon^{-2}))$ and
    $\zeta^2 = \cO( G^2\epsilon^{-2})$. Substituting these specific bounds in the non-asymptotic bound of Theorem \ref{thm:convergence} gives:
\begin{equation}
    \min_{0 \le t \le T-1} \E[\norm{\nabla F(w_t)}^2] \le \cO\left( \frac{1}{T} + \frac{\sqrt{ G^2 d (1 + \epsilon^{-1} + \epsilon^{-2})   + \sigma^2 }}{\sqrt{nT}} + G^2 \epsilon^{-2} \right).
\end{equation}
This confirms that while the optimization trajectory converges to a non-zero stationary neighborhood  dictated by $\zeta^2$, the  size of this neighborhood is explicitly independent of the massive parameter scale $d$. This algebraic decoupling  breaks the dimensionality curse, ensuring that the gradient trajectory remains highly robust and computationally viable even for extremely large neural architectures.

\section{Experimental Details}\label{app:exp-details}

To ensure bounded sensitivity under LDP, we clip client gradients to have $\ell_2$-norm at most~$C$.
In the MSE-scaling experiment (Figure~\ref{fig:mse-vs-dim}), we set $C=0.2$;
in the distributed SGD experiments (Figures~\ref{fig:sgd-b4} and~\ref{fig:sgd-b4-vs-b8}), we set $C=1.0$.
To efficiently implement the high-dimensional transformations required by SQKR-style constructions, we use a randomized partial Hadamard transform, which enables computation over an overcomplete frame of dimension $N = 2^{\lceil\log_2(2.5\,d)\rceil}$ in $O(N\log N)$ time without explicit dense matrix storage.

For the Kashin representation, we apply the exact-reconstruction variant of the iterative Lyubarskii--Vershynin algorithm with parameter $K=2$: we run $30$ truncated iterations (each clipping coefficients to the level $B = K\,C/\sqrt{N}$) followed by one untruncated final iteration that absorbs the remaining residual, ensuring that the identity $\frac{d}{N}U^\top y = x$ holds exactly. The number of truncated iterations is chosen as follows: at each iteration the residual contracts by a factor $\eta < 1$ determined by the frame geometry, so after $r$ iterations the residual norm satisfies $\|r^{(t)}\|_2 \le \eta^r \|x\|_2$. For the randomized partial Hadamard frame with redundancy $N/d \approx 2.5$, we have $\eta \approx 0.4$, giving a residual of order $\eta^{30} \approx 10^{-12}$ after $30$ iterations---well below machine precision. Consequently, the untruncated final coefficients have magnitude at most $\eta^{30} \cdot B$, which is negligible, and the effective Kashin level remains $K = 2$ to numerical precision.

On the server side, after averaging the $W$ privatized messages, we apply gradient norm clipping with a threshold of $10.0$ to the aggregate.
Because clipping is a nonlinear operation, it can introduce a small bias even when the underlying per-client estimator is unbiased (see Remark~7.1 in the main text).
In practice the cap activates infrequently: the client-side clipping at norm~$C$ combined with the frame scaling $B = K/\sqrt{N}$ keeps the expected aggregate norm well below~$10.0$, so the induced bias is negligible.

\begin{remark}[Effect of Learning Rate on High-Variance Mechanisms]\label{rem:lr}
A natural question is whether reducing the learning rate can compensate for the high variance introduced by certain privatization mechanisms.
The convergence bound in Theorem~\ref{thm:convergence} provides a precise answer.
With step size~$\eta$, the averaged gradient norm is bounded by $\frac{4\Delta_0}{\eta T} + 2\zeta^2 + 4\eta L V$, where $V = \frac{2(\omega G^2 + \sigma^2)}{n} + \zeta^2$.
Reducing~$\eta$ shrinks the variance term $4\eta L V$ but inflates the optimization term $\frac{4\Delta_0}{\eta T}$; the optimal trade-off $\eta^* = \sqrt{\Delta_0/(LVT)}$ yields a rate of $\mathcal{O}(\sqrt{V/T})$.
Consequently, a mechanism with $\omega$~times larger variance multiplier requires $\Theta(\omega)$~times more communication rounds to reach the same accuracy.
For vqSGD, whose variance scales as $\Theta(d^3/\epsilon^2)$, this translates to an impractical number of rounds in high dimensions, regardless of step-size tuning.
\end{remark}

For the optimized codebook variant of SSTQ (Flat-RR), we solve the loss minimization problem
\[
  \mathcal{L}_{\mathrm{SSTQ}}(\Gamma)
  \;=\;
  \sum_{k=1}^{M-1}\frac{(c_{k+1}-c_k)^3}{12B}
  \;+\;
  \frac{1}{e^\varepsilon - 1}\sum_{i=1}^{M} c_i^2
\]
subject to $c_1=-B,\; c_M=B,\; c_1 \le c_2 \le \cdots \le c_M,\; \textstyle\sum_i c_i=0$, using Sequential Least-Squares Programming (SLSQP).
The codebook is precomputed once per configuration $(d,b,\varepsilon,C)$ using the worst-case bound $B = K\,C/\sqrt{N}$, which is constant across all clipped gradients.

All experiments were implemented in Python using NumPy and SciPy, and were run on a single CPU.


\section{Additional Experiments}\label{app:add-exp}

We conduct additional experiments designed to directly test the variance bounds, communication efficiency, and downstream model utility across varying dimensions~$d$ and codebook sizes.
In these experiments, we compare seven methods:
Clean (no privacy), PrivUnit, vqSGD, SQKR,
SSTQ (Flat-RR with optimized codebook),
SSTQ (Flat-RR with uniform codebook),
and SSTQ (Metric-Aware with uniform codebook).

\paragraph{Empirical Variance Scaling (Theorem~A.1 vs.\ Theorem~3.2).}
In this experiment, we isolate the client-to-server gradient transmission step and focus on computing the raw Mean Squared Error (MSE) between the true vector and the reconstructed estimator.
We generate a single gradient vector as $g \sim \mathcal{N}(0,I_d)$ and normalize it to have norm $C = 0.2$.
We pass the same vector~$g$ into each algorithm, which quantizes the vector, adds LDP noise, and reconstructs an estimated vector~$\hat{g}$.
We sweep the dimension~$d$ over $\{100,\, 500,\, 1000,\, 5000,\, 10000\}$ and plot the MSE $\mathbb{E}[\|g - \hat{g}\|_2^2]$ versus~$d$ on a log-log scale, by averaging over 1000 independent trials per dimension.
All methods use per-round privacy parameter $\varepsilon = 3$ and bit budget $b = 8$ (codebook size $M = 2^8 = 256$).
The 95\% confidence intervals are sufficiently narrow that they are not visually discernible on the log-log scale, reflecting the reliability of the Monte Carlo estimates at this sample size.

\begin{figure}[ht]
  \centering
  \includegraphics[width=0.75\textwidth]{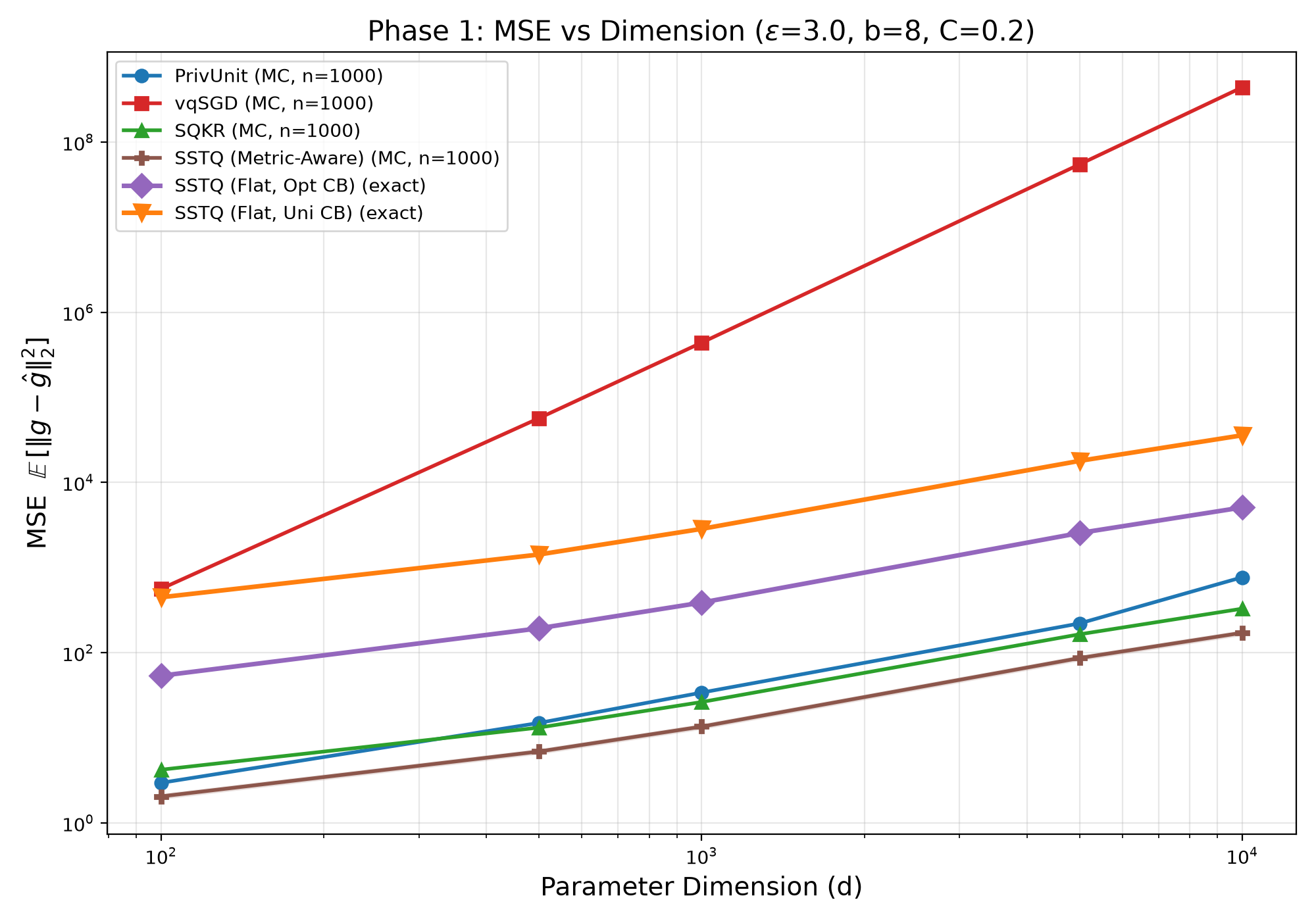}
  \caption{MSE versus dimension for all methods (per-round $\varepsilon = 3$, $b = 8$, $M = 256$, $C = 0.2$).
  Flat-RR variants are computed exactly (no Monte Carlo); other methods use 1000 MC trials with 95\% confidence bands.}
  \label{fig:mse-vs-dim}
\end{figure}

The results, reported in Figure~\ref{fig:mse-vs-dim}, confirm the theoretical predictions.
vqSGD exhibits devastating cubic $O(d^3/\varepsilon^2)$ scaling, with MSE growing from $\approx 561$ at $d=100$ to $\approx 4.4 \times 10^8$ at $d=10{,}000$---an increase consistent with the cubic ratio.
In contrast, PrivUnit, SQKR, and SSTQ (Metric-Aware) all exhibit linear scaling in~$d$, consistent with the $O(d/\varepsilon^2)$ bound of Theorem~3.2.
Among these, SSTQ (Metric-Aware) achieves the lowest MSE across all dimensions ($\approx 2.0$ at $d=100$ and $\approx 171$ at $d=10{,}000$), followed by SQKR ($\approx 4.2$ to $\approx 328$) and PrivUnit ($\approx 2.9$ to $\approx 767$).

The two Flat-RR variants exhibit substantially higher MSE, now revealed by the exact second-moment computation.
SSTQ (Flat-RR, uniform codebook) has MSE ranging from $\approx 446$ at $d=100$ to $\approx 35{,}746$ at $d=10{,}000$, reflecting the $O(M^2)$ variance inflation from the RR debiasing scalar at large $M = 256$.
SSTQ (Flat-RR, optimized codebook) reduces this to $\approx 53$ at $d=100$ and $\approx 5{,}072$ at $d=10{,}000$---a $7$--$8\times$ improvement over the uniform variant, confirming that codebook optimization mitigates but does not eliminate the RR variance.
Both Flat-RR variants still incur MSE far above the Metric-Aware variant (by $27$--$30\times$), demonstrating the decisive advantage of the Metric-Aware Discrete Laplace mechanism in eliminating the exponential $2^b$ dependence.

\paragraph{Comparison of LDP mechanisms for linear regression across varying dimensions.}
We study the convergence of distributed stochastic gradient descent (SGD) on a synthetic least-squares problem with objective
\[
  L(\theta) = \frac{1}{2n}\|A\theta - b\|_2^2\,,
\]
where $\theta^* \in \mathbb{R}^d$ denotes the ground-truth parameter.
The design matrix $A \in \mathbb{R}^{n \times d}$ has entries drawn i.i.d.\ from $\mathcal{N}(0, 1/d)$, ensuring stable conditioning across dimensions.
We generate $\theta^* \sim \mathcal{N}(0, I_d)$ and set $b = A\theta^*$, yielding a realizable model.

We simulate a federated setting with $n = 4{,}000$ samples distributed uniformly across $W = 200$ workers.
Each worker computes local gradients on batches of size~$20$ over $T = 100$ communication rounds.
Gradients are clipped to $\|g\|_2 \le C = 1$, then privatized and transmitted to the server, which performs aggregation and updates using a constant step size $\eta = 0.5$.
Performance is measured by the squared estimation error $\|\theta_t - \theta^*\|_2^2$.

All private methods operate under pure $\varepsilon$-LDP with per-round $\varepsilon = 3$.
We first evaluate convergence at bit budget $b = 4$ across dimensions $d \in \{100, 200, 500\}$ (Figure~\ref{fig:sgd-b4}).
For methods requiring frame representations (SSTQ and SQKR), we use an overcomplete construction with redundancy $N = 2.5\,d$.

\begin{figure}[ht]
  \centering
  \includegraphics[width=\textwidth]{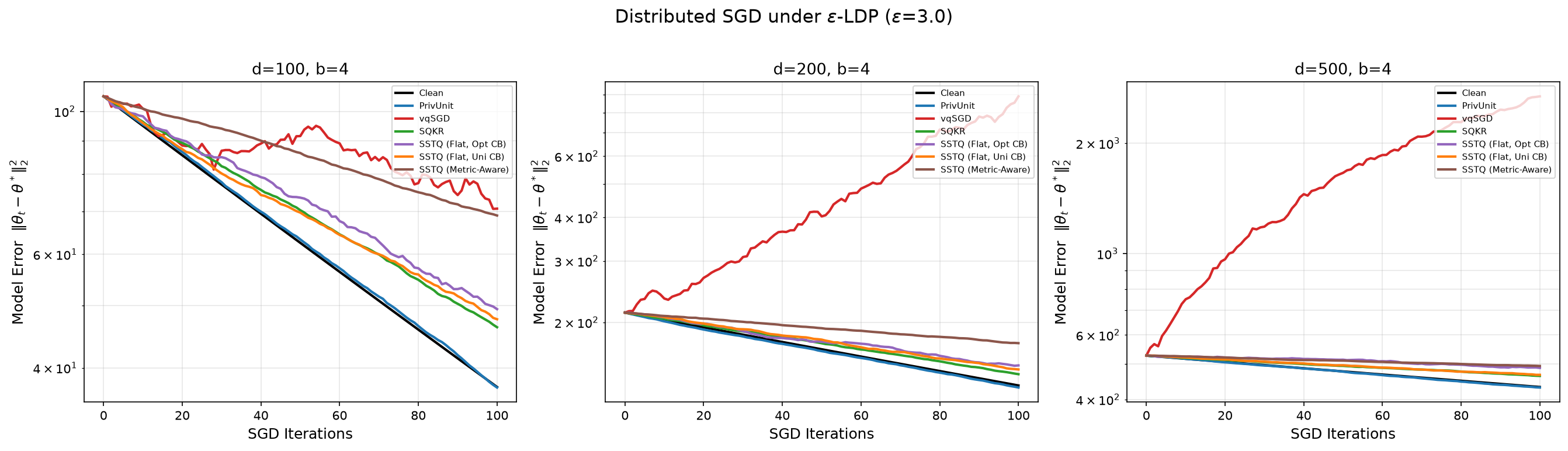}
  \caption{Distributed SGD convergence under $\varepsilon$-LDP ($\varepsilon=3$, $b=4$).
  Comparison of the two variants of our proposed framework---SSTQ (Flat-RR) and SSTQ (Metric-Aware)---against other vector quantization algorithms across $d \in \{100, 200, 500\}$.}
  \label{fig:sgd-b4}
\end{figure}

Figure~\ref{fig:sgd-b4} illustrates distributed least-squares SGD under pure $\varepsilon$-LDP ($\varepsilon = 3$).
Consistent with the observations in Section~\ref{sec:nn-experiments}, vqSGD exhibits unstable optimization behavior that becomes more pronounced as the dimension increases: the model error \emph{increases} monotonically, reaching $\approx 2685$ at $d=500$ compared to $\approx 432.6$ for the clean baseline.
While reducing the learning rate can stabilize individual updates, by Remark~\ref{rem:lr} this trades variance reduction against slower optimization: the $\Theta(d^3/\epsilon^2)$ variance multiplier of vqSGD would require a proportionally larger number of communication rounds to converge, rendering it impractical in high dimensions.
In contrast, the 1-sparse structure of SSTQ mitigates this dependence on the ambient dimension, resulting in stable and efficient convergence.

At $b=4$ (moderate codebook size $M = 16$), the Flat-RR variants converge closely to the clean baseline, with SSTQ (Flat-RR, uniform codebook) achieving final error $\approx 132.8$ at $d=200$ versus $\approx 131.7$ for the clean update---a modest overhead.
SSTQ (Flat-RR, optimized codebook) achieves final error $\approx 140.5$ at $d=200$.
The Metric-Aware variant converges to a slightly higher error floor of $\approx 162$ at $d=200$, consistent with the non-zero bias $\zeta^2$ introduced by the identity decoder (Corollary~\ref{cor:convergence_metric}).
PrivUnit tracks the clean baseline most closely ($\approx 130.0$), as it incurs no quantization overhead.

\paragraph{Comparison of SSTQ variants across codebook sizes.}
To isolate the effect of codebook size on the Flat-RR variance, we fix $d = 200$ and compare convergence at $b=4$ ($M=16$) versus $b=8$ ($M=256$) in Figure~\ref{fig:sgd-b4-vs-b8}.

\begin{figure}[ht]
  \centering
  \begin{minipage}{0.48\textwidth}
    \centering
    \includegraphics[width=\textwidth]{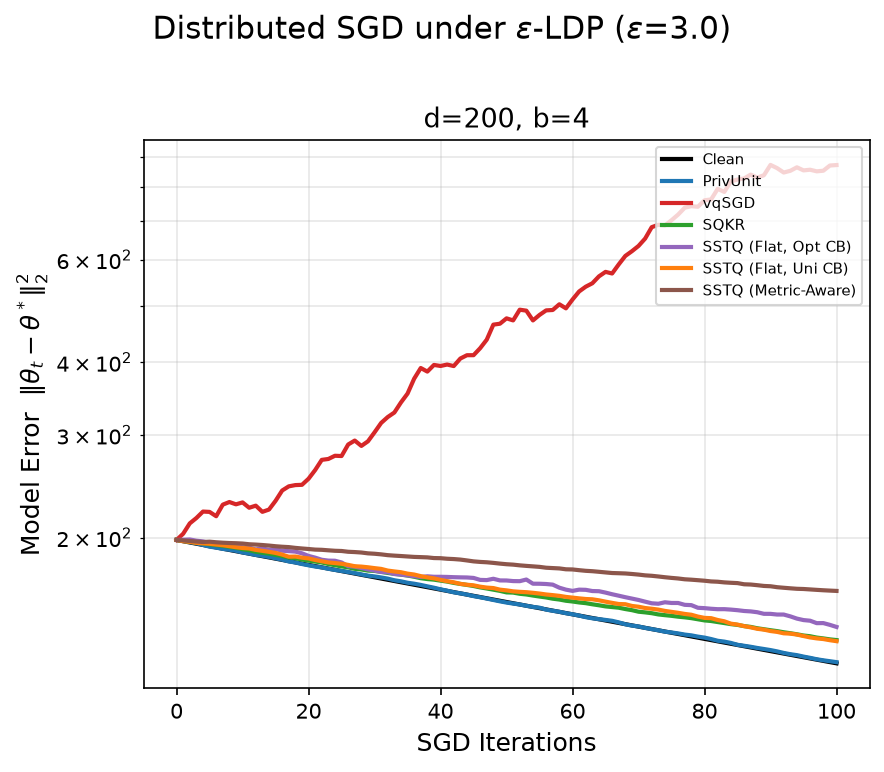}
  \end{minipage}
  \hfill
  \begin{minipage}{0.48\textwidth}
    \centering
    \includegraphics[width=\textwidth]{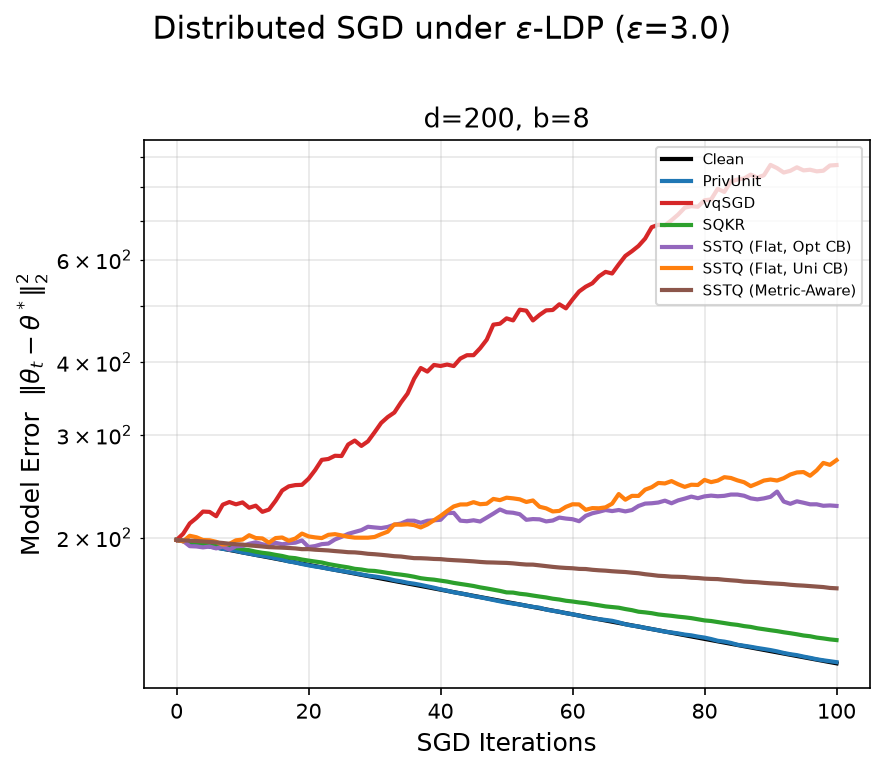}
  \end{minipage}
  \caption{Flat-RR explosion versus Metric-Aware stability at $d=200$, $\varepsilon = 3$.
  \textbf{Left:} Low bit-width ($b=4$, $M=16$). \textbf{Right:} Dense bit-width ($b=8$, $M=256$).}
  \label{fig:sgd-b4-vs-b8}
\end{figure}

As shown in Figure~\ref{fig:sgd-b4-vs-b8}, increasing the codebook size from $M=16$ to $M=256$ causes a dramatic degradation in both Flat-RR variants.
SSTQ (Flat-RR, uniform codebook) sees its final error grow from $\approx 132.8$ to $\approx 271.6$ ($2.0\times$ increase), while SSTQ (Flat-RR, optimized codebook) grows from $\approx 140.5$ to $\approx 226.7$ ($1.6\times$ increase).
This is consistent with the predicted $O(M^2)$ scaling of the unbiasing factor: the debiasing scalar $1/(p-q) = (e^\varepsilon + M - 1)/(e^\varepsilon - 1)$ grows from $\approx 1.8$ at $M=16$ to $\approx 14.4$ at $M=256$, inflating the variance quadratically.

In stark contrast, SSTQ (Metric-Aware) remains essentially unchanged, with final error growing only marginally from $\approx 162.0$ to $\approx 163.7$.
This stability arises because the Metric-Aware Discrete Laplace mechanism does not require the uniform debiasing correction---its noise distribution concentrates naturally around the true quantization level regardless of $M$.
This behavior highlights the limitations of Flat-RR in dense quantization regimes and motivates the use of Metric-Aware estimators for maintaining stable performance at larger codebook sizes.

\end{document}